\documentclass{article}

\usepackage{microtype}
\usepackage{graphicx}
\usepackage{booktabs}
\PassOptionsToPackage{table}{xcolor}

\usepackage{hyperref}

\usepackage[preprint]{icml2026}

\usepackage{amsmath,amssymb,amsfonts}
\usepackage{nicefrac}
\usepackage{tikz}
\usetikzlibrary{arrows.meta, positioning, shapes.geometric, calc, fit, decorations.pathmorphing}
\usepackage{pgfplots}
\pgfplotsset{compat=1.17}

\usepackage{tabularx}
\usepackage{float}
\usepackage{pifont} % for \ding{55} if you keep it
\usepackage{multirow}
\usepackage{algorithm}
\usepackage{algorithmic}
\newcommand{\RETURN}[1]{\STATE \textbf{return} #1}

\usepackage{amsthm}
\theoremstyle{plain}
\newtheorem{theorem}{Theorem}[section]
\newtheorem{proposition}[theorem]{Proposition}

\theoremstyle{definition}

\newcommand{\R}{\mathbb{R}}
\newcommand{\half}{\tfrac{1}{2}}
\newcommand{\T}{\top}

\newcommand{\Ham}{H}
\newcommand{\Jmat}{J}
\newcommand{\Rmat}{R}
\newcommand{\Mmat}{M}
\newcommand{\Dmat}{D}
\newcommand{\grad}{\nabla}

\newcommand{\dt}{\Delta t}
\newcommand{\bbar}{\bar{\beta}}
\newcommand{\Vbar}{\bar{V}}
\newcommand{\Vtilde}{\tilde{V}}

\newcommand{\cmark}{\ding{51}}
\newcommand{\xmark}{\ding{55}}

\newcommand{\best}[1]{{\boldmath$#1$}}
\usepackage{pifont}

\def\isarxiv{1}

\icmltitlerunning{PHAST: Port-Hamiltonian Architecture for Structured Temporal Dynamics}

\begin{document}

% Force single-column layout for arXiv (no page limit)
\onecolumn

\icmltitle{PHAST: Port-Hamiltonian Architecture for Structured Temporal Dynamics Forecasting}

% Author information (visible in preprint)
\begin{icmlauthorlist}
\icmlauthor{Shubham Bhardwaj}{utexas,mihawk}
\icmlauthor{Chandrajit Bajaj}{utexas}
\end{icmlauthorlist}

\icmlaffiliation{utexas}{Department of Computer Science, Oden Institute for Computational Engineering and Sciences, The University of Texas at Austin, Austin, TX, USA}
\icmlaffiliation{mihawk}{Mihawk.ai}

\icmlcorrespondingauthor{Shubham Bhardwaj}{shubham.bhardwaj@utexas.edu}

% Keywords for PDF metadata (not shown in document)
\icmlkeywords{Port-Hamiltonian systems, physics-informed machine learning, dissipative dynamics, structure-preserving neural networks}

\vskip 0.3in

% Print affiliations and copyright notice
\printAffiliationsAndNotice{}

% ---------------------------------------------------------------------------
% Abstract
% ---------------------------------------------------------------------------
\begin{abstract}
Real physical systems are dissipative---a pendulum slows, a circuit loses charge to heat---and
forecasting their dynamics from partial observations is a central challenge in scientific machine learning.
We address the \emph{position-only} (q-only) problem: given only generalized positions~$q_t$ at
discrete times (momenta~$p_t$ latent), learn a structured model that (a)~produces stable long-horizon
forecasts and (b)~recovers physically meaningful parameters when sufficient structure is provided.
The port-Hamiltonian framework makes the conservative--dissipative split explicit via
$\dot{x}=(J-R)\nabla H(x)$, guaranteeing $dH/dt\le 0$ when $R\succeq 0$.
We introduce \textbf{PHAST} (Port-Hamiltonian Architecture for Structured Temporal dynamics),
which decomposes the Hamiltonian into potential~$V(q)$, mass~$M(q)$, and damping~$D(q)$ across
three knowledge regimes (KNOWN, PARTIAL, UNKNOWN), uses efficient low-rank PSD/SPD
parameterizations, and advances dynamics with Strang splitting.
Across thirteen q-only benchmarks spanning mechanical, electrical, molecular, thermal, gravitational, and ecological systems, PHAST achieves the best long-horizon forecasting
among competitive baselines and enables physically meaningful parameter recovery when
the regime provides sufficient anchors.
We show that identification is fundamentally ill-posed without such anchors (gauge freedom),
motivating a two-axis evaluation that separates forecasting stability from identifiability.
\end{abstract}

% ---------------------------------------------------------------------------
% 1. Introduction
% ---------------------------------------------------------------------------
\section{Introduction}
\label{sec:intro}

Forecasting the future states of dynamical systems from observational data is a
central problem in scientific machine learning.
Physical systems span a wide spectrum of complexity:
\emph{simple conservative} systems (e.g., an undamped pendulum or a frictionless
spring) preserve total energy and trace closed orbits in phase space;
\emph{dissipative} systems (e.g., a pendulum slowing due to air drag, an RLC
circuit losing charge to resistive heating) lose energy over time and converge
toward attractors;
and \emph{chaotic} systems (e.g., a double pendulum, turbulent fluid flows)
exhibit sensitive dependence on initial conditions, making long-horizon
prediction fundamentally difficult.
A practical forecasting framework must handle all three regimes---and, crucially,
most real-world systems are \emph{dissipative}: a swinging pendulum eventually
stops, a robot arm loses energy through joint friction, and electrical circuits
dissipate energy as heat.

\paragraph{Problem setting.}
We formalize the forecasting task in the \emph{position-only} (q-only) setting:
we observe only generalized positions~$q_t$ at discrete times; momenta~$p_t$ are
latent and never measured.
The goal is to learn a structured dynamical model from q-only data that
(a)~produces stable long-horizon open-loop forecasts and
(b)~when partial physics is available, recovers physically interpretable parameters
(potential, mass, damping).

\paragraph{Port-Hamiltonian framework.}
The \textbf{port-Hamiltonian} framework~\citep{van2014port} provides a principled
decomposition for dissipative systems:
\begin{equation}
\dot{x} = (J - R)\nabla H(x) + Gu,
\label{eq:ph_dynamics}
\end{equation}
where $x = (q,p)$ denotes generalized coordinates and momenta.
The term $Gu$ represents external actuation: $u$ is a generalized force
(e.g., torque at a joint), and the \emph{port matrix} $G$ selects which
degrees of freedom are actuated.
The conjugate \emph{port output} $y^{\text{port}} = G^\top \nabla H(x)$ is
velocity-like for mechanical systems, and the product
$u^\top y^{\text{port}}$ is the instantaneous power supplied to the system---the
rate at which external work flows in through the port.
Our forecasting benchmarks have no external actuation, so we set
$u \equiv 0$ throughout Sections~1--4; the forced form and its port
structure become central in the Energy--Casimir control study
(Sec.~\ref{sec:casimir_forced_mode}).
In port-Hamiltonian form, dynamics decompose into energy geometry
(via the mass tensor $\Mmat(q)$), environmental structure
(via the potential $V(q)$), and dissipation ($\Dmat(q)$), enabling principled
combinations of known physics and learned components (Sec.~\ref{sec:intro:regimes}).
For mechanical systems, the Hamiltonian takes the form
$H(q,p)=V(q)+T(q,p)$ with kinetic energy $T(q,p)=\tfrac12 p^\top M(q)^{-1}p$,
where $\Mmat(q)\succ 0$ is the generalized mass (inertia) tensor---it induces
a Riemannian metric on configuration space under which free trajectories are
geodesics---and we define the generalized
velocity $v := \nabla_p H(q,p)=\Mmat(q)^{-1}p$.
In our experiments we use a separable constant-mass approximation
$\Mmat(q)\approx\Mmat$ for efficiency (Sec.~\ref{sec:intro:regimes}),
but the architecture supports the general configuration-dependent case
(Appendix~\ref{app:ablations:nonseparable_mass}).
The operator $J=-J^\top$ encodes conservative energy exchange via the canonical
symplectic structure.
Dissipation is introduced through a structured operator
\[
R(q) =
\begin{pmatrix}
0 & 0 \\
0 & D(q)
\end{pmatrix},
\qquad D(q)=D(q)^\top\succeq 0,
\]
so that damping acts only on the momentum dynamics. This corresponds to standard
mechanical dissipation, where friction affects generalized velocities but not
positions.

With the canonical symplectic structure and the block-diagonal dissipation,
the dynamics decompose explicitly as
\begin{align}
\dot q &= \nabla_p H(q,p), \label{eq:qdot}\\
\dot p &= -\nabla_q H(q,p) - D(q)\,\nabla_p H(q,p). \label{eq:pdot}
\end{align}

This structure yields an explicit energy balance:
\begin{equation}
\dot H(x) = \nabla H(x)^\top (J - R)\nabla H(x)
          = - \nabla H(x)^\top R \nabla H(x) \le 0,
\end{equation}
since $\nabla H^\top J \nabla H = 0$ by skew-symmetry. As a result, the system is passive by construction, guaranteeing non-increasing energy regardless of the specific parameterization of $H(x)$.

\paragraph{Oscillatory dynamics.}
In the conservative case ($\Dmat=0$), the dynamics reduce to $\dot x=\Jmat\nabla\Ham(x)$; linearizing around an \emph{elliptic} equilibrium (e.g., a local minimum of the potential with $\Mmat\succ 0$) yields purely imaginary eigenvalues (phase-space rotations), corresponding to oscillatory exchange between kinetic and potential energy.
Adding $\Dmat\succeq 0$ shifts eigenvalues into the closed left half-plane (e.g., $-\sigma \pm i\omega$), yielding damped oscillations; Appendix~\ref{app:math:spectrum} provides a concise linear-algebra statement and proof for the linearized model $\dot x=(\Jmat-\Rmat)Qx$ \citep{mehl2018linear}.
This work develops neural architectures that respect this structure (Appendix~\ref{app:math:strang}).

\paragraph{Motivation.}
We are motivated by three converging observations:
\textbf{(i)} Physics-informed networks like HNNs~\citep{greydanus2019hamiltonian} and LNNs~\citep{cranmer2020lagrangian} show that structure improves generalization, but they assume purely conservative systems ($\Rmat = 0$).
\textbf{(ii)} State-space models like S5~\citep{smith2023simplified}, LinOSS~\citep{linoss2024}, and D-LinOSS~\citep{dlinoss2025} achieve efficiency but typically do not provide \emph{explicit} physical guarantees---e.g., they do not expose an explicit port-Hamiltonian decomposition or a passivity certificate.
\textbf{(iii)} Structure-preserving networks like Volume-Preserving Transformers~\citep{brantner2024volume} enforce geometric constraints such as volume preservation, i.e., their one-step map $\phi_{\dt}: x_t \mapsto x_{t+1}$ has unit Jacobian determinant $\det\!\left(\partial \phi_{\dt}/\partial x_t\right) = 1$, which is naturally aligned with conservative, divergence-free dynamics.
However, they do not provide an explicit dissipation mechanism or a certificate of passivity for \emph{lossy} systems, and strictly volume-preserving maps cannot represent generic dissipative flows that contract phase-space volume.

We propose \textbf{PHAST} (Port-Hamiltonian with Strang Splitting), which properly models \emph{both} the conservative ($\Jmat$) and dissipative ($\Rmat$) components.
The key insight is that the port-Hamiltonian structure provides passivity \emph{regardless} of how the components are parameterized: whether the potential $V(q)$, mass $\Mmat(q)$, and damping $\Dmat(q)$ are structured functions or neural networks, the continuous-time dynamics satisfy $d\Ham/dt \leq 0$ by construction.
PHAST addresses the q-only problem via a three-stage pipeline: a causal velocity
observer reconstructs $\hat{\dot{q}}$ from position history, a canonicalizer maps
to phase state $(q,\hat{p})$, and the port-Hamiltonian core integrates forward
with Strang splitting.
Given a short burn-in window $q_{0:K-1}$, PHAST rolls out open-loop;
Section~\ref{sec:methods} details the components.

% ---------------------------------------------------------------------------
% Knowledge Regimes (promoted from Methods for arXiv)
% ---------------------------------------------------------------------------
\paragraph{Knowledge regimes and the Hamiltonian template.}
\label{sec:intro:regimes}
A central challenge in learning dynamics from data is that trajectories alone do
not uniquely identify physical parameters: multiple combinations of mass,
potential, and damping can generate identical motions (gauge freedom).
PHAST addresses this by allowing problem-specific structure to be imposed on the
Hamiltonian components, breaking gauge freedom when partial physics knowledge is
available.

We formalize this flexibility through three knowledge regimes:
\textsc{Known}, \textsc{Partial}, and \textsc{Unknown}, which differ only in
which components of the Hamiltonian are specified versus learned.
Anchoring either the potential $V(q)$ or the mass matrix $M(q)$ is sufficient to
render the remaining parameters identifiable, while leaving both free leads to
an underdetermined problem.

Table~\ref{tab:structured_hamiltonians} illustrates representative systems
supported by PHAST, spanning classical mechanics, robotics, molecular dynamics,
electrical circuits, thermodynamics, and ecological modeling.
Our experiments span all six domains in Table~\ref{tab:structured_hamiltonians}:
the port-Hamiltonian formulation is not restricted to mechanics---any system whose dynamics admit a storage function and a conservative--dissipative
split can be cast in this form.

\begin{table}[t]
\centering
\scriptsize
\setlength{\tabcolsep}{3pt}
\renewcommand{\arraystretch}{1.2}
\begin{tabularx}{\textwidth}{@{} l l l l X X X c @{}}
\toprule
\textbf{Regime}
& \textbf{Domain}
& \textbf{System}
& \textbf{Forecast $q$}
& \textbf{$V(q)$: landscape}
& \textbf{$M(q)$: geometry}
& \textbf{$D(q)$: dissipation}
& \textbf{Ident.?} \\
\midrule

\multirow{5}{*}{\textsc{Known}}
& Mechanics
& Pendulum
& Angle $\theta(t)$
& $-g\cos\theta$
& Scalar $m$
& Air drag $d(\theta)$
& Yes \\

& Mechanics
& Spring--mass
& Displacement $x(t)$
& $\tfrac12 kx^2$
& Scalar $m$
& Viscous $b\dot x$
& Yes \\

& Molecular
& Lennard--Jones
& Positions $r_i(t)$
& $4\epsilon[(\sigma/r)^{12}{-}(\sigma/r)^6]$
& Diagonal $mI$
& Solvent friction $\gamma I$
& Yes \\

& Electrical
& RLC circuit
& Charge $q_C(t)$
& $q^2/2C$
& Inductance $L$
& Resistance $R$
& Yes \\

& Thermal
& Heat exchange
& Temperature $T(t)$
& $\tfrac12 c\,T^2$
& Time constant $\tau$
& Heat loss $\kappa$
& Yes \\

\midrule

\multirow{4}{*}{\textsc{Partial}}
& Mechanics
& Pendulum (unk.\ $m$)
& Angle $\theta(t)$
& $-g\cos\theta$ (given)
& $mI$ (learn $m$)
& Learned (bounded)
& Yes \\

& Robotics
& Robot arm
& Joint angles $\theta_i(t)$
& Gravity regressor
& CAD inertia template
& Joint friction (learned)
& Yes \\

& Astrophysics
& N-body gravity
& Positions $r_i(t)$
& $-\!\sum_{i<j}\! Gm_im_j/r_{ij}$
& $m_i I$ (learn $m_i$)
& Drag (learned)
& Yes$^\dagger$ \\

& General
& Any partially known
& System-dep.\ $q(t)$
& Template $+\Delta V$
& $m_0 I{+}UU^\top$
& Learned (bounded)
& Partial \\

\midrule

\multirow{2}{*}{\textsc{Unknown}}
& Any
& Black-box dynamics
& System-dep.\ $q(t)$
& Neural network
& Neural SPD net
& Neural PSD net
& No \\

& Ecology
& Predator--prey
& Densities $n_i(t)$
& Neural network
& Neural SPD net
& Neural PSD net
& No \\

\bottomrule
\end{tabularx}
\caption{
\textbf{Hamiltonian decomposition across knowledge regimes.}
Each system is described through the three components of the port-Hamiltonian
template:
the \emph{energy landscape}~$V(q)$ (environmental forces and equilibria),
the \emph{geometry}~$M(q)$ (Riemannian metric governing the cost of motion),
and the \emph{dissipation}~$D(q)$ (energy loss structure).
The regime determines which components are specified versus learned.
Anchoring either $V(q)$ or $M(q)$ is sufficient to break gauge freedom and
render the remaining parameters identifiable;
leaving both unconstrained (\textsc{Unknown}) yields an underdetermined inverse
problem even when forecasting is accurate.
Our experiments cover all six domains; the port-Hamiltonian
formulation applies to any system admitting a storage function and a
conservative--dissipative split.
$^\dagger$Identifiable up to an overall mass scale; fixing the gravitational
constant~$G$ resolves the remaining one-parameter gauge.
}
\label{tab:structured_hamiltonians}
\end{table}

\paragraph{Interpretive structure of the Hamiltonian.}
Appendix~\ref{app:table1_mappings} provides the complete port-Hamiltonian mapping
(state, potential, mass, damping, and identifiability analysis) for every system
in Table~\ref{tab:structured_hamiltonians}, including derivations and schematic
diagrams.
At a high level, the table illustrates that diverse systems---mechanical,
electrical, thermal, and ecological---admit a common decomposition into three
components with distinct physical roles:
\begin{enumerate}
\item \textbf{Potential $V(q)$}: encodes the \emph{environmental energy landscape}---external
  forces, equilibrium structure, and constraints (e.g., gravity for a pendulum,
  capacitive energy for an RLC circuit, entropy potential for a thermal system).
\item \textbf{Mass $M(q)$}: defines the \emph{intrinsic geometry of motion}.
  The kinetic energy $T=\tfrac12\dot q^\top M(q)\dot q$ induces a Riemannian
  metric on configuration space~$\mathcal{Q}$; free (unforced, undamped)
  trajectories are geodesics under this metric.
  $M(q)$ determines how effort translates to motion and encodes inertial
  coupling between degrees of freedom.
\item \textbf{Damping $D(q)$}: encodes \emph{dissipative structure}---how and
  where energy is lost, governing contraction toward attractors.
\end{enumerate}
The three knowledge regimes differ only in which of these components are
specified versus learned; the compositional template---and its passivity
guarantee---remains unchanged (Fig.~\ref{fig:phast_regime_spectrum}).

\paragraph{Concrete instantiations.}
To make the mapping from physical system to $(V,\Mmat,\Dmat)$ template explicit,
we walk through three representative entries from
Table~\ref{tab:structured_hamiltonians}
(Appendix~\ref{app:table1_mappings} provides the full derivation for every row):
\begin{itemize}
\item \textbf{Simple pendulum} (KNOWN).
  State $q=\theta$ (angle), momentum $p=m\ell^2\dot\theta$ (angular momentum).
  The potential $V(\theta)=-mg\ell\cos\theta$ encodes gravity;
  mass $\Mmat=m\ell^2$ is a known scalar inertia;
  the only unknown is position-dependent air drag $\Dmat(\theta)$, which PHAST
  learns.
  Because both $V$ and $\Mmat$ are given, the inverse problem is fully
  identifiable.

\item \textbf{Cart-pole} (PARTIAL).
  State $q=(\text{cart position},\,\theta)$, a two-DOF system.
  Gravity gives $V(q)=mg\ell\cos\theta$ (template provided).
  The mass tensor $\Mmat(q)$ is a $2{\times}2$ configuration-dependent
  inertia coupling cart and pole (given from the Lagrangian or CAD model).
  Dissipation $\Dmat(q)$ has separate friction channels for the cart rail and
  pivot joint---learned with bounded strength.

\item \textbf{RLC circuit} (KNOWN, non-mechanical).
  State $q=q_C$ (charge on capacitor), momentum $p=Li$ (flux linkage, where
  $i$ is current and $L$ is inductance).
  Capacitive energy gives $V(q)=q^2/2C$; inductance plays the role of mass
  ($\Mmat=L$); resistance dissipates energy as heat ($\Dmat=R$).
  The same $(V,\Mmat,\Dmat)$ template applies, illustrating that the
  port-Hamiltonian decomposition extends beyond mechanics.
\end{itemize}

\begin{figure}[t]
  \centering
  \begin{tikzpicture}[
      box/.style={rectangle, draw=black, thick, rounded corners=3pt, minimum width=3.6cm, minimum height=2.1cm, align=left, inner sep=5pt, font=\small},
      title/.style={font=\small\bfseries},
      shared/.style={
        rectangle, draw=black!60, thick, rounded corners=3pt,
        fill=black!5, minimum width=12.8cm, minimum height=1.2cm,
        align=left, inner sep=6pt, font=\scriptsize
      },
      arrow/.style={-{Stealth[length=2mm]}, thick, black!50},
      downarrow/.style={-{Stealth[length=2mm]}, semithick, black!50},
  ]
    % Regime boxes with color coding: gray=given, blue=learned, red=dissipation
    \node[box, fill=blue!8] (unknown) at (0,0) {
      \textbf{UNKNOWN}\\[0.2em]
      $V(q)$: \textcolor{blue!70!black}{learned}\\
      $M(q)$: \textcolor{blue!70!black}{learned} (SPD)\\
      $D(q)$: \textcolor{red!70!black}{learned} (PSD)
    };
    \node[box, fill=white] (partial) at (4.8,0) {
      \textbf{PARTIAL}\\[0.2em]
      $V(q){=}\bar V{+}\textcolor{blue!70!black}{\varepsilon\tilde V}$\\
      $M(q)$: \textcolor{black!50}{given}\\
      $D(q)$: \textcolor{red!70!black}{learned} (bounded)
    };
    \node[box, fill=black!5] (known) at (9.6,0) {
      \textbf{KNOWN}\\[0.2em]
      $V(q)$: \textcolor{black!50}{given}\\
      $M(q)$: \textcolor{black!50}{given}\\
      $D(q)$: \textcolor{red!70!black}{learned} (PSD)
    };

    % Arrows between regimes
    \draw[arrow] (unknown.east) -- (partial.west);
    \draw[arrow] (partial.east) -- (known.west);

    % Down arrows to shared template
    \draw[downarrow] (unknown.south) -- ++(0,-0.5);
    \draw[downarrow] (partial.south) -- ++(0,-0.5);
    \draw[downarrow] (known.south) -- ++(0,-0.5);

    % Shared template ribbon
    \node[shared, below=1.0cm of partial] (template) {
      \textbf{Shared template (all regimes):}\\
      \textbullet\; Port-Hamiltonian state $x=(q,p)$ with conservative--dissipative split\\
      \textbullet\; Dissipation acts on momentum only ($D(q)\succeq 0$); mass is SPD ($M(q)\succ 0$)\\
      \textbullet\; Structure-preserving time stepping (Strang-split symplectic core)
    };

    % Label
    \node[font=\scriptsize, text=black!60, above=0.15cm of partial]
      {$\longleftarrow$\, less prior knowledge \hspace{2em} more prior knowledge\, $\longrightarrow$};
  \end{tikzpicture}
  \caption{\textbf{PHAST unifies three knowledge regimes under one port-Hamiltonian template.}
  Regimes differ only in what is \textcolor{black!50}{given} vs.\ \textcolor{blue!70!black}{learned} for $(V,M(q))$;
  \textcolor{red!70!black}{dissipation $D(q)$} is always learned.
  All regimes share the same continuous-time structure $\dot x=(J{-}R)\nabla H$ and a structure-preserving
  discrete-time transition map $\Phi_{\Delta t}$ (Strang splitting with a symplectic core), ensuring definiteness
  constraints by construction.}
  \label{fig:phast_regime_spectrum}
\end{figure}
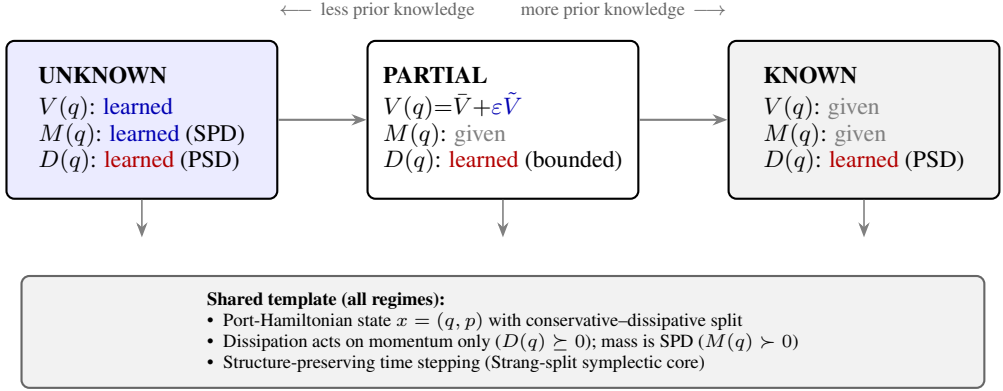

% fig_gallery_overview.tex — Benchmark suite overview (native LaTeX)
% \input{fig_gallery_overview} from experiments.tex (inside \ifdefined\isarxiv)

% Paper-consistent colors (from presentation palette)
\providecolor{phastblue}{RGB}{41,98,163}
\providecolor{dissipred}{RGB}{180,50,50}
\providecolor{floworange}{RGB}{230,120,30}

\begin{figure}[t]
\centering

% ───────────── Row 1: System schematics ─────────────
\begin{tikzpicture}[scale=0.82, every node/.style={font=\small}, >=Stealth]

% ======== (A) Windy Pendulum ========
\begin{scope}[shift={(0,0)}]
    % Panel label (above box)
    \node[font=\small\bfseries, anchor=south] at (-0.1,1.65) {(A) Windy Pendulum};

    % Wind environment box
    \draw[thick, dashed, rounded corners=6pt, fill=cyan!4]
        (-2.5,-2.5) rectangle (2.2,1.5);

    % Wind arrows (wavy)
    \foreach \y in {-1.4, -0.4, 0.6} {
        \draw[->, thick, cyan!55, decorate,
              decoration={snake, amplitude=1pt, segment length=4pt}]
            (-2.3,\y) -- (-1.5,\y);
    }
    \node[font=\tiny, cyan!70!black, rotate=90] at (-2.7,-0.4) {wind};

    % Pivot mount
    \fill[black!15] (-0.4,0.18) rectangle (0.4,0);
    \draw[thick] (-0.4,0) -- (0.4,0);
    \fill (0,0) circle (1.5pt);

    % Vertical reference
    \draw[dashed, black!35] (0,0) -- (0,-2.2);

    % Rod + mass
    \draw[very thick, phastblue!70!black, line cap=round] (0,0) -- (230:2.2);
    \fill[phastblue!60] (230:2.2) circle (5pt);
    \node[font=\small, below left=1pt] at (230:2.2) {$m$};

    % Length
    \coordinate (PendMid) at ($(0,0)!0.5!(230:2.2)$);
    \draw[|-, thin, black!60] ($(0,0)+(0.22,-0.08)$) -- ($(PendMid)+(0.27,0.05)$);
    \node[right, black!60, font=\scriptsize] at ($(PendMid)+(0.22,0)$) {$\ell$};

    % Angle arc
    \draw[->, thick] (270:0.7) arc (270:230:0.7);
    \node at (248:1.0) {$\theta$};

    % Damping indicator (snake arc)
    \draw[<-, thick, dissipred, decorate,
          decoration={snake, amplitude=0.5mm, segment length=2.5mm}]
        ($(0,0)+(250:1.5)$) arc (250:288:1.5);
    \node[dissipred, font=\scriptsize] at (0.85,-1.65) {$d(\theta)$};

    % Gravity
    \draw[->, thick, black!50] (1.6,0.6) -- (1.6,-0.2);
    \node[right, black!50, font=\scriptsize] at (1.6,0.2) {$g$};

    % Damping equation
    \node[draw, rounded corners=2pt, fill=white, inner sep=3pt, font=\small]
        at (-0.1,-3.1) {$d(\theta)=d_0+\Delta d\,|\!\sin\theta|$};
\end{scope}

% ======== (B) Windy Double Pendulum ========
\begin{scope}[shift={(7.5,0)}]
    % Panel label (above box)
    \node[font=\small\bfseries, anchor=south] at (-0.1,1.65) {(B) Windy Double Pendulum};

    % Wind environment box
    \draw[thick, dashed, rounded corners=6pt, fill=cyan!4]
        (-2.5,-3.4) rectangle (2.2,1.5);

    % Wind arrows (wavy)
    \foreach \y in {-2.0, -0.8, 0.4} {
        \draw[->, thick, cyan!55, decorate,
              decoration={snake, amplitude=1pt, segment length=4pt}]
            (-2.3,\y) -- (-1.5,\y);
    }
    \node[font=\tiny, cyan!70!black, rotate=90] at (-2.7,-0.8) {wind};

    % Pivot mount
    \fill[black!15] (-0.4,0.18) rectangle (0.4,0);
    \draw[thick] (-0.4,0) -- (0.4,0);
    \fill (0,0) circle (1.5pt);

    % Vertical reference
    \draw[dashed, black!35] (0,0) -- (0,-3.1);

    % Link 1
    \coordinate (E1) at (240:1.7);
    \draw[very thick, phastblue!70!black, line cap=round] (0,0) -- (E1);
    \fill (E1) circle (1.5pt);

    % CoM 1
    \coordinate (C1) at ($(0,0)!0.55!(E1)$);
    \fill[phastblue!60] (C1) circle (5pt);
    \node[font=\small, left=3pt] at (C1) {$m_1$};

    % Angle 1
    \draw[->, thick] (270:0.6) arc (270:240:0.6);
    \node at (253:0.85) {$\theta_1$};

    % Damping at joint 1 (snake arc)
    \draw[<-, thick, dissipred, decorate,
          decoration={snake, amplitude=0.4mm, segment length=2mm}]
        ($(0,0)+(255:1.15)$) arc (255:282:1.15);
    \node[dissipred, font=\scriptsize] at (0.6,-1.2) {$d_1(\theta_1)$};

    % Link 2
    \coordinate (E2) at ($(E1)+(255:1.5)$);
    \draw[very thick, phastblue!50!black, line cap=round] (E1) -- (E2);

    % CoM 2
    \coordinate (C2) at ($(E1)!0.55!(E2)$);
    \fill[phastblue!45] (C2) circle (5pt);
    \node[font=\small, left=3pt] at (C2) {$m_2$};

    % Angle 2 (relative, at joint E1)
    \draw[->, thick] ($(E1)+(270:0.5)$) arc (270:255:0.5);
    \node at ($(E1)+(260:0.75)$) {$\theta_2$};

    % Damping at joint 2 (snake arc)
    \draw[<-, thick, dissipred, decorate,
          decoration={snake, amplitude=0.4mm, segment length=2mm}]
        ($(E1)+(262:0.95)$) arc (262:285:0.95);
    \node[dissipred, font=\scriptsize] at ($(E1)+(0.55,-1.0)$) {$d_2(\theta_2)$};

    % Gravity
    \draw[->, thick, black!50] (1.6,0.6) -- (1.6,-0.2);
    \node[right, black!50, font=\scriptsize] at (1.6,0.2) {$g$};

    % M(q) annotation
    \node[draw, rounded corners=2pt, fill=phastblue!8, inner sep=2pt,
          font=\scriptsize, phastblue!70!black]
        at (1.5,-2.2) {$M(q)$};

    % Damping equation
    \node[draw, rounded corners=2pt, fill=white, inner sep=3pt, font=\small]
        at (-0.1,-4.0) {$d_i(\theta_i)=b_i+\Delta b_i\,|\!\sin\theta_i|$};
\end{scope}

% ======== (C) Windy Cart-Pole ========
\begin{scope}[shift={(15,0)}]
    % Panel label (above box)
    \node[font=\small\bfseries, anchor=south] at (0.1,1.65) {(C) Windy Cart-Pole};

    % Wind environment box
    \draw[thick, dashed, rounded corners=6pt, fill=cyan!4]
        (-2.8,-2.5) rectangle (3.0,1.5);

    % Wind arrows (wavy)
    \foreach \y in {-0.7, 0.3, 1.1} {
        \draw[->, thick, cyan!55, decorate,
              decoration={snake, amplitude=1pt, segment length=4pt}]
            (-2.6,\y) -- (-1.8,\y);
    }
    \node[font=\tiny, cyan!70!black, rotate=90] at (-3.0,0.2) {wind};

    % Ground/rail
    \fill[black!10] (-2.2,-1.9) rectangle (2.5,-1.7);
    \draw[thick, black!50] (-2.2,-1.7) -- (2.5,-1.7);

    % Cart
    \draw[very thick, fill=black!15, rounded corners=2pt]
        (-0.7,-1.67) rectangle (0.7,-1.0);
    \fill[black] (-0.4,-1.7) circle (0.08);
    \fill[black] (0.4,-1.7) circle (0.08);
    \node[font=\small] at (0,-1.33) {$M$};

    % Cart position arrow
    \draw[<->, thick, black!60] (-1.8,-2.1) -- (0,-2.1);
    \node[below, black!60, font=\scriptsize] at (-0.9,-2.1) {$x$};

    % Pole pivot
    \fill[black] (0,-1.0) circle (0.05);

    % Vertical reference
    \draw[dashed, black!35] (0,-1.0) -- (0,1.3);

    % Pole
    \coordinate (PoleEnd) at ($(0,-1.0)+(70:2.2)$);
    \draw[very thick, phastblue!70!black, line cap=round] (0,-1.0) -- (PoleEnd);

    % Pole mass
    \fill[phastblue!60] (PoleEnd) circle (5pt);
    \node[above right, font=\small] at ($(PoleEnd)+(0.1,0)$) {$m$};

    % Length
    \coordinate (PoleMid) at ($(0,-1.0)!0.5!(PoleEnd)$);
    \draw[|-, thin, black!60] ($(0,-1.0)+(0.2,-0.1)$) -- ($(PoleMid)+(0.25,0.05)$);
    \node[right, black!60, font=\scriptsize] at ($(PoleMid)+(0.2,0)$) {$\ell$};

    % Angle arc
    \draw[->, thick] (0,0.15) arc (90:70:1.15);
    \node[font=\small] at (0.4,0.5) {$\theta$};

    % === DUAL DAMPING ===

    % Damping 1: Angular wind damping on pole (dissipred)
    \draw[<-, thick, dissipred, decorate,
          decoration={snake, amplitude=0.5mm, segment length=2.5mm}]
        (0.55,-0.7) arc (10:65:0.55);
    \node[dissipred, font=\scriptsize] at (1.35,-0.1) {$d(\theta)$};

    % Damping 2: Cart friction (floworange)
    \draw[<-, thick, floworange, decorate,
          decoration={snake, amplitude=0.5mm, segment length=3mm}]
        (-0.7,-1.33) -- (-1.5,-1.33);
    \node[floworange, font=\scriptsize, above] at (-1.45,-1.18) {$d_{\mathrm{c}}$};

    % Gravity
    \draw[->, thick, black!50] (2.3,0.8) -- (2.3,0.0);
    \node[right, black!50, font=\scriptsize] at (2.3,0.4) {$g$};

    % M(q) annotation
    \node[draw, rounded corners=2pt, fill=phastblue!8, inner sep=2pt,
          font=\scriptsize, phastblue!70!black]
        at (2.2,-0.8) {$M(q)$};

    % Damping equation (two-color)
    \node[draw, rounded corners=2pt, fill=white, inner sep=3pt, font=\small]
        at (0.1,-3.1) {$D(q){=}\mathrm{diag}(%
            \textcolor{floworange}{d_{\mathrm{c}}},\;%
            \textcolor{dissipred}{d_0{+}\Delta d\,|\!\sin\theta|})$};
\end{scope}

\end{tikzpicture}

\medskip

% ───────────── Row 2: Quantitative comparison ─────────────
\textbf{(D) Quantitative comparison}\par\smallskip
{\small
\setlength{\tabcolsep}{4pt}
\renewcommand{\arraystretch}{1.1}
\begin{tabular}{@{}l  rr  rr  rr@{}}
    \toprule
    & \multicolumn{2}{c}{\textbf{W.\ Pendulum}}
      & \multicolumn{2}{c}{\textbf{W.\ Double Pend.}}
      & \multicolumn{2}{c}{\textbf{W.\ Cart-Pole}} \\
    \cmidrule(lr){2-3} \cmidrule(lr){4-5} \cmidrule(lr){6-7}
    \textbf{Model}
      & MSE$_{100}$ & $R^2_D$
      & MSE$_{100}$ & $R^2_D$
      & MSE$_{100}$ & $R^2_D$ \\
    \midrule
    \multicolumn{7}{l}{\textit{PHAST (ours)}} \\
    \;\; KNOWN   & 0.064 & \best{1.00}  & 0.532 & \best{1.00}  & 0.033 & $-$1.10 \\
    \;\; PARTIAL & \best{0.056} & 0.18  & \best{0.526} & $-$1.09  & \best{0.014} & \best{-0.24} \\
    \;\; UNKNOWN & 0.737 & $\ll\!0$  & 0.866 & $\ll\!0$  & 0.243 & $\ll\!0$ \\
    \midrule
    S5     & 0.954 & \multicolumn{1}{c}{---} & 2.910 & \multicolumn{1}{c}{---} & 5.145 & \multicolumn{1}{c}{---} \\
    LinOSS & 1.536 & \multicolumn{1}{c}{---} & 3.339 & \multicolumn{1}{c}{---} & 4.167 & \multicolumn{1}{c}{---} \\
    VPT    & 3.227 & \multicolumn{1}{c}{---} & 2.464 & \multicolumn{1}{c}{---} & 2.103 & \multicolumn{1}{c}{---} \\
    \bottomrule
\end{tabular}
}

\caption{\textbf{Benchmark suite overview.}
(A--C)~Schematics of the three representative dissipative systems with position-dependent damping:
Windy Pendulum, Windy Double Pendulum (configuration-dependent $M(q)$),
and Windy Cart-Pole ($\mathbb{R}\times\mathbb{S}^1$, per-DOF damping, $M(q)$).
The Cart-Pole features two distinct damping mechanisms:
constant viscous cart friction (\textcolor{floworange}{$d_{\mathrm{c}}$})
and position-dependent angular wind damping (\textcolor{dissipred}{$d(\theta)$}).
(D)~Wrapped-angle rollout MSE at $H{=}100$ (lower$=$ better) and damping identifiability $R^2_D$
(higher$=$ better; bold$=$ best per column).
PHAST (KNOWN) achieves near-perfect identifiability ($R^2_D \approx 1$) on Pendulum and Double Pendulum;
PHAST (PARTIAL) achieves the best forecasting across all three systems.
All models use 3k--9k parameters (exact counts vary by state dimension; see Appendix).
Baselines do not expose explicit damping fields (---).
Appendix~\ref{app:environments} describes the full suite of thirteen benchmarks,
including conservative variants, the harmonic oscillator, and five non-mechanical systems (Tables~\ref{tab:qonly_rollout_rlc}--\ref{tab:qonly_rollout_predprey}).}
\label{fig:gallery_overview}
\end{figure}

A critical but often overlooked issue emerges in practice: \textbf{forecasting accuracy and parameter recovery are distinct objectives}.
A model may achieve good rollouts by using $\Dmat(q)$ as a ``stabilizer'' that absorbs errors from $V$ or $\Mmat$, rather than learning the true physical damping.
More broadly, without additional structure the inverse problem can be non-identifiable (gauge freedom), even in conservative systems; Appendix~\ref{app:ablations:gauge_freedom} provides a controlled illustration on the double pendulum.
We therefore separate forecasting and identifiability in both the model design and the evaluation (Sec.~\ref{sec:experiments:eval}), and empirically observe that stronger physical structure improves identifiability (Table~\ref{tab:windy_identifiability}).

Our contributions are:
\begin{enumerate}
    \item \textbf{Unified architecture}: A single port-Hamiltonian template spanning three knowledge regimes (KNOWN, PARTIAL, UNKNOWN) by swapping component instantiations for $V$, $\Mmat$, $\Dmat$.
    \item \textbf{Householder-style parameterizations}: Low-rank outer-product expansions that guarantee $\Dmat(q) \succeq 0$ and $\Mmat(q) \succ 0$ by construction with efficient structured primitives: applying $\Dmat(q)$ to a vector costs $O(nr)$, and computing $v=\Mmat(q)^{-1}p$ via Woodbury is $O(nr^2{+}r^3)$, where $n$ is the number of degrees of freedom and $r$ is the number of rank-1 terms (Appendix~\ref{app:ablations:timing}).

    \item \textbf{Damping bounds for identifiability.}
We show that unconstrained dissipation can absorb model mismatch and obscure
physical parameter recovery.
PHAST therefore bounds the effective damping strength, improving identifiability
in grey-box settings while preserving stability (Table~\ref{tab:windy_identifiability}; Appendix~\ref{app:ablations})
    % \item \textbf{Damping bounds for identifiability}: Spectral constraints on the (nonnegative) damping strengths $\{\beta_i(q)\}$ in our low-rank $\Dmat(q)$ parameterization, enforced via $\sum_{i=1}^{r} \beta_i(q) \leq \bar{\beta}$, that mitigate the ``error-sink'' failure mode where damping absorbs model mismatch, improving damping recovery in grey-box settings (Table~\ref{tab:windy_identifiability}; Appendix~\ref{app:ablations}).
    \item \textbf{Structure-preserving integration}: Strang splitting that preserves the continuous-time dissipation structure to $O(\dt^2)$.
    \item \textbf{Two-axis evaluation}: Explicit separation of forecasting metrics (rollout MSE) from identifiability metrics (damping $R^2$), revealing trade-offs invisible to single-axis evaluation.
\end{enumerate}

Across thirteen q-only benchmarks---including non-mechanical systems (RLC circuit, Lennard--Jones cluster, heat exchange, N-body gravity, predator--prey)---PHAST achieves the best long-horizon open-loop stability among competitive baselines (Tables~\ref{tab:qonly_rollout_pendulum_h100}--\ref{tab:qonly_rollout_predprey}); on Windy Pendulum, PHAST (KNOWN) recovers the true position-dependent damping with $R^2=0.996$ (Table~\ref{tab:windy_identifiability}).
These results suggest that, on these q-only benchmarks, \textbf{structure is a stronger driver of long-horizon stability than additional capacity}.
Code to reproduce our experiments will be released.

% ---------------------------------------------------------------------------
% 2. Related Work (included from related.tex)
% ---------------------------------------------------------------------------
% related.tex — Related work section (shared between ICML and arXiv builds)

\section{Related Work}
\label{sec:related}

Learning dynamics from data spans energy-based methods, structure-preserving networks, and efficient sequence models; we position PHAST relative to each.

\paragraph{Energy-based learning.}
Hamiltonian Neural Networks (HNNs)~\citep{greydanus2019hamiltonian} learn $\Ham_\theta(x)$ and set $\dot{x} = \Jmat \nabla \Ham_\theta$, while Lagrangian Neural Networks (LNNs)~\citep{cranmer2020lagrangian} learn $L_\theta(q,\dot q)$ and derive Euler--Lagrange dynamics.
Both are conservative by construction ($\Rmat=0$), and LNNs additionally require Hessian computations.

\paragraph{Dissipative extensions.}
Dissipative SymODEN~\citep{zhong2020dissipative} and DHNN~\citep{sosanya2022dissipative} extend HNN/LNN with damping terms.
However, dissipation is often unconstrained and can entangle with errors in $V$ or $\Mmat$, producing reasonable rollouts but poor physical recovery.
PHAST enforces $\Dmat(q)\succeq 0$ by construction, optionally bounds its spectrum, and evaluates forecasting vs.\ identifiability separately.

\paragraph{Structure-preserving networks.}
SympNets~\citep{jin2020sympnets} and Volume-Preserving Transformers~\citep{brantner2024volume} enforce symplectic/volume-preserving updates aligned with conservative Hamiltonian flows.
For dissipative systems, strict volume preservation cannot represent generic phase-space contraction and provides no passivity guarantee.

\paragraph{Port-Hamiltonian learning.}
Prior work on learning port-Hamiltonian systems~\citep{desai2021port,eidnes2023port} largely assumes known structure, while concurrent work~\citep{anonymous2026hamiltonian} learns generalized Hamiltonian dynamics from phase-space data with probabilistic constraints.
PHAST targets partially known/unknown structure and q-only learning while keeping the $\Jmat$--$\Rmat$ decomposition explicit.

\paragraph{Hamiltonian methods for optimal control.}
The PHAST architecture builds upon a sustained research trajectory investigating the integration of geometric, Hamiltonian, and stochastic structures into learning-based dynamics and control.
Early work addressed the fundamental brittleness of Physics-Informed Neural Networks (PINNs) in the presence of noise, proposing Gaussian Process smoothing to recover robust learning when physical priors fail \cite{bajaj2023recipes}.
Complementing this, the \textit{Motion Code} framework established methods for disentangling complex, noisy time-series data into underlying stochastic processes \cite{bajaj2024motion}.
To bridge deep learning with optimal control, the \textit{NeuralPMP} framework was introduced, leveraging the Pontryagin Maximum Principle to learn reduced Hamiltonian dynamics for continuous control tasks \cite{bajaj2021physics}, a methodology subsequently adapted to navigate the complex energy landscapes of molecular dynamics \cite{bajaj2024reinforcement}.
This paper's focus on Hamiltonian structures was generalized to learn diverse dynamical classes—including conservative, dissipative, and port-Hamiltonian systems—from noisy trajectories by enforcing stability constraints \cite{mclennan2025learning}.
Simultaneously, the theoretical underpinnings of Reinforcement Learning were reformulated through a differential dual perspective in \textit{Differential Policy Optimization} (dfPO), enabling pointwise policy refinement \cite{nguyen2025differential}, which was further extended to stochastic settings using rough path theory \cite{nguyen2025stochastic}.
Most recently, these geometric principles were applied to simultaneous navigation and mapping (SNAM), demonstrating how learned Hamiltonian energy landscapes can guide robotic navigation in unknown environments \cite{ellendula2025grl}.
Collectively, these papers motivate PHAST's unified approach to modeling dissipative systems across varying regimes of prior knowledge.

\paragraph{Efficient sequence models.}
State-space models such as S4/S5 and Mamba~\citep{gu2022efficiently,smith2023simplified,gu2023mamba} provide linear-time recurrence; oscillatory variants like LinOSS and D-LinOSS~\citep{linoss2024,dlinoss2025} learn stable oscillation and dissipation time scales.
Unlike PHAST, these models typically do not expose explicit energies or configuration-dependent damping fields, and they provide no port-Hamiltonian passivity guarantee.

\paragraph{The gap.}
Table~\ref{tab:related_comparison} summarizes the landscape and the key properties we target.

\begin{table}[t]
    \centering
    \caption{\textbf{Comparison of dynamics learning approaches.} PHAST combines dissipative modeling ($\Rmat \neq 0$), passivity-aware structure, spectral control, and efficient primitives for long-horizon rollouts. \emph{Passivity} indicates a by-construction storage-function inequality.}
    \label{tab:related_comparison}
    \scriptsize
    \setlength{\tabcolsep}{4pt}
    \begin{tabular}{@{}lcccc@{}}
        \toprule
        \textbf{Method} & \textbf{$\Rmat \neq 0$} & \textbf{Passivity} & \textbf{Spectral} & \textbf{Efficient} \\
        \midrule
        HNN & \xmark & \cmark & \xmark & \cmark \\
        LNN & \xmark & \cmark & \xmark & \xmark \\
        DHNN / Diss.\ SymODEN & \cmark & \xmark & \xmark & \cmark \\
        SympNets / VPT & \xmark & \xmark & \xmark & \cmark \\
        SSMs (S5, LinOSS, D-LinOSS) & --- & \xmark & \xmark & \cmark \\
        \textbf{PHAST (ours)} & \cmark & \cmark & \cmark & \cmark \\
        \bottomrule
    \end{tabular}
\end{table}

% ---------------------------------------------------------------------------
% 3. Method
% ---------------------------------------------------------------------------
% PHAST Methods (paper-facing).
%
% This file is intended to be included from a main paper (e.g., via \input{methods}).
% It assumes the main preamble loads:
%   amsmath, amssymb, amsfonts, booktabs, tikz, pgfplots (optional), xcolor (optional).
% If you want a quick standalone build, see `methods_standalone.tex`.

\section{Method: PHAST for Dissipative Dynamics}
\label{sec:methods}

% ---------------------------------------------------------------------------
% Minimal safe macros (no-ops if already defined in the main preamble).
% ---------------------------------------------------------------------------
\providecommand{\R}{\mathbb{R}}
\providecommand{\half}{\tfrac{1}{2}}
\providecommand{\T}{\top}

\providecommand{\Ham}{H}
\providecommand{\Jmat}{J}
\providecommand{\Rmat}{R}
\providecommand{\Mmat}{M}
\providecommand{\Dmat}{D}
\providecommand{\grad}{\nabla}

This section describes how PHAST enforces the structural decomposition introduced above
(Table~\ref{tab:structured_hamiltonians}, Fig.~\ref{fig:phast_regime_spectrum})
and preserves it under time discretization.
We begin with the observation model, data contract, and passivity guarantees (Sec.~\ref{sec:methods:qonly}),
followed by the parameterizations that enforce definiteness (Sec.~\ref{sec:methods:householder}),
the structure-preserving integrator (Sec.~\ref{sec:methods:integration}),
the training objective (Sec.~\ref{sec:methods:training}),
and port-based control (Sec.~\ref{sec:casimir_forced_mode}).

% ---------------------------------------------------------------------------
% 3.1 Problem Formulation (q-only)
% ---------------------------------------------------------------------------
\subsection{Problem Formulation: Partial Observability (q-only)}
\label{sec:methods:qonly}

As described in Section~\ref{sec:intro}, PHAST operates in the q-only setting
where only generalized positions~$q_t$ are observed and momenta~$p_t$ are latent.
We now detail the three pipeline stages---velocity observer, canonicalizer, and
port-Hamiltonian core---that map a short position history to long-horizon
phase-space rollouts.

Formally, we observe
\begin{equation}
y_t = q_t, \qquad x_t = (q_t, p_t),
\end{equation}
where $p_t$ is the (generally unobserved) conjugate momentum in the underlying
phase state.

\paragraph{Velocity observer.}
We first compute a simple finite-difference estimate
\begin{equation}
\dot q_t^{\mathrm{fd}} :=
\begin{cases}
0, & t = 0, \\
(q_t - q_{t-1})/\dt, & t \ge 1,
\end{cases}
\label{eq:qonly_fd}
\end{equation}
and then apply a causal Temporal Convolutional Network (TCN) observer
$o_\phi$ that predicts an additive correction:
\begin{equation}
\hat{\dot q}_t = \dot q_t^{\mathrm{fd}} + \delta_t,
\qquad
\delta_{0:t} = o_\phi([q_{0:t}, \dot q^{\mathrm{fd}}_{0:t}]).
\label{eq:qonly_observer}
\end{equation}

\paragraph{Canonicalization.}
The estimated velocity is mapped to a phase state via a canonicalizer
$C_\psi : (q_t, \hat{\dot q}_t) \mapsto (q_t, \hat p_t)$.
For mechanical systems, the physical relation is $p = M(q)\dot q$.
When $M$ is known, we use the mass-consistent mapping
$\hat p = M(q)\hat{\dot q}$.
In q-only benchmarks with constant mass, we use the identity canonicalizer
$\hat p = \hat{\dot q}$ and treat $\hat p$ as a velocity-like latent variable
that yields an approximately Markov phase state.

\paragraph{Rollout modes.}
Given a burn-in window $q_{0:K-1}$, PHAST supports two rollout modes.
In \emph{takeover} mode, an initial phase state is inferred at the end of the
context and then integrated open-loop:
\begin{align}
\hat x_{K-1} &= C_\psi(q_{K-1}, \hat{\dot q}_{K-1}), \nonumber\\
\hat x_{t+1} &= \Phi_{\dt}(\hat x_t), \qquad \hat q_t = \Pi_q(\hat x_t).
\label{eq:qonly_takeover}
\end{align}
In \emph{autoregressive} (self-conditioned) mode, the observer is re-applied to
the generated position prefix at each step.
For cross-model comparisons, we report autoregressive rollouts; takeover rollouts
are additionally used as a diagnostic of the learned dynamics.

When $\Mmat(q)$ is available from physics (KNOWN regime), we use the mass-consistent canonicalizer $\hat p=\Mmat(q)\hat{\dot q}$; combined with a nonseparable Hamiltonian integrator, this yields substantial gains over the constant-mass approximation (e.g., $\sim 3\times$ lower $H{=}100$ rollout error on Windy Cart-Pole; Appendix~\ref{app:ablations:nonseparable_mass}, Table~\ref{tab:cartpole_nonseparable_mass}).

\paragraph{Unforced dynamics ($u\equiv0$).}
\label{sec:methods:ph}
All forecasting and filtering experiments in this work consider the autonomous
system $\dot{x} = (\Jmat - \Rmat)\nabla \Ham(x)$
(Eq.~\eqref{eq:ph_dynamics} with $u{=}0$),
which admits the energy balance
\begin{equation}
\frac{d\Ham}{dt} = -\nabla \Ham(x)^\top \Rmat \nabla \Ham(x) \le 0.
\label{eq:passivity}
\end{equation}
Equation~\eqref{eq:passivity} is the passivity certificate: the learned
Hamiltonian $\Ham$ serves as a storage function that is non-increasing along
trajectories, ensuring bounded energy and stable long-horizon rollouts.

\paragraph{Forced dynamics and ports ($u\neq0$).}
The input $u$ enters through a port matrix $G$, with conjugate output
$y^{\text{port}} = G^\top \nabla \Ham(x)$.
The supplied power is $u^\top y^{\text{port}}$, yielding the balance
$\tfrac{d\Ham}{dt} = -\nabla \Ham^\top \Rmat \nabla \Ham + u^\top y^{\text{port}}$,
which reduces to Eq.~\eqref{eq:passivity} when $u=0$.
Section~\ref{sec:casimir_forced_mode} develops the forced form into a
closed-loop Energy--Casimir controller.

% ---------------------------------------------------------------------------
% 3.3 Householder-Style Low-Rank Parameterizations
% ---------------------------------------------------------------------------
\subsection{Householder-Style Low-Rank Parameterizations}
\label{sec:methods:householder}

To enforce definiteness constraints \emph{and} keep inference fast, PHAST uses low-rank outer-product expansions (``Householder-style'' additive updates; not Householder reflections).

\paragraph{Damping (all regimes).}
We parameterize the damping matrix as
\begin{align}
  \Dmat(q) &= d_0 I + \sum_{i=1}^{r} \beta_i(q)\,k_i(q)k_i(q)^\T, \notag \\
  &\quad d_0\ge 0,\ \beta_i(q)\ge 0,\ \|k_i(q)\|=1\ \text{for } i=1,\ldots,r,
  \label{eq:householder_damping}
\end{align}
where $n$ is the number of degrees of freedom (so $q\in\R^n$) and $r$ is the number of rank-1 terms.
This guarantees $\Dmat(q)\succeq 0$ and admits $O(nr)$ matrix-vector products.

\paragraph{Bounding damping strength (identifiability + stability).}
Optionally, we bound the total strength
\begin{equation}
  \sum_{i=1}^{r}\beta_i(q) \le \bar\beta
  \quad\Rightarrow\quad
  \lambda_{\max}(\Dmat(q)) \le d_0 + \bar\beta,
  \label{eq:damping_bound}
\end{equation}
which (i) prevents $\Dmat$ from acting as an ``error sink'' when $V$ or $\Mmat$ are misspecified, and (ii) improves numerical stability of the explicit damping half-step (Algorithm~\ref{alg:phast_step}, lines 2--3 and 10--11) when $\Delta t\,\lambda_{\max}(\Dmat)$ is large.
In our grey-box Windy benchmarks, we set $\bar\beta$ to a physics-calibrated magnitude (e.g., the known damping variation $\Delta d$) in the PARTIAL regime (Appendix~\ref{app:training}).
We enforce Eq.~\eqref{eq:damping_bound} by reparameterizing the strengths as $\beta_i(q)=\beta_{\max}\,\sigma(\cdot)$ with $\beta_{\max}=\bar\beta/r$, so that each term is nonnegative and uniformly bounded.
Appendix Fig.~\ref{fig:householder_geom} provides a schematic intuition for Eq.~\eqref{eq:damping_bound}.

\paragraph{Mass.}
PHAST supports a general configuration-dependent mass $\Mmat(q)\succ 0$.
In our main experiments we use a constant-mass approximation $\Mmat(q)\approx\Mmat$
for efficiency, enabling a fast leapfrog integrator core;
the nonseparable $\Mmat(q)$ variant uses an implicit-midpoint integrator and is
studied in Appendix~\ref{app:ablations:nonseparable_mass}.
When learning $\Mmat$ (UNKNOWN regime), we use an SPD diagonal-plus-low-rank parameterization and compute $\Mmat^{-1}p$ via Woodbury; see Appendix Eq.~\ref{eq:householder_mass} and Alg.~\ref{alg:velocity}.

% ---------------------------------------------------------------------------
% 3.4 Structure-Preserving Integration
% ---------------------------------------------------------------------------
\subsection{Structure-Preserving Integration}
\label{sec:methods:integration}

For dissipative systems we integrate with Strang splitting \cite{strang1968construction,hairer2006geometric}:
\begin{equation}
  \Phi_{\Delta t}=\Phi_D^{\Delta t/2}\circ\Phi_H^{\Delta t}\circ\Phi_D^{\Delta t/2},
  \label{eq:strang}
\end{equation}
where $\Phi_H^{\Delta t}$ is a conservative (symplectic) update and $\Phi_D^{\Delta t/2}$ is a half-step of damping.
In our main experiments we use the leapfrog (St\"ormer--Verlet) conservative core; we additionally report ablations with implicit midpoint and Cayley--Neumann in settings where additional stability is helpful.

\paragraph{Stacking and timestep.}
The PHAST transition $\Phi_{\dt}$ can optionally compose $L$ integration substeps to produce a single predicted transition $x_t \mapsto x_{t+1}$.
All substeps reuse the same learned energy components ($V$, $\Mmat$, $\Dmat$) and differ only in the intermediate states they act on (and, if enabled, per-substep timesteps).
In our experiments we set $L{=}1$ for interpretability and efficiency, and initialize the internal timestep to the environment timestep $\dt$; unless otherwise noted, this timestep is learnable (Appendix~\ref{app:ablations} reports $L$ and timestep ablations).

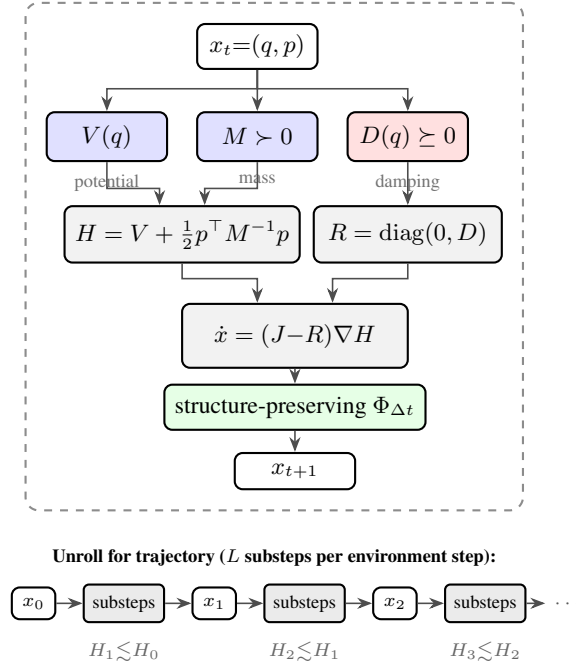
\begin{figure}[t]
  \centering
  \begin{tikzpicture}[
      box/.style={rectangle, draw=black, thick, rounded corners=3pt, align=center, inner sep=4pt, font=\small},
      comp/.style={box, minimum width=1.6cm, minimum height=0.65cm},
      assembly/.style={box, fill=black!5, minimum width=2.5cm, minimum height=0.75cm},
      dynamics/.style={box, fill=black!5, minimum width=3.0cm, minimum height=0.85cm},
      integ/.style={box, fill=green!10, minimum width=2.4cm, minimum height=0.6cm},
      cell/.style={draw=black!45, dashed, thick, rounded corners=5pt, inner sep=8pt},
      arrow/.style={-{Stealth[length=2mm]}, semithick, black!70},
      label/.style={font=\scriptsize, text=black!55},
      unrollbox/.style={rectangle, draw=black, thick, rounded corners=2pt, fill=black!8, minimum width=0.95cm, minimum height=0.5cm, font=\scriptsize},
  ]
    % ===== PHAST core transition cell =====
    \begin{scope}[local bounding box=phastcore]
      % Input state
      \node[box, fill=white, minimum width=1.6cm] (state) at (0,0) {$x_t{=}(q,p)$};

      % Component blocks - horizontal row
      \node[comp, fill=blue!12] (V) at (-2.0,-1.2) {$V(q)$};
      \node[comp, fill=blue!12] (M) at (0,-1.2) {$M\succ 0$};
      \node[comp, fill=red!12]  (D) at (2.0,-1.2) {$D(q)\succeq 0$};

      % Labels BELOW components
      \node[label, below=0.04cm of V] {potential};
      \node[label, below=0.04cm of M] {mass};
      \node[label, below=0.04cm of D] {damping};

      % Assembly row - H and R
      \node[assembly] (H) at (-1.0,-2.5) {$H = V + \tfrac12 p^\top M^{-1}p$};
      \node[assembly] (R) at (2.0,-2.5) {$R = \mathrm{diag}(0,D)$};

      % Port-Hamiltonian dynamics (no extra inequality inside the box)
      \node[dynamics] (ph) at (0.5,-3.85) {$\dot x = (J{-}R)\nabla H$};

      % Integrator (make meaning explicit)
      \node[integ] (phi) at (0.5,-4.8) {structure-preserving $\Phi_{\Delta t}$};

      % Output state
      \node[box, fill=white, minimum width=1.6cm] (next) at (0.5,-5.65) {$x_{t+1}$};

      % Arrows from state to components
      \draw[arrow] (state.south) -- ++(0,-0.25) -| (V.north);
      \draw[arrow] (state.south) -- ++(0,-0.25) -- (M.north);
      \draw[arrow] (state.south) -- ++(0,-0.25) -| (D.north);

      % Arrows from V and M to H
      \draw[arrow] (V.south) -- ++(0,-0.35) -| ([xshift=-0.3cm]H.north);
      \draw[arrow] (M.south) -- ++(0,-0.35) -| ([xshift=0.3cm]H.north);

      % Arrow from D to R
      \draw[arrow] (D.south) -- ++(0,-0.35) -- (R.north);

      % Arrows from H and R to PH dynamics
      \draw[arrow] (H.south) -- ++(0,-0.2) -| ([xshift=-0.5cm]ph.north);
      \draw[arrow] (R.south) -- ++(0,-0.2) -| ([xshift=0.5cm]ph.north);

      \draw[arrow] (ph) -- (phi);
      \draw[arrow] (phi) -- (next);
    \end{scope}

    % Dashed cell border with label
    \node[cell, fit=(phastcore), label={[font=\scriptsize\bfseries]above:PHAST core transition}] (cellbox) {};

    % ===== Unroll strip below =====
    \node[font=\scriptsize, below=0.38cm of cellbox.south] (unrolllabel)
      {\textbf{Unroll for trajectory (}$L$\textbf{ substeps per environment step):}};

    \node[box, fill=white, minimum width=0.55cm, font=\scriptsize, below=0.15cm of unrolllabel, xshift=-3.2cm] (x0) {$x_0$};
    \node[unrollbox, right=0.35cm of x0] (pk1) {substeps};
    \node[box, fill=white, minimum width=0.55cm, font=\scriptsize, right=0.35cm of pk1] (x1) {$x_1$};
    \node[unrollbox, right=0.35cm of x1] (pk2) {substeps};
    \node[box, fill=white, minimum width=0.55cm, font=\scriptsize, right=0.35cm of pk2] (x2) {$x_2$};
    \node[unrollbox, right=0.35cm of x2] (pk3) {substeps};
    \node[font=\scriptsize, right=0.25cm of pk3, text=black!65] (dots) {$\cdots$};

    \draw[arrow] (x0) -- (pk1);
    \draw[arrow] (pk1) -- (x1);
    \draw[arrow] (x1) -- (pk2);
    \draw[arrow] (pk2) -- (x2);
    \draw[arrow] (x2) -- (pk3);
    \draw[arrow] (pk3) -- (dots);

    % Energy inequalities (schematic)
    \node[font=\scriptsize, text=black!60, below=0.12cm of pk1] {$H_1{\lesssim}H_0$};
    \node[font=\scriptsize, text=black!60, below=0.12cm of pk2] {$H_2{\lesssim}H_1$};
    \node[font=\scriptsize, text=black!60, below=0.12cm of pk3] {$H_3{\lesssim}H_2$};

  \end{tikzpicture}
  \caption{\textbf{PHAST core transition computation graph.}
  Components $V$, $M$, and $D$ assemble into Hamiltonian $H$ and dissipation $R$; in the unforced case ($u=0$),
  the port-Hamiltonian form implies $\tfrac{dH}{dt}\le 0$ in continuous time.
  PHAST advances the phase state using a structure-preserving discrete-time map $\Phi_{\Delta t}$ (Strang splitting)
  and optionally composes $L$ integration substeps per environment step (Eq.~\eqref{eq:phast_substeps}).
  The $H_{k+1}\lesssim H_k$ annotations are schematic; discrete-time energy monotonicity depends on timestep and integrator
  (Appendix~\ref{app:math:strang}).}
  \label{fig:phast_step}
\end{figure}

\begin{algorithm}[t]
\caption{PHAST step $\Phi_{\dt}(q,p)\to(q^+,p^+)$ (Strang splitting with leapfrog core; constant mass)}
\label{alg:phast_step}
\begin{algorithmic}[1]
\REQUIRE $(q,p)\in\R^{2n}$, time step $\dt$
\STATE \textbf{Half dissipation step}
\STATE $v \leftarrow \Mmat^{-1}p$
\STATE $p \leftarrow p - \frac{\dt}{2}\,\Dmat(q)\,v$
\STATE \textbf{Conservative step (leapfrog / St\"ormer--Verlet)}
\STATE $p_{1/2} \leftarrow p - \frac{\dt}{2}\,\grad V(q)$
\STATE $v_{1/2} \leftarrow \Mmat^{-1}p_{1/2}$
\STATE $q^+ \leftarrow q + \dt\,v_{1/2}$
\STATE $p^+ \leftarrow p_{1/2} - \frac{\dt}{2}\,\grad V(q^+)$
\STATE \textbf{Half dissipation step}
\STATE $v^+ \leftarrow \Mmat^{-1}p^+$
\STATE $p^+ \leftarrow p^+ - \frac{\dt}{2}\,\Dmat(q^+)\,v^+$
\RETURN $(q^+,p^+)$
\end{algorithmic}
\end{algorithm}

\paragraph{Discrete-time caveat.}
Continuous-time passivity in Eq.~\eqref{eq:passivity} does not automatically imply monotone energy decay at finite step size.
With the explicit damping half-step, discrete energy monotonicity can fail if the system is stiff; bounding Eq.~\eqref{eq:damping_bound} (or using an implicit damping update) mitigates this in practice.

% ---------------------------------------------------------------------------
% 3.5 Training Objective
% ---------------------------------------------------------------------------
\subsection{Training Objective}
\label{sec:methods:training}

PHAST supports end-to-end training with a weighted multi-term objective:
\begin{equation}
  \mathcal{L}
  = \lambda_{\text{data}}\mathcal{L}_{\text{data}}
  + \lambda_{\text{pass}}\mathcal{L}_{\text{pass}}
  + \lambda_{\text{energy}}\mathcal{L}_{\text{energy}}
  + \lambda_{\text{roll}}\mathcal{L}_{\text{roll}}.
  \label{eq:loss}
\end{equation}

\paragraph{Loss breakdown.}
Let $y_{0:T-1}$ denote the observed sequence (in q-only, $y_t=q_t$), where $T$ is the sequence length.
Let $\mathrm{err}(\hat y, y)$ denote the appropriate per-step error on the manifold (Appendix~\ref{app:metrics}).
For full-state training (when $x=(q,p)$ is available), we use teacher-forced one-step predictions $\hat x_{t+1}=\Phi_{\dt}(x_t)$ and define $H_t=\Ham(x_t)$ and $\hat H_{t+1}=\Ham(\hat x_{t+1})$:
\begin{align}
  \mathcal{L}_{\text{data}}
  &= \frac{1}{T-1}\sum_{t=0}^{T-2} \mathrm{err}(\hat y_{t+1}, y_{t+1}), \\
  \mathcal{L}_{\text{pass}}
  &= \frac{1}{T-1}\sum_{t=0}^{T-2} \max(0, \hat H_{t+1} - H_t), \\
	  \mathcal{L}_{\text{energy}}
	  &= \frac{1}{T-1}\sum_{t=0}^{T-2}\left|\frac{\hat H_{t+1}-H_t}{\dt} + v_t^\T \Dmat(q_t) v_t\right|, \\
	  &\qquad v_t := \Mmat^{-1}p_t, \\
	  \mathcal{L}_{\text{roll}}
	  &= \mathbb{E}_{t_0}\left[\frac{1}{H_{\mathrm{roll}}}\sum_{h=1}^{H_{\mathrm{roll}}}\mathrm{err}(\tilde y_{t_0+h}, y_{t_0+h})\right].
\end{align}
Here $\tilde x_0=x_{t_0}$, $\tilde x_{h+1}=\Phi_{\dt}(\tilde x_h)$, and $t_0$ is sampled uniformly from valid start indices.
In the q-only setting, energy diagnostics can be formed using finite-difference velocity estimates as in Appendix~\ref{app:metrics}.

We always train with $\mathcal{L}_{\text{data}}$ for next-step prediction; $\mathcal{L}_{\text{pass}}$ penalizes discrete energy increases and $\mathcal{L}_{\text{energy}}$ matches a dissipation budget (when labels are available) or a label-free residual under Strang splitting.
Optionally, $\mathcal{L}_{\text{roll}}$ trains the model in the same open-loop mode used at evaluation, reducing teacher-forcing train--test mismatch.
Unless otherwise noted, our main benchmark tables use $\lambda_{\text{pass}}=\lambda_{\text{energy}}=\lambda_{\text{roll}}=0$ and report long-horizon open-loop stability and energy/damping diagnostics at evaluation time.

The complete training pseudocode is given in Algorithm~\ref{alg:phast_train}.

% ---------------------------------------------------------------------------
% 3.6 Energy-Casimir Control
% ---------------------------------------------------------------------------
\subsection{Energy-Casimir Control (forced dynamics mode)}
\label{sec:casimir_forced_mode}

\paragraph{Why dynamic control.}
Static (memoryless) feedback can inject damping but cannot reshape the energy
landscape to stabilize a non-natural equilibrium $q^\star$.
A dynamic controller that \emph{stores} energy is needed: a port-Hamiltonian (pH)
controller exchanges energy with the plant through power ports, adding a shaped
potential that moves the minimum of the closed-loop energy to $q^\star$
(Fig.~\ref{fig:casimir_control_structure}).

\paragraph{Plant and controller as two pH systems.}
Let the plant be a controlled pH system
\begin{align}
    \dot x &= (J_p(x) - R_p(x)) \nabla H_p(x) + G_p(x)\,u_p, \\
    y_p &= G_p(x)^\top \nabla H_p(x),
\end{align}
with energy balance
$\dot H_p = -(\nabla H_p)^\top R_p(\nabla H_p) + y_p^\top u_p$.
Here $u_p$ is the control input (torque-like) and $y_p$ is the conjugate port
output (velocity-like); their product $y_p^\top u_p$ is the power supplied to
the plant.
The controller has a virtual state $\xi$ with shaped storage
\begin{equation}
    H_c(\xi) := \tfrac{k_c}{2}\,(\xi - q^\star)^2,
\end{equation}
acting as a virtual spring centered at the target $q^\star$.
Its port variables are
\begin{equation}
    \dot\xi = u_c, \qquad y_c = \nabla H_c(\xi) = k_c(\xi - q^\star).
\end{equation}

\paragraph{Power-preserving interconnection.}
We couple plant and controller through the port assignment
\begin{equation}
    u_p = -y_c + v, \qquad u_c = y_p,
    \label{eq:forced_interconnect}
\end{equation}
where $v$ is an auxiliary channel (set to zero for now).
When $v{=}0$ the cross-power cancels:
$y_p^\top u_p + y_c^\top u_c = -y_p^\top y_c + y_c^\top y_p = 0$,
so the coupling transfers energy between plant and controller but creates none.

\paragraph{Casimir invariant.}
Define $\mathcal{C}(q,\xi) := q - \xi$.
Because $u_c = y_p$ and, for collocated mechanical systems, $y_p = \dot q$,
we obtain $\dot{\mathcal{C}} = \dot q - \dot\xi = y_p - u_c = 0$.
If $\xi(0)=q(0)$, then $\xi(t) = q(t)$ for all $t$, so $H_c(\xi)$ acts as a
shaped potential \emph{on the plant configuration}: the controller's virtual
spring tracks the physical position forever.
$\mathcal{C}$ is a \emph{Casimir function}---a quantity conserved by the
interconnection structure (i.e.\ $\nabla\mathcal{C}\in\ker J_{\mathrm{cl}}^\top$),
independently of the energy functions.

\paragraph{Closed-loop energy decrease.}
The closed-loop storage $H_{\mathrm{cl}} := H_p + H_c$ satisfies
\begin{equation}
    \dot H_{\mathrm{cl}}
    = -(\nabla H_p)^\top R_p(\nabla H_p) + y_p^\top v.
\end{equation}
With damping injection $v = -d_{\mathrm{inj}}\,y_p$ ($d_{\mathrm{inj}}>0$), this yields
$\dot H_{\mathrm{cl}} \le -d_{\mathrm{inj}}\|y_p\|^2 \le 0$,
guaranteeing asymptotic convergence to the minimum of $H_{\mathrm{cl}}$ at $q^\star$.

\paragraph{Forced dynamics mode.}
In the experiments of Appendix~\ref{app:casimir} we additionally inject an
exogenous signal $v_{\mathrm{ext}}$ (for excitation or tracking), setting
\begin{equation}
    v(t) = v_{\mathrm{ext}}(t) - d_{\mathrm{inj}}\,\hat y(t),
    \label{eq:v_forced}
\end{equation}
where $\hat y$ is the (possibly estimated) port output.
Substituting \eqref{eq:forced_interconnect}--\eqref{eq:v_forced} into the power
balance gives
\begin{equation}
    y_p^\top u_p + y_c^\top u_c
    = y_p^\top v_{\mathrm{ext}} - d_{\mathrm{inj}}\, y_p^\top \hat y.
    \label{eq:forced_power_balance}
\end{equation}
In the ideal case $\hat y \equiv y_p$, the closed-loop storage satisfies the
forced passivity bound
\begin{equation}
    \dot H_{\mathrm{cl}}
    \le
    y_p^\top v_{\mathrm{ext}} - d_{\mathrm{inj}}\|y_p\|^2,
    \label{eq:forced_passivity_ideal}
\end{equation}
i.e.\ the closed loop is passive from $v_{\mathrm{ext}}$ to $y_p$ with strict
dissipation.
The Casimir invariance is compatible with forced mode: $v_{\mathrm{ext}}$ enters
only through $u_p$ and does not alter the constraint $u_c = y_p$.

\paragraph{Q-only limitation.}
In q-only settings $y_p$ is not observed directly and must be replaced by an
estimate $\hat y \approx y_p$.
The dissipative term then becomes $-d_{\mathrm{inj}}\,y_p^\top\hat y$, which can
lose definiteness when $\hat y \neq y_p$: estimation error may inject or
dissipate energy.
This is the central partial-observability bottleneck for port-based control.

\paragraph{Discrete-time implementation.}
Let $q_t^{\mathrm{meas}}$ denote q-only measurements and $\hat y_t$ the online
port estimate.
The controller is implemented as
\begin{align}
    u_t
    &=
    -y_c(\xi_t) + v_{\mathrm{ext},t} - d_{\mathrm{inj}}\,\hat y_t,
    \label{eq:forced_u_disc}
    \\
    \xi_{t+1}
    &=
    \xi_t + \Delta t\Big(\hat y_t + k_\xi(q_t^{\mathrm{meas}}-\xi_t)\Big),
    \label{eq:forced_xi_disc}
\end{align}
where $k_\xi \ge 0$ is an optional predictor--corrector gain that corrects
$\xi$ drift under partial observability: when $\hat y = y_p$ and $k_\xi{=}0$,
the update reduces to a forward-Euler discretization of $\dot\xi = y_p$.
All variants share the same $(k_c, d_{\mathrm{inj}}, k_\xi)$ and forcing signal
$v_{\mathrm{ext},t}$; they differ only in how $\hat y_t$ is produced online.

\paragraph{Port-estimation variants.}
We compare the same family of estimators as in the unforced setting
(oracle/full-state, finite differences, fixed-lag MAP smoothing, learned
observer, etc.) and evaluate stability and control-effort degradation as a
function of port-estimation error (Appendix~\ref{app:casimir}).

\section{Experiments}
\label{sec:experiments}

We evaluate PHAST on q-only systems spanning conservative and dissipative dynamics across six physical domains.
Our evaluation separates two regimes with distinct goals:
\emph{open-loop forecasting} (autonomous rollouts) to assess long-horizon stability and physical identifiability,
and \emph{closed-loop control} (feedback) to assess whether the learned port-Hamiltonian structure remains useful under stabilization
and to isolate the role of online port estimation under partial observability.

\providecommand{\best}[1]{\ensuremath{\boldsymbol{#1}}}

% ---------------------------------------------------------------------------
% 4.1 Open-loop forecasting: setup + protocol + main results
% ---------------------------------------------------------------------------
\subsection{Open-loop forecasting (q-only)}
\label{sec:experiments:openloop}

\subsubsection{Experimental setup}
\label{sec:experiments:setup}

\paragraph{Environments.}
We evaluate on thirteen q-only benchmarks spanning conservative and dissipative dynamics across six physical domains.
\emph{Mechanical systems} (eight benchmarks):
single pendulum (conservative, constant damping, and position-dependent ``windy'' damping),
a windy Cart-Pole on $\mathbb{R}\times\mathbb{S}^1$,
harmonic oscillator (conservative and damped),
and double pendulum (conservative and damped).
\emph{Non-mechanical systems} (five benchmarks covering all three knowledge regimes in Table~\ref{tab:structured_hamiltonians}):
series RLC circuit (KNOWN), Lennard--Jones 3-particle cluster (KNOWN), coupled heat exchange (KNOWN), 3-body gravitational system (PARTIAL), and Lotka--Volterra predator--prey (UNKNOWN).
In all environments only configurations $q_t$ are observed; the simulator evolves trajectories from random initial conditions $(q_0,p_0)$.
All models are trained as next-step predictors that map a short history of observed configurations to $\hat q_{t+1}$;
PHAST internally uses the same history to infer a latent phase state via the FD+TCN observer (Sec.~\ref{sec:methods:qonly}),
while sequence baselines must implicitly learn any required state estimation.
All forecasting benchmarks are unforced ($u=0$ in Eq.~\eqref{eq:ph_dynamics}).
For windy settings we use a position-dependent diagonal damping of the form
$d(q)=d_0+\Delta d\,|\sin q|$ (e.g., $d_0{=}0.3$, $\Delta d{=}0.5$), applied to the relevant angular coordinate.
Some benchmarks (Cart-Pole and Double Pendulum) have configuration-dependent inertia $M(q)$ in the simulator; unless otherwise stated,
we use a separable constant-mass approximation for efficiency (Sec.~\ref{sec:intro:regimes})
and study the nonseparable $M(q)$ variant in Appendix~\ref{app:ablations:nonseparable_mass}.
Appendix~\ref{app:environments} provides detailed environment descriptions and Hamiltonians.

\paragraph{Data protocol.}
Trajectories have length $T{=}200$.
We use $\dt{=}0.05$ for single-pendulum environments, $\dt{=}0.02$ for Cart-Pole and Oscillator, and $\dt{=}0.01$ for Double Pendulum.
For PHAST, we initialize the internal integrator step size to the environment timestep $\dt$ and use a single integration substep per observation step ($L{=}1$);
unless otherwise stated, the internal timestep is learnable.
Dataset sizes are $N_{\text{train}}/N_{\text{val}}/N_{\text{test}} = 1000/200/200$.
Unless otherwise noted, tables report mean $\pm$ std over 5 random model seeds using a fixed dataset shared across models,
and all runs are executed on CPU for reproducibility.
All models are trained for 50 epochs; complete hyperparameters are provided in Appendix~\ref{app:training}.

\paragraph{Training and model selection.}
All methods use the same optimizer and schedule: AdamW (weight decay $10^{-5}$) with a cosine learning-rate schedule.
We select the model parameters with lowest validation MSE (evaluated every 10 epochs) and report test metrics for that selection.

\paragraph{Baselines.}
We compare against GRU~\citep{cho2014learning}, S5~\citep{smith2023simplified}, LinOSS~\citep{linoss2024}, D-LinOSS~\citep{dlinoss2025},
a Transformer~\citep{vaswani2017attention}, and a Volume-Preserving Transformer (VPT)~\citep{brantner2024volume}.
We report trainable parameter counts for all methods; parameter matching is treated as an experimental control rather than an assumption.
All baselines are trained and evaluated as causal seq2seq predictors that map a q-only history to the next configuration ($q_{0:t}\mapsto \hat q_{t+1}$);
they do not use an explicit observer/canonicalizer pipeline.

% ---------------------------------------------------------------------------
% Open-loop protocol: two axes
% ---------------------------------------------------------------------------
\subsubsection{Evaluation protocol: two axes}
\label{sec:experiments:eval}

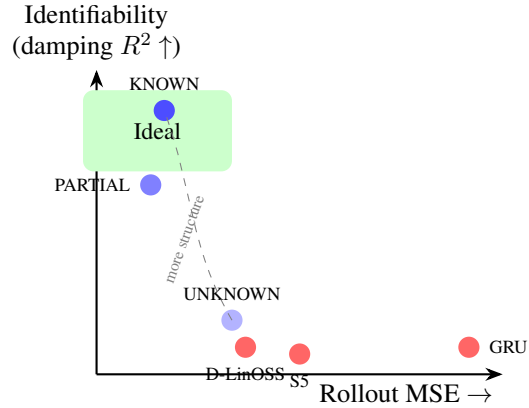
\begin{figure}[t]
    \centering
    \begin{tikzpicture}[scale=0.9]
        \draw[-{Stealth}, thick] (0,0) -- (6.0,0) node[below left] {Rollout MSE $\rightarrow$};
        \draw[-{Stealth}, thick] (0,0) -- (0,4.5) node[above, align=center] {Identifiability\\(damping $R^2$ $\uparrow$)};
        \fill[green!20, rounded corners] (-0.2,3.0) rectangle (2.0,4.2);
        \node[font=\small] at (0.9,3.6) {Ideal};
        \node[circle, fill=blue!50, minimum size=8pt, inner sep=0pt, label={[font=\scriptsize]left:PARTIAL}] at (0.8,2.8) {};
        \node[circle, fill=blue!70, minimum size=8pt, inner sep=0pt, label={[font=\scriptsize]above:KNOWN}] at (1.0,3.9) {};
        \node[circle, fill=blue!30, minimum size=8pt, inner sep=0pt, label={[font=\scriptsize]above:UNKNOWN}] at (2.0,0.8) {};
        \node[circle, fill=red!60, minimum size=8pt, inner sep=0pt, label={[font=\scriptsize]below:D-LinOSS}] at (2.2,0.4) {};
        \node[circle, fill=red!60, minimum size=8pt, inner sep=0pt, label={[font=\scriptsize]below:S5}] at (3.0,0.3) {};
        \node[circle, fill=red!60, minimum size=8pt, inner sep=0pt, label={[font=\scriptsize]right:GRU}] at (5.5,0.4) {};
        \draw[dashed, gray] (2.0,0.8) to[out=120,in=290] (1.0,3.9);
        \node[font=\tiny, gray, rotate=70] at (1.3,2.0) {more structure};
    \end{tikzpicture}
    \caption{\textbf{Two-axis evaluation (open-loop, conceptual).} Forecasting accuracy (low rollout MSE) and physical identifiability (high damping $R^2$) are distinct objectives; model/regularizer choices induce trade-offs.}
    \label{fig:two_axis}
\end{figure}

Forecasting accuracy and physical parameter recovery are \emph{distinct} objectives.
A model may achieve good rollouts by using $\Dmat(q)$ as a ``stabilizer'' rather than learning true physical damping.
Conversely, recovering true $\Dmat(q)$ does not guarantee good open-loop rollouts if $V(q)$ or $\Mmat$ have errors.
We therefore evaluate open-loop performance along two axes (Fig.~\ref{fig:two_axis}):

\paragraph{Axis 1: Forecasting stability.}
For angular coordinates we report one-step wrapped-angle MSE and long-horizon open-loop rollout error at horizon $H{=}100$ with burn-in context $K{=}10$.
For Euclidean q-only environments we report standard MSE and rollout MSE at $H{=}100$.
For mixed manifolds (Cart-Pole) we report a mixed-manifold rollout MSE at $H{=}100$ that averages translation MSE and wrapped-angle MSE.
All models are evaluated in open-loop (autoregressive) mode; PHAST additionally supports takeover rollouts (burn-in then integrate; Appendix~\ref{app:arch:qonly})
as a diagnostic of the learned dynamics.

\paragraph{Axis 2: Identifiability.}
When a model exposes an explicit damping field $\Dmat(q)$, we report damping recovery via $R^2$ and MAE.
We also report an energy-consistency diagnostic on open-loop rollouts (energy-budget residual at $H{=}100$),
computed from finite-difference velocity estimates and the benchmark's simulator damping law $D_{\mathrm{env}}(\cdot)$ for expected dissipation (Appendix~\ref{app:metrics}).
Precise metric definitions (including rollout protocol and the discrete energy-budget residual) are provided in Appendix~\ref{app:metrics}.

\paragraph{Regime-specific expectations.}
In KNOWN/PARTIAL regimes, physics-calibrated damping bounds (e.g., $\sum_i\beta_i(q)\le \bbar \approx \Delta d$) can improve identifiability without destabilizing rollouts
(\ifdefined\isarxiv Table~\ref{tab:partial_damping_cap_windy}\else Appendix Table~\ref{tab:partial_damping_cap_windy}\fi; Appendix Table~\ref{tab:windy_cap_sensitivity_quick}).
In UNKNOWN, damping bounds can reveal a forecasting--identifiability trade-off unless additional anchors break gauge freedoms (Appendix Table~\ref{tab:unknown_damping_cap_windy}).

\begin{center}
\fbox{\begin{minipage}{0.96\linewidth}
\small
\textbf{Open-loop error sources \& diagnostics.}
We report rollout error (stability), damping $R^2$ (identifiability), energy-budget residual, and discrete-time passivity violations.
Appendix~\ref{app:metrics:debugging} provides the q-only computation graph and diagnostic interpretations
(autoregressive vs.\ takeover, step-size/substepping, damping bounds, and time-scale ambiguity).
\end{minipage}}
\end{center}

% ---------------------------------------------------------------------------
% Open-loop main results
% ---------------------------------------------------------------------------
\subsubsection{Main open-loop results}
\label{sec:experiments:openloop:results}

\paragraph{Main qualitative results.}
We present qualitative results across three benchmarks that probe complementary aspects of dissipative dynamics learning (Fig.~\ref{fig:gallery_rollouts_phase}).
The \emph{Windy Pendulum} (1~DOF, $\mathbb{S}^1$) is the simplest: constant mass, a single position-dependent damping coefficient $d(\theta)$, and regular oscillatory trajectories.
It tests whether a model can learn basic dissipative Hamiltonian dynamics from positions alone.
The \emph{Windy Double Pendulum} (2~DOF, $\mathbb{T}^2$) escalates difficulty sharply: the dynamics are chaotic, the mass matrix $M(q)$ couples the two links through $\cos(\theta_1{-}\theta_2)$, and each joint has independent wind damping.
Small prediction errors grow exponentially, making this the hardest forecasting challenge in our suite.
The \emph{Windy Cart-Pole} (2~DOF, $\mathbb{R}{\times}\mathbb{S}^1$) represents a different axis of difficulty:
a mixed manifold (translation $\times$ rotation), configuration-dependent inertia $M(q)$,
and two \emph{qualitatively different} damping mechanisms --- constant viscous cart friction and position-dependent angular wind damping on the pole.
Full forecasting tables across all thirteen q-only benchmarks are reported in Appendix~\ref{app:tables:qonly_rollout}.
\ifdefined\isarxiv
Table~\ref{tab:qonly_suite_summary_h100} provides a compact summary across the full benchmark suite
(see also Fig.~\ref{fig:gallery_overview} in Sec.~\ref{sec:intro} for a visual overview).
\fi
Because the Double Pendulum and Cart-Pole have nonseparable Hamiltonians ($M(q) \neq \text{const}$),
we use the implicit-midpoint integrator for all PHAST curves in the multi-environment figures.
Appendix~\ref{app:ablations:nonseparable_mass} further discusses the nonseparable mass variant.

\begin{figure}[t]
    \centering
    \includegraphics[width=0.98\textwidth]{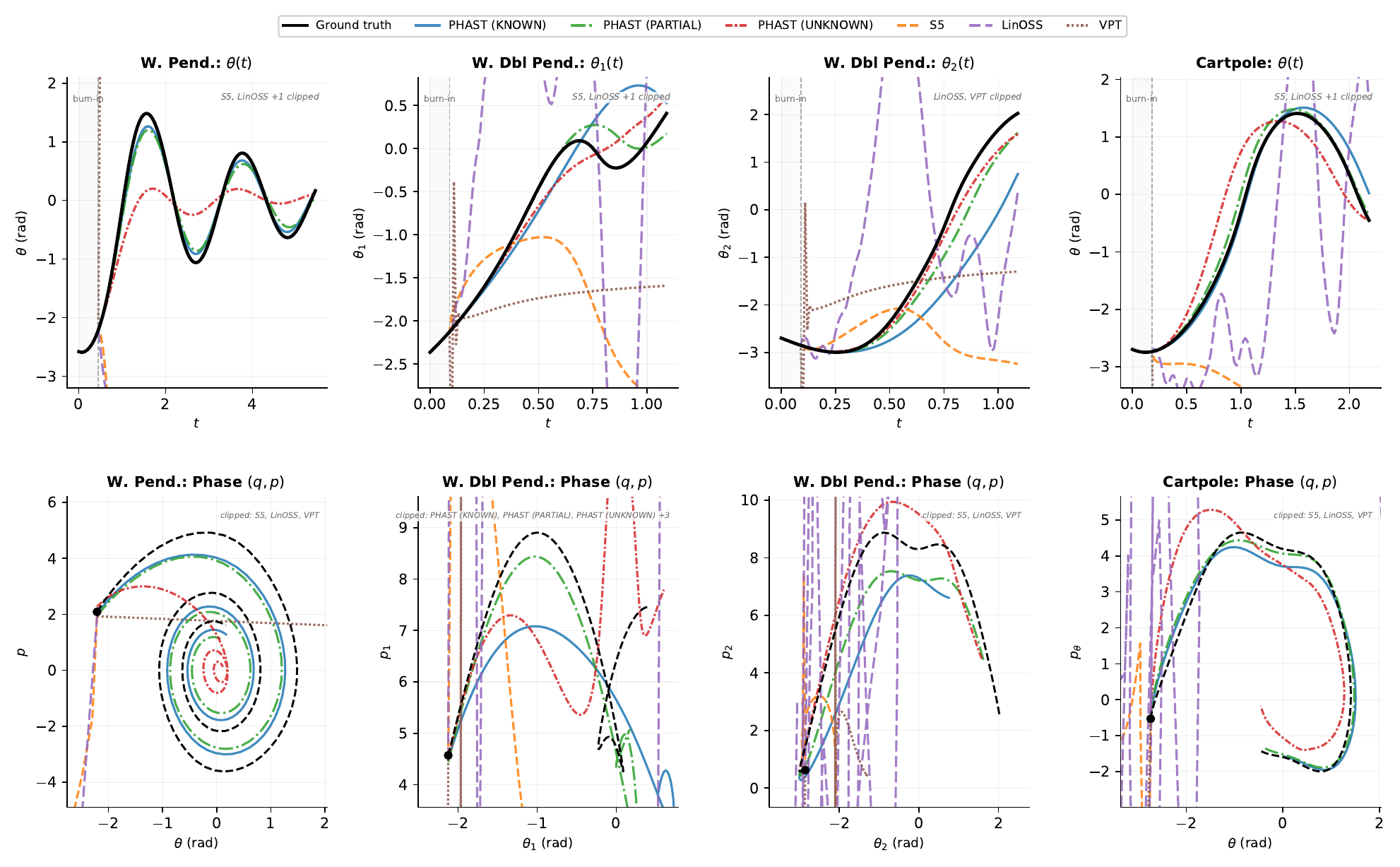}
    \caption{\textbf{Open-loop rollouts and phase-space portraits across three dissipative benchmarks (q-only).}
    \emph{Top row}: a single test trajectory is teacher-forced through a short burn-in window (grey region, vertical dashed line), then predicted open-loop for $H{=}100$ steps.
    \textbf{Column~1} (Windy Pendulum, $\theta$): the simplest system --- a single angle with position-dependent damping.
    PHAST (KNOWN/PARTIAL) tracks the decaying oscillation almost exactly; all three baselines diverge within a few periods.
    \textbf{Columns~2--3} (Windy Double Pendulum, $\theta_1$ and $\theta_2$): a chaotic 2-DOF system with coupled configuration-dependent inertia $M(q)$.
    This is the hardest benchmark: small errors grow exponentially, yet PHAST maintains trajectory coherence on both joints far longer than baselines.
    Showing both angles reveals that the model captures inter-joint coupling, not just marginal statistics.
    \textbf{Column~4} (Windy Cart-Pole, $\theta$): a mixed-manifold system ($\mathbb{R}{\times}\mathbb{S}^1$) with two qualitatively different damping mechanisms (constant cart friction $+$ angular wind damping).
    PHAST (PARTIAL) achieves the tightest tracking here.
    \emph{Bottom row}: canonical phase-space portraits $(\theta, p)$ with $p = M(q)\dot{q}$ during the open-loop segment.
    Momentum is latent in the q-only setting:
    for PHAST, $\dot{q}$ comes from the learned FD+TCN observer (Sec.~\ref{sec:methods});
    for baselines, $\dot{q}$ is approximated by finite differences.
    The closed orbits visible for PHAST confirm that the model has learned physically consistent Hamiltonian structure,
    whereas baseline phase portraits collapse or spiral outward.}
    \label{fig:gallery_rollouts_phase}
\end{figure}

\begin{table}[t]
    \centering
    \caption{\textbf{Q-only open-loop forecasting (Cart-Pole).} Mean $\pm$ std of mixed-manifold rollout MSE at horizon $H{=}100$ over 5 seeds.
    The mixed metric averages translation MSE and wrapped-angle MSE (Appendix~\ref{app:metrics}). Lower is better.}
    \label{tab:qonly_rollout_cartpole_h100}
    \small
    \begin{tabular}{@{}lrr@{}}
        \toprule
        \textbf{Model} & \textbf{Params} & \textbf{Cart-Pole (windy)} \\
        \midrule
        \multicolumn{3}{l}{\textit{PHAST (ours)}} \\
        PHAST (KNOWN)   & 3{,}589  & \best{0.063 \pm 0.019} \\
        PHAST (PARTIAL) & 14{,}283 & $0.083 \pm 0.022$ \\
        PHAST (UNKNOWN) & 14{,}290 & $0.109 \pm 0.022$ \\
        \midrule
        S5 (best baseline) & 17{,}218 & $0.431 \pm 0.077$ \\
        \bottomrule
    \end{tabular}
\end{table}

Table~\ref{tab:qonly_rollout_cartpole_h100} shows that PHAST achieves strong long-horizon stability on Windy Cart-Pole.
\ifdefined\isarxiv
Figure~\ref{fig:gallery_damping_energy} shows the learned damping profiles and energy traces across all three systems,
illustrating the forecasting--identifiability trade-off:
\fi
Table~\ref{tab:windy_identifiability} highlights the two-axis nature of the problem:
in the Windy setting, the KNOWN regime recovers a physically meaningful damping field,
PARTIAL becomes identifiable once we impose a physics-calibrated magnitude bound $\bbar$ on $\Dmat(q)$,
and UNKNOWN still exhibits a forecasting--identifiability trade-off without additional anchors (Appendix~\ref{app:metrics}).
Appendix~\ref{app:ablations} makes this trade-off explicit: Table~\ref{tab:partial_damping_cap_windy} compares bounded vs.\ unbounded damping in the PARTIAL regime,
showing that without the bound the model can achieve accurate rollouts while learning a physically meaningless damping field ($R^2_D \ll 0$).
Takeover rollouts (burn-in then integrate) can improve with better identifiability, but remain sensitive to single-shot momentum inference at the context boundary;
we therefore treat takeover as a diagnostic rather than the primary q-only comparison metric.

\ifdefined\isarxiv
% tables_qonly_suite_summary.tex — Compact suite summary (13 q-only benchmarks)

\begin{table}[t]
    \centering
    \caption{\textbf{Suite summary across thirteen q-only benchmarks.} For each benchmark, we report the best PHAST regime and the best baseline (mean $\pm$ std over 5 seeds). The first eight rows use rollout MSE at horizon $H{=}100$; the last five report next-step MSE (Tables~\ref{tab:qonly_rollout_rlc}--\ref{tab:qonly_rollout_predprey}). Gains are per-row ratios and are not directly comparable across environments.}
    \label{tab:qonly_suite_summary_h100}
    \scriptsize
    \setlength{\tabcolsep}{3pt}
    \begin{tabular}{@{}lrrr@{}}
        \toprule
        \textbf{Benchmark} & \textbf{Best PHAST} & \textbf{Best baseline} & \textbf{Gain} \\
        \midrule
        \multicolumn{4}{l}{\textit{Mechanical systems (rollout MSE at $H{=}100$)}} \\
        Pendulum (cons) & \best{0.680 \pm 0.043} (PARTIAL) & $2.320 \pm 0.224$ (Transformer) & $3.4\times$ \\
        Pendulum (damped) & \best{0.017 \pm 0.005} (KNOWN) & $0.450 \pm 0.241$ (D-LinOSS) & $26.5\times$ \\
        Pendulum (windy) & \best{0.092 \pm 0.014} (PARTIAL) & $0.435 \pm 0.239$ (D-LinOSS) & $4.7\times$ \\
        Cart-Pole (windy) & \best{0.063 \pm 0.019} (KNOWN) & $0.431 \pm 0.077$ (S5) & $6.8\times$ \\
        Oscillator (cons) & \best{0.0010 \pm 0.0002} (KNOWN/PARTIAL) & $1.087 \pm 0.299$ (Transformer) & $1.1\times 10^{3}$ \\
        Oscillator (damped) & \best{0.0011 \pm 0.0003} (PARTIAL) & $0.926 \pm 0.254$ (Transformer) & $8.4\times 10^{2}$ \\
        Double pendulum (cons) & \best{0.402 \pm 0.047} (PARTIAL) & $0.618 \pm 0.028$ (S5) & $1.5\times$ \\
        Double pendulum (damped) & \best{0.320 \pm 0.032} (PARTIAL) & $0.630 \pm 0.031$ (S5) & $2.0\times$ \\
        \midrule
        \multicolumn{4}{l}{\textit{Non-mechanical systems (next-step MSE)}} \\
        RLC circuit (KNOWN) & \best{2.63 \times 10^{-5}} (UNKNOWN) & $4.81 \times 10^{-4}$ (Transformer) & $18\times$ \\
        LJ-3 cluster (KNOWN) & \best{4.59 \times 10^{-10}} (PARTIAL) & $2.05 \times 10^{-4}$ (S5) & $4.5\times 10^{5}$ \\
        Heat exchange (KNOWN) & \best{2.42 \times 10^{-6}} (KNOWN) & $4.46 \times 10^{-4}$ (LinOSS) & $1.8\times 10^{2}$ \\
        N-body gravity (PARTIAL) & \best{4.27 \times 10^{-8}} (PARTIAL) & $1.83 \times 10^{-3}$ (Transformer) & $4.3\times 10^{4}$ \\
        Predator--prey (UNKNOWN) & \best{0.0199} (UNKNOWN) & $0.179$ (Transformer) & $9.0\times$ \\
        \bottomrule
    \end{tabular}
\end{table}

% tables_windy_identifiability.tex — Windy Pendulum identifiability + energy consistency table

\begin{table}[t]
    \centering
    \caption{\textbf{Windy Pendulum (q-only) identifiability and energy consistency.} Mean $\pm$ std over 5 seeds (same setup as Table~\ref{tab:qonly_rollout_pendulum_h100}). Baselines do not expose a damping field, so damping recovery metrics are not applicable.}
    \label{tab:windy_identifiability}
    \scriptsize
    \begin{tabular}{@{}lrrrrr@{}}
        \toprule
        \textbf{Model} & \textbf{$\mathrm{WrapMSE}_\theta^{\mathrm{roll}}(H{=}100)$} $\downarrow$ & \textbf{$R^2_D$} $\uparrow$ & \textbf{$\mathrm{MAE}_D$} $\downarrow$ & \textbf{$\mathrm{EbudRes}^{\mathrm{roll}}(H{=}100)$} $\downarrow$ & \textbf{Params} \\
        \midrule
        PHAST (KNOWN) & $0.106 \pm 0.020$ & \best{0.996 \pm 0.004} & \best{0.007 \pm 0.004} & $1.56 \pm 0.02$ & 3{,}364 \\
        PHAST (PARTIAL) & \best{0.092 \pm 0.014} & $0.654 \pm 0.077$ & $0.063 \pm 0.011$ & \best{1.50 \pm 0.04} & 13{,}736 \\
        PHAST (UNKNOWN) & $0.298 \pm 0.048$ & $-96.546 \pm 15.897$ & $1.343 \pm 0.115$ & $2.58 \pm 0.19$ & 13{,}738 \\
        \midrule
        GRU & $1.796 \pm 0.625$ & --- & --- & $12.34 \pm 6.58$ & 37{,}889 \\
        S5 & $0.600 \pm 0.047$ & --- & --- & $11.71 \pm 2.91$ & 17{,}089 \\
        LinOSS & $1.458 \pm 0.324$ & --- & --- & $56.14 \pm 37.41$ & 17{,}089 \\
        D-LinOSS & $0.435 \pm 0.239$ & --- & --- & $6.86 \pm 1.14$ & 33{,}793 \\
        Transformer & $0.824 \pm 0.134$ & --- & --- & $7.71 \pm 0.63$ & 100{,}161 \\
        VPT & $2.218 \pm 0.135$ & --- & --- & $40.48 \pm 8.04$ & 16{,}833 \\
        \bottomrule
    \end{tabular}
\end{table}

% tables_partial_damping_cap_windy.tex — PARTIAL damping cap ablation (Windy Pendulum, q-only)

\begin{table}[t]
    \centering
    \caption{\textbf{PARTIAL regime: effect of bounding total damping strength (Windy Pendulum, q-only).} Mean $\pm$ std over 5 seeds on CPU (same data protocol as Sec.~\ref{sec:experiments:setup}). Without the bound, damping becomes an ``error sink'' and is non-identifiable ($R^2_D\ll 0$) even when rollouts are accurate; bounding $\sum_{i=1}^{r}\beta_i(q)$ recovers a physically meaningful damping field.}
    \label{tab:partial_damping_cap_windy}
    \scriptsize
    \setlength{\tabcolsep}{3pt}
    \begin{tabular}{@{}lrrrr@{}}
        \toprule
        \textbf{Damping cap} &
        \textbf{WrapMSE$_\theta$(100)} $\downarrow$ &
        \textbf{$R^2_D$} $\uparrow$ &
        \textbf{MAE$_D$} $\downarrow$ &
        \textbf{EbudRes(100)} $\downarrow$ \\
        \midrule
        Unbounded &
        \best{0.059 \pm 0.007} &
        $-90.15 \pm 6.14$ &
        $1.326 \pm 0.044$ &
        \best{1.50 \pm 0.03} \\
        $\sum_i\beta_i(q)\le 0.5$ &
        $0.071 \pm 0.004$ &
        \best{0.703 \pm 0.072} &
        \best{0.061 \pm 0.010} &
        $1.58 \pm 0.04$ \\
        \bottomrule
    \end{tabular}
\end{table}

\begin{figure}[t]
    \centering
    \includegraphics[width=0.98\textwidth]{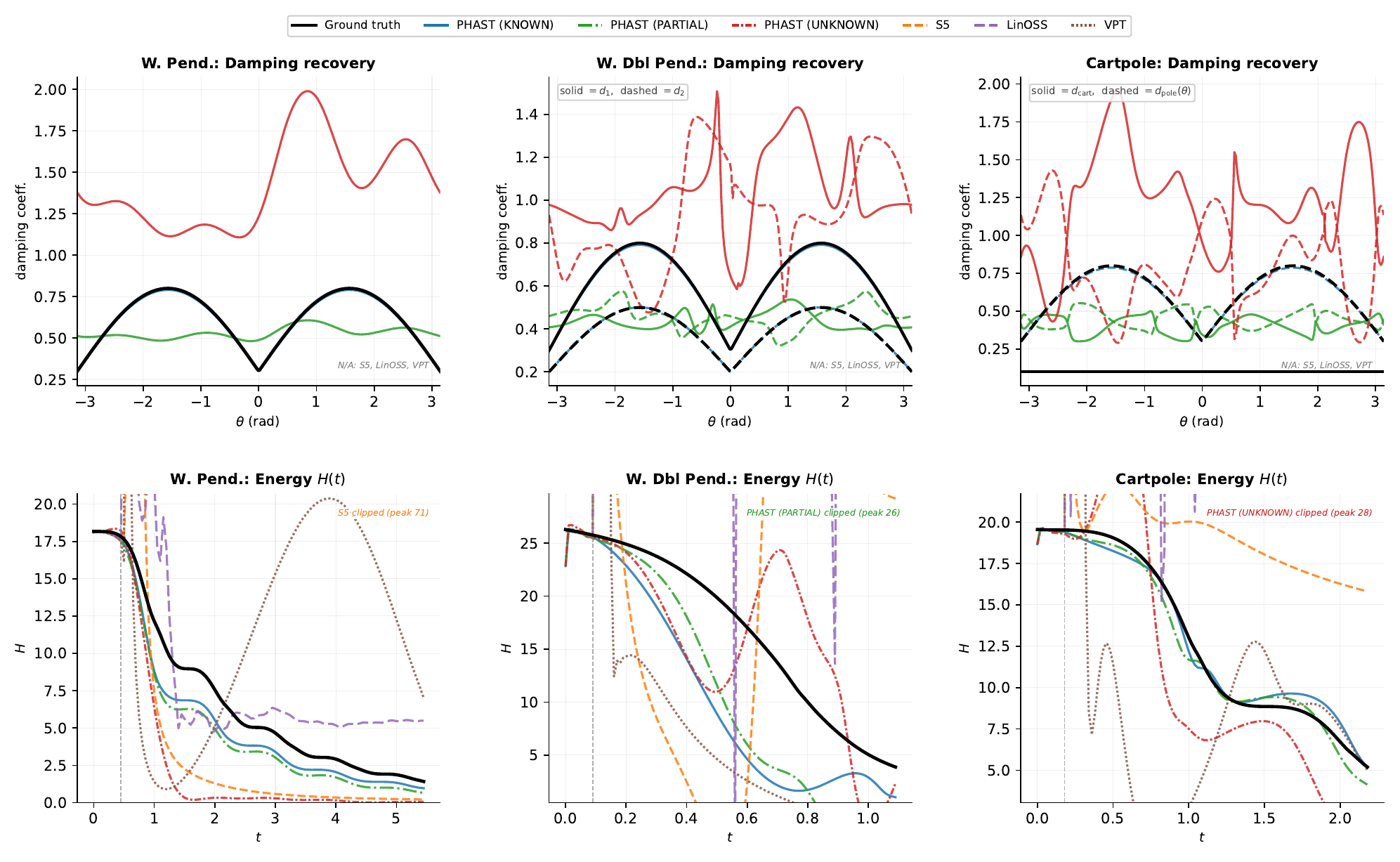}
    \caption{\textbf{Damping identifiability and energy consistency across environments (q-only).}
    \emph{Top row}: learned damping field $d(\theta)$ vs.\ ground truth (black).
    \textbf{Pendulum}: PHAST (KNOWN) recovers the sinusoidal profile $d(\theta){=}d_0{+}\Delta d\,|\!\sin\theta|$ near-exactly ($R^2_D{\approx}1$);
    PARTIAL captures the shape but with a magnitude offset, illustrating the forecasting--identifiability trade-off.
    \textbf{Double Pendulum}: solid and dashed curves show per-joint damping ($d_1$, $d_2$).
    Despite chaotic dynamics and a coupled $M(q)$, KNOWN recovers both joints' damping profiles to high accuracy.
    \textbf{Cart-Pole}: solid and dashed curves distinguish constant viscous cart friction ($d_{\mathrm{c}}$, flat) from angular wind damping on the pole ($d(\theta)$, sinusoidal) --- two qualitatively different dissipation mechanisms that PHAST disentangles from q-only data.
    UNKNOWN shows poor identifiability across all systems ($R^2_D \ll 0$; Table~\ref{tab:qonly_suite_summary_h100}).
    Baselines do not expose explicit damping fields (marked N/A).
    \emph{Bottom row}: total energy $H(t)$ during open-loop rollouts.
    In a dissipative system, energy must monotonically decrease; this is a necessary physical consistency check.
    PHAST (PARTIAL) most closely tracks the ground-truth energy decay across all three systems.
    Baselines either diverge (energy blowup) or collapse to zero, both indicating physically inconsistent dynamics.
    For PHAST, energy is computed using the observer's learned velocity; for baselines, via finite differences.}
    \label{fig:gallery_damping_energy}
\end{figure}
\fi

\paragraph{Additional studies (open-loop).}
Appendix~\ref{app:ablations} reports additional open-loop ablations (integrator choices, observer capacity, and damping constraints, including $\bar\beta$ sweeps) and a full-state comparison.

% ---------------------------------------------------------------------------
% 4.2 Closed-loop control experiments: Energy--Casimir
% ---------------------------------------------------------------------------
\subsection{Closed-loop control (Energy--Casimir stabilization)}
\label{sec:experiments:closedloop}

\paragraph{Demarcation from open-loop.}
Open-loop forecasting evaluates autonomous rollouts ($u=0$) and physical identifiability of learned fields.
Closed-loop evaluation instead probes whether the learned pH structure supports passivity-based stabilization under feedback.
In q-only settings, closed-loop performance depends critically on the quality of the online port/velocity estimate used for damping injection.

\paragraph{Scope (closed-loop).}
Throughout this subsection we keep the controller structure and gains fixed and vary only the port/velocity estimate $\hat y$ used for damping injection and (optionally) for the controller-state update.
Unless stated otherwise, the plant is the \emph{true} pendulum integrated by RK4.

\paragraph{Setup and controller.}
We evaluate stabilization on a single-degree-of-freedom pendulum plant with state $x=(q,p)$ and q-only measurements $q_t^{\mathrm{meas}}=q_t+\epsilon_t$.
The Energy--Casimir-style controller maintains an internal state $\xi$ and uses a port estimate $\hat y_t \approx y_p(t)$ (which reduces to $\hat y_t=\hat{\dot q}_t$ for the pendulum):
\begin{align}
    u_t &= -k_c(\xi_t - q^\star)\;-\;d_{\mathrm{inj}}\,\hat y_t, \\
    \xi_{t+1} &= \xi_t + \dt\Big(\hat y_t + k_\xi\,(q_t^{\mathrm{meas}}-\xi_t)\Big),
\end{align}
where the $k_\xi$ term is a predictor--corrector measurement correction that reduces $\xi$ drift under noisy q-only feedback.
When $k_\xi=0$ and $\hat y=y_p$, the update reduces to a discrete approximation of $\dot\xi=y_p$.

\paragraph{Closed-loop protocol and metrics.}
We evaluate $N$ rollouts from random initial conditions over horizon $T_{\mathrm{ctl}}$ and report:
(i) success rate (fraction of trials that converge to the target),
(ii) final wrapped-angle error,
(iii) control effort $\sum_{t=0}^{T_{\mathrm{ctl}}-1}\|u_t\|_2^2$,
and (iv) port-estimation error $\mathbb{E}_t\|\hat y_t-y_p(t)\|$ when $y_p$ is available in simulation.
Precise thresholds, horizons, and noise settings follow Appendix~\ref{app:casimir}.

\paragraph{Port-estimation variants.}
We compare five Energy--Casimir variants that share the same shaping gain $k_c$, damping-injection gain $d_{\mathrm{inj}}$, and drift-correction gain $k_\xi$,
differing only in how $\hat y_t$ is obtained online:
(i) oracle/full-state,
(ii) finite differences,
(iii) fixed-lag MAP smoothing (label-free),
(iv) FD+TCN observer trained offline with noise augmentation,
and (v) PHAST-trained observer trained only through next-step prediction.
Appendix~\ref{app:casimir} reports additional control ablations (noise mismatch, near-stable regime, and model-based velocity from learned $\hat H$).

\paragraph{Closed-loop results (summary).}
\ifdefined\isarxiv
Sec.~\ref{app:casimir} shows that PHAST models support passivity-based stabilization:
using the learned PHAST Hamiltonian to compute the port output $\hat{y}=\partial\hat{H}/\partial p$ achieves 100\% stabilization success with \emph{lower} control effort than oracle velocities (245 vs.\ 263),
indicating the learned energy landscape is accurate enough for feedback.
Under q-only sensing, the bottleneck shifts to port/velocity estimation quality rather than model accuracy;
noise-aware observers reduce control effort by ${\sim}9\times$ compared to finite differences.
\else
Appendix~\ref{app:casimir} shows that PHAST models support passivity-based stabilization:
using the learned PHAST Hamiltonian to compute the port output $\hat{y}=\partial\hat{H}/\partial p$ achieves 100\% stabilization success with \emph{lower} control effort than oracle velocities (245 vs.\ 263).
Under q-only sensing, the bottleneck shifts to port/velocity estimation quality; noise-aware observers reduce control effort by ${\sim}9\times$ compared to finite differences.
\fi

% \paragraph{Additional studies (open-loop).}
% Appendix~\ref{app:ablations} reports additional open-loop ablations (integrator choices, observer capacity, and damping constraints, including $\bar\beta$ sweeps) and a full-state comparison.

% ---------------------------------------------------------------------------
% Extended arXiv sections (promoted from the appendix)
% ---------------------------------------------------------------------------
% arxiv_extended.tex — Extra sections promoted to the main paper for arXiv
%
% Included from main_arxiv.tex before the conclusion.

% appendix_casimir.tex — Energy-Casimir control study (shared between ICML appendix and arXiv main)

% ===========================================================================
% H. Casimir Control (Separate Task)
% ===========================================================================
\section{Energy-Casimir Control}
\label{app:casimir}

Energy--Casimir control is a passivity-based approach for stabilizing port-Hamiltonian plants by interconnecting the plant with a dynamic controller and shaping a closed-loop storage function (Hamiltonian plus Casimir).
Although PHAST is evaluated primarily as an \emph{open-loop forecaster} in the main paper, port-Hamiltonian models are often used in \emph{feedback} settings.
We include this study to (i) show a minimal closed-loop wiring compatible with the port variables in Sec.~\ref{sec:methods:ph}, and (ii) isolate a practical bottleneck in q-only control: the controller requires the velocity-like port output $y_p=y^{\mathrm{port}}$ (for torque actuation, $y_p=\dot q$), which must be estimated online from noisy position measurements.

\paragraph{Closed-loop scope.}
We evaluate Energy--Casimir stabilization under \emph{q-only} feedback.
Across all control experiments we keep the controller structure and gains
$(k_c,d_{\mathrm{inj}},k_\xi)$ fixed and vary only the online port estimate
$\hat y \approx y_p$ used for damping injection (and, when stated, in the $\xi$ update).
Unless stated otherwise, the plant is the \emph{true} pendulum integrated by RK4.

\paragraph{What is a Casimir function?}
A Casimir function $\mathcal{C}(z)$ is a \emph{structural invariant} of a
port-Hamiltonian system: it satisfies $\nabla\mathcal{C}\in\ker J_{\mathrm{cl}}^\top\cap\ker R_{\mathrm{cl}}$,
so it is conserved by the interconnection structure regardless of the energy
function.
For the mechanical plant--controller pair, $\mathcal{C}(q,\xi)=q-\xi$ acts as
a ``configuration lock'': the interconnection forces the controller state $\xi$
to track the plant position $q$ at all times (Sec.~\ref{sec:casimir_forced_mode}).

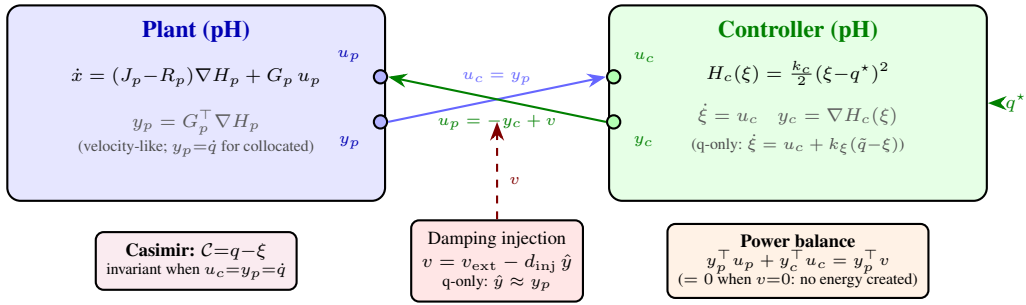
\begin{figure}[H]
\centering
\begin{tikzpicture}[
    phbox/.style={rectangle, draw, thick, rounded corners=5pt, align=center,
                  minimum height=2.6cm, text width=4.6cm, inner sep=6pt},
    port/.style={circle, draw, thick, fill=white, minimum size=5pt, inner sep=0pt},
    arrow/.style={-{Stealth[length=2.5mm]}, thick},
    annot/.style={rectangle, draw, thick, rounded corners=3pt, inner sep=4pt,
                  align=center, font=\scriptsize},
]

% =========================
% PLANT (left)
% =========================
\node[phbox, fill=blue!10] (plant) at (-4.0, 0) {};
\node[font=\small\bfseries, blue!70!black] at (-4.0, 0.95) {Plant (pH)};
\node[font=\scriptsize, align=center] at (-4.0, 0.35)
  {$\dot{x} = (\Jmat_p{-}\Rmat_p)\nabla\Ham_p + G_p\,u_p$};
\node[font=\scriptsize, align=center, gray!70!black] at (-4.0, -0.25)
  {$y_p = G_p^\top\nabla\Ham_p$};
\node[font=\tiny, gray!60!black] at (-4.0, -0.60)
  {(velocity-like; $y_p{=}\dot q$ for collocated)};

% Plant port dots (on box edge)
\node[port, fill=blue!30] (pu) at (-1.55, 0.35) {};
\node[port, fill=blue!30] (py) at (-1.55, -0.25) {};
\node[font=\tiny, blue!70!black, above left=0pt and 2pt of pu] {$u_p$};
\node[font=\tiny, blue!70!black, below left=0pt and 2pt of py] {$y_p$};

% =========================
% CONTROLLER (right)
% =========================
\node[phbox, fill=green!10] (ctrl) at (4.0, 0) {};
\node[font=\small\bfseries, green!50!black] at (4.0, 0.95) {Controller (pH)};
\node[font=\scriptsize, align=center] at (4.0, 0.40)
  {$H_c(\xi) = \tfrac{k_c}{2}(\xi{-}q^\star)^2$};
\node[font=\scriptsize, align=center, gray!70!black] at (4.0, -0.15)
  {$\dot\xi = u_c$\quad $y_c = \nabla H_c(\xi)$};
\node[font=\tiny, gray!60!black] at (4.0, -0.55)
  {(q-only: $\dot\xi = u_c + k_\xi(\tilde q{-}\xi)$)};

% Controller port dots (on box edge)
\node[port, fill=green!30] (cu) at (1.55, 0.35) {};
\node[port, fill=green!30] (cy) at (1.55, -0.25) {};
\node[font=\tiny, green!50!black, above right=0pt and 2pt of cu] {$u_c$};
\node[font=\tiny, green!50!black, below right=0pt and 2pt of cy] {$y_c$};

% Target input
\node[font=\scriptsize, green!50!black] at (6.9, 0) {$q^\star$};
\draw[arrow, green!50!black] (6.7, 0) -- (ctrl.east);

% =========================
% POWER-PRESERVING INTERCONNECTION (between boxes)
% =========================
% Top arrow: y_p --> u_c  (plant output feeds controller input)
\draw[arrow, blue!60, thick]
  (py.east) -- node[above=2pt, font=\tiny, pos=0.5] {$u_c = y_p$} (cu.west);
% Bottom arrow: y_c --> u_p  (controller output feeds plant input)
\draw[arrow, green!50!black, thick]
  (cy.west) -- node[below=2pt, font=\tiny, pos=0.5] {$u_p = -y_c + v$} (pu.east);

% =========================
% DAMPING INJECTION (below center)
% =========================
\node[annot, fill=red!10] (damping) at (0, -2.1)
  {Damping injection\\[1pt]
   $v = v_{\mathrm{ext}} - d_{\mathrm{inj}}\,\hat y$\\[-1pt]
   {\tiny q-only: $\hat y \approx y_p$}};
\draw[arrow, red!50!black, dashed] (damping.north) -- node[right=1pt, font=\tiny, red!50!black, pos=0.4] {$v$} (0, -0.25);

% =========================
% ANNOTATIONS (below)
% =========================
% Casimir annotation
\node[annot, fill=purple!8] (casimir) at (-4.0, -2.1)
  {\textbf{Casimir:} $\mathcal{C}{=}q{-}\xi$\\[-1pt]
   {\tiny invariant when $u_c{=}y_p{=}\dot q$}};

% Power balance annotation
\node[annot, fill=orange!10] (power) at (4.0, -2.1)
  {\textbf{Power balance}\\[-1pt]
   {\tiny $y_p^\top u_p + y_c^\top u_c = y_p^\top v$}\\[-1pt]
   {\tiny ($=0$ when $v{=}0$: no energy created)}};

\end{tikzpicture}
\caption{\textbf{Energy--Casimir control as port-Hamiltonian interconnection.}
The plant (blue, left) and controller (green, right) are pH systems coupled
through power ports: $u_c=y_p$ feeds plant velocity to the controller,
$u_p=-y_c+v$ returns the shaped restoring force plus an auxiliary channel $v$.
When $v{=}0$ the cross-power cancels ($y_p^\top u_p + y_c^\top u_c = 0$),
and the Casimir $\mathcal{C}=q-\xi$ is invariant.
Damping injection $v=-d_{\mathrm{inj}}\hat y$ (red, dashed) ensures
$\dot H_{\mathrm{cl}}\le 0$; in q-only settings $\hat y\approx y_p$ is
estimated, so estimation error can break exact power balance.}
\label{fig:casimir_control_structure}
\end{figure}

Having established the theoretical framework in Sec.~\ref{sec:casimir_forced_mode}, we now evaluate it experimentally.
All experiments use the forced-dynamics mode of the Energy--Casimir controller (Eqs.~\ref{eq:forced_u_disc}--\ref{eq:forced_xi_disc}) on a single-degree-of-freedom pendulum.
For the pendulum with collocated torque actuation, $y_p=\dot q$, so the port estimate reduces to $\hat y=\hat{\dot q}$.

\paragraph{MAP smoother observer (label-free).}
For the MAP smoother baseline, we estimate $\hat{\dot{q}}_t$ by MAP smoothing over a fixed-lag window of length $w$ (with $w\ge 2$) of q-only measurements $q_{t-w+1:t}^{\mathrm{meas}}$.
We unwrap the angles within the window (integrating wrapped increments) and solve the convex quadratic problem
\begin{align}
    q^{\mathrm{smooth}} &\in \arg\min_{q}\;
    \sum_{s=0}^{w-1}\frac{1}{\sigma^2}\,\big\|q_s-q^{\mathrm{meas}}_s\big\|_2^2
    \notag\\
    &\quad+\;
    \sum_{s=0}^{w-3}\frac{1}{\dt^4\sigma_a^2}\,\big\|q_{s+2}-2q_{s+1}+q_s\big\|_2^2,
\end{align}
which corresponds to a Gaussian measurement model and a Gaussian prior on discrete acceleration.
We then take a backward difference on the smoothed sequence:
\begin{align}
    \hat{\dot{q}}_t = \frac{q^{\mathrm{smooth}}_{w-1}-q^{\mathrm{smooth}}_{w-2}}{\dt}.
\end{align}
This baseline requires only $q^{\mathrm{meas}}$ and the assumed noise scales $(\sigma,\sigma_a)$; it does not use $\dot{q}$ supervision.
Unless otherwise stated, we fix $\sigma_a=10.0$ for all MAP results.

\paragraph{Main results.}
Table~\ref{tab:casimir_main} summarizes the overall results under (i) no measurement noise and (ii) moderate q-only measurement noise $\sigma=0.01$.
Under noise, finite differences incur large velocity error and increased control effort; the learned FD+TCN observer reduces velocity error by $\sim$6--7$\times$ while preserving convergence.
The PHAST-trained observer also preserves convergence at $\sigma=0.01$, but its velocity estimate remains noisy (closer to finite differences), which increases damping-injection effort near the target.
A simple MAP smoother provides a competitive label-free baseline, reducing velocity error and final error without requiring $\dot{q}$ supervision.
This emphasizes that closed-loop performance hinges on the quality of the port output estimate $y=\dot{q}$.

\begin{table}[t]
    \centering
    \caption{\textbf{Energy--Casimir control on the true pendulum (q-only sensing).} Mean over $100$ trials (5 regimes $\times$ 20). All methods use the same predictor--corrector $\xi$ update ($k_\xi=5$).}
    \label{tab:casimir_main}
    \small
    \begin{tabular}{@{}rllrrrr@{}}
        \toprule
        $\sigma$ & Method & Velocity estimate & Success $\uparrow$ & Final error $\downarrow$ & Effort $\downarrow$ & $|\hat{\dot{q}}-\dot{q}|$ $\downarrow$ \\
        \midrule
        0.00 & Oracle & oracle & 1.00 & $1.31\times 10^{-4}$ & 262.9 & 0.000 \\
        0.00 & Finite differences & FD & 1.00 & $1.21\times 10^{-4}$ & 268.6 & 0.007 \\
        0.00 & MAP smoother & MAP smoother & 1.00 & $1.16\times 10^{-4}$ & 268.6 & 0.007 \\
        0.00 & FD+TCN observer & FD+TCN & 1.00 & $1.03\times 10^{-3}$ & 270.7 & 0.008 \\
        0.00 & PHAST-trained observer & PHAST observer & 1.00 & $5.71\times 10^{-3}$ & 272.4 & 0.019 \\
        \midrule
        0.01 & Oracle & oracle & 1.00 & $5.49\times 10^{-3}$ & 262.9 & 0.000 \\
        0.01 & Finite differences & FD & 1.00 & $1.67\times 10^{-2}$ & 342.2 & 1.117 \\
        0.01 & MAP smoother & MAP smoother & 1.00 & $1.17\times 10^{-2}$ & 324.8 & 0.133 \\
        0.01 & FD+TCN observer & FD+TCN & 1.00 & $2.39\times 10^{-2}$ & 335.5 & 0.172 \\
        0.01 & PHAST-trained observer & PHAST observer & 1.00 & $1.93\times 10^{-2}$ & 327.7 & 0.979 \\
        \bottomrule
    \end{tabular}
\end{table}

\paragraph{Model-based velocity from a learned PHAST model (full-state).}
To isolate the effect of \emph{model mismatch} from partial observability, we also evaluate a controller that uses a learned PHAST Hamiltonian to form the port output estimate $\hat{y}=\partial \hat{H}/\partial p$ from the \emph{observed} state $(q,p)$.
Table~\ref{tab:casimir_phast_fullstate} shows that this model-based velocity estimate is accurate enough to preserve stability, supporting the use of PHAST-learned dynamics for control when full state is available.

\begin{table}[t]
    \centering
    \caption{\textbf{Energy--Casimir with PHAST-based velocity (q,p observed).} Mean over $100$ trials (5 regimes $\times$ 20). Both methods use the same predictor--corrector $\xi$ update ($k_\xi=5$).}
    \label{tab:casimir_phast_fullstate}
    \small
    \begin{tabular}{@{}rllrrrr@{}}
        \toprule
        $\sigma$ & Method & Velocity estimate & Success $\uparrow$ & Final error $\downarrow$ & Effort $\downarrow$ & $|\hat{\dot{q}}-\dot{q}|$ $\downarrow$ \\
        \midrule
        0.00 & Oracle & oracle & 1.00 & $1.31\times 10^{-4}$ & 262.9 & 0.000 \\
        0.00 & PHAST model-based & $\partial \hat{H}/\partial p$ & 1.00 & $5.46\times 10^{-5}$ & 245.0 & 0.011 \\
        \midrule
        0.01 & Oracle & oracle & 1.00 & $5.49\times 10^{-3}$ & 262.9 & 0.000 \\
        0.01 & PHAST model-based & $\partial \hat{H}/\partial p$ & 1.00 & $5.32\times 10^{-3}$ & 245.1 & 0.011 \\
        \bottomrule
    \end{tabular}
\end{table}

\paragraph{Near-target efficiency under noisy q-only feedback.}
The strongest effect of learning a noise-aware observer is visible near the target, where finite differences can inject high-frequency damping torques.
Table~\ref{tab:casimir_near_stable} reports the near-stable regime ($20$ trials) under $\sigma=0.01$.

\begin{table}[t]
    \centering
    \caption{\textbf{Near-stable regime ($\sigma=0.01$).} Learning a noise-aware q-only observer reduces control effort near the target by $\sim$9$\times$ vs. finite differences.}
    \label{tab:casimir_near_stable}
    \small
\begin{tabular}{@{}lrrr@{}}
        \toprule
        Method & Final error $\downarrow$ & Effort $\downarrow$ & $|\hat{\dot{q}}-\dot{q}|$ $\downarrow$ \\
        \midrule
        Oracle & 0.0041 & 6.3 & 0.000 \\
        Finite differences & 0.0170 & 80.1 & 1.117 \\
        MAP smoother & 0.0133 & 10.3 & 0.095 \\
        FD+TCN observer & 0.0137 & 8.6 & 0.130 \\
        PHAST-trained observer & 0.0188 & 54.9 & 0.911 \\
        \bottomrule
    \end{tabular}
\end{table}

\paragraph{Observer-noise match ablation (limiting factor).}
To isolate the key failure mode under noisy q-only feedback, we train the FD+TCN observer under different noise distributions and evaluate at $\sigma=0.01$.
Table~\ref{tab:casimir_noise_match} shows that \emph{noise-matching} (or a narrow range anchored to deployment noise) is the dominant factor: training the observer at $\sigma=0$ and deploying at $\sigma=0.01$ recovers neither accurate velocities nor low-effort behavior.

\begin{table}[t]
    \centering
    \caption{\textbf{Observer training noise vs. evaluation noise ($\sigma_{\text{eval}}=0.01$), near-stable regime.} Q-only FD+TCN observer used inside the Energy--Casimir controller.}
    \label{tab:casimir_noise_match}
    \small
    \begin{tabular}{@{}lrrr@{}}
        \toprule
        Observer train noise & Final error $\downarrow$ & Effort $\downarrow$ & $|\hat{\dot{q}}-\dot{q}|$ $\downarrow$ \\
        \midrule
        $\sigma_{\text{train}}=0.00$ (mismatch) & 0.0168 & 82.4 & 1.134 \\
        $\sigma_{\text{train}}=0.01$ (matched) & 0.0120 & 8.5 & 0.132 \\
        $\sigma_{\text{train}}\sim\mathcal{U}[0,0.01]$ (anchored range) & 0.0137 & 8.6 & 0.130 \\
        \bottomrule
    \end{tabular}
\end{table}

\paragraph{Takeaway.}
These experiments support a simple conclusion: in the q-only setting, closed-loop Energy--Casimir behavior is primarily limited by the \emph{velocity estimate} used to form the port output $y=\dot{q}$.
Noise-aware observer training is therefore a prerequisite for meaningful closed-loop validation under partial observability.

\paragraph{High-noise stress test ($\sigma=0.05$).}
We additionally stress-test q-only feedback at higher measurement noise ($\sigma=0.05$).
The oracle-velocity controller remains stable at 100\% success, indicating that the controller structure itself is not the limiting factor.
In contrast, q-only finite differences fail completely, while a noise-conditioned FD+TCN observer recovers partial stability.
This setting remains challenging and is best viewed as a supplementary robustness probe rather than a main result.

\begin{table}[t]
    \centering
    \caption{\textbf{High-noise stress test ($\sigma=0.05$).} Mean over $100$ trials. For the FD+TCN controller we use a noise-conditioned observer trained with $\sigma_{\text{train}}\sim\mathcal{U}[0,0.05]$.}
    \label{tab:casimir_sigma005}
    \scriptsize
    \setlength{\tabcolsep}{3pt}
    \begin{tabular}{@{}llrrrr@{}}
        \toprule
        Method & Velocity estimate & Success $\uparrow$ & Final error $\downarrow$ & Effort $\downarrow$ & $|\hat{\dot{q}}-\dot{q}|$ $\downarrow$ \\
        \midrule
        Oracle & oracle & 1.00 & $2.75\times 10^{-2}$ & 263.6 & 0.000 \\
        Finite differences & FD & 0.00 & $8.36\times 10^{-2}$ & 2109.0 & 5.582 \\
        FD+TCN observer & FD+TCN (noise) & 0.26 & $7.70\times 10^{-2}$ & 396.6 & 0.659 \\
        \bottomrule
    \end{tabular}
\end{table}

\begin{figure}[H]
    \centering
    \includegraphics[width=0.95\linewidth]{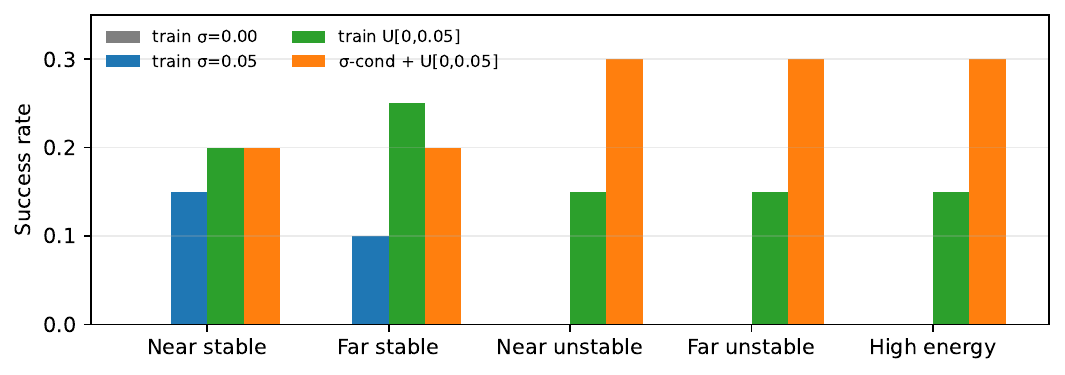}
    \caption{\textbf{Success rate breakdown at $\sigma=0.05$ for q-only FD+TCN control.} Each bar reports success over 20 trials for the given initial-condition regime, varying only the observer training noise distribution.}
    \label{fig:casimir_sigma005_success}
\end{figure}

% ---------------------------------------------------------------------------
% 5. Conclusion
% ---------------------------------------------------------------------------
\section{Conclusion}
\label{sec:conclusion}

We presented PHAST, a port-Hamiltonian framework for learning dissipative dynamics that unifies three knowledge regimes---KNOWN, PARTIAL, and UNKNOWN---within a single architecture.
The key technical contributions are low-rank parameterizations that guarantee $\Dmat(q) \succeq 0$ and $\Mmat(q) \succ 0$ by construction (Sec.~\ref{sec:methods:householder}), and optional damping-strength constraints that can prevent learned dissipation from absorbing model mismatch.

Our main empirical finding is that \textbf{forecasting and identifiability are distinct objectives} that require explicit two-axis evaluation.
Without damping bounds, models can achieve reasonable rollouts while learning physically meaningless parameters ($R^2 < 0$).
On Windy Pendulum (q-only), PHAST (PARTIAL) improves 100-step rollout $\theta$-wrap MSE from 0.435 (best baseline) to 0.092, while PHAST (KNOWN) recovers the true damping with $R^2=0.996$.
This supports the broader takeaway that, on these q-only benchmarks, structural priors can matter more than additional capacity for long-horizon stability.

\paragraph{Limitations.}
Our evaluation is primarily in the q-only setting, so performance depends on inferring a momentum-like latent from a short context window and can degrade under severe sensor noise or partial observability; scaling PHAST to high-DOF robotics and continuum systems remains open.
In PARTIAL/UNKNOWN regimes, physical recovery and discrete-time stability are not guaranteed without calibrated anchors (e.g., damping bounds) and appropriate step sizes (Appendix~\ref{app:ablations}; Appendix~\ref{app:math}).

\paragraph{Future work.}
Promising directions include: (i) extending PHAST to more complex dynamical systems such as multi-body and continuum systems; (ii) learning damping bounds from data rather than requiring physics calibration; (iii) integrating with online estimation and closed-loop planning (e.g., model-predictive control); (iv) extending our Energy--Casimir results (Sec.~\ref{app:casimir}) to stabilization at non-natural equilibria and multi-DOF systems; and (v) test-time adaptation of physics components on the burn-in window, leveraging PHAST's modular regime structure for instance-specific tuning of $(V, \Mmat, \Dmat)$ via meta-learned initializations.

% ---------------------------------------------------------------------------
% Acknowledgements
% ---------------------------------------------------------------------------
\section*{Acknowledgements}

This research was supported in part by the Peter O'Donnell Foundation, the Jim Holland--Backcountry Foundation, and in part by a grant from the Army Research Office accomplished under Cooperative Agreement Number W911NF-19-2-0333.

% ---------------------------------------------------------------------------
% Impact Statement (kept for consistency with ICML style)
% ---------------------------------------------------------------------------
\section*{Impact Statement}

This paper presents work whose goal is to advance the field of Machine Learning by developing physics-informed neural architectures that respect fundamental physical principles such as energy dissipation. Our framework enables more accurate and interpretable modeling of physical systems---including mechanical, electrical, molecular, thermal, gravitational, and ecological dynamics---with potential applications in robotics, simulation, scientific computing, and beyond. There are many potential societal consequences of our work, none which we feel must be specifically highlighted here.

% ---------------------------------------------------------------------------
% References
% ---------------------------------------------------------------------------
\bibliography{references}
\bibliographystyle{icml2026}

% ---------------------------------------------------------------------------
% Appendix (still included; promoted sections are conditionally omitted)
% ---------------------------------------------------------------------------
\newpage
\appendix
\onecolumn
% appendix.tex — Supplementary material for PHAST
% Included from main.tex via \input{appendix} after \appendix

% ===========================================================================
% A. Mathematical Details
% ===========================================================================
\section{Mathematical Details}
\label{app:math}

\paragraph{Organization.}
Appendix~\ref{app:math} provides supplementary mathematical details (passivity, discrete-time energy under splitting, and Woodbury identities).
Appendix~\ref{app:environments} describes the benchmark systems and their Hamiltonians/damping laws.
Appendix~\ref{app:arch} and Appendix~\ref{app:potentials} detail architectural components and parameterizations.
Appendix~\ref{app:training} summarizes losses and hyperparameters.
Appendix~\ref{app:metrics} defines evaluation metrics and q-only diagnostics.
\ifdefined\isarxiv
Sec.~\ref{app:tables:qonly_rollout} and Appendix~\ref{app:ablations} collect additional tables and ablations.
Finally, Appendix~\ref{app:notation} lists notation, and Sec.~\ref{app:casimir} reports a separate closed-loop Energy--Casimir control study.
\else
Appendix~\ref{app:tables:qonly_rollout} and Appendix~\ref{app:ablations} collect additional tables and ablations.
Finally, Appendix~\ref{app:notation} lists notation and Appendix~\ref{app:casimir} reports a separate closed-loop Energy--Casimir control study.
\fi

\subsection{Passivity Proof}
\label{app:math:passivity}

\begin{theorem}[Energy Balance and Passivity]
Consider port-Hamiltonian dynamics with input
\begin{equation}
  \dot{x} = (\Jmat - \Rmat)\grad\Ham(x) + G u,
  \qquad x=(q,p),
\end{equation}
and define the conjugate port output $y^{\mathrm{port}} := G^\T \grad \Ham(x)$ (Sec.~\ref{sec:methods:ph}; not to be confused with the q-only observation $y_t=q_t$).
Then the energy balance is
\begin{equation}
  \frac{d\Ham}{dt} = -(\grad\Ham)^\T \Rmat(\grad\Ham) + {y^{\mathrm{port}}}^\T u
  \le {y^{\mathrm{port}}}^\T u.
\end{equation}
For mechanical systems with $\Rmat=\mathrm{diag}(0,\Dmat(q))$ and $\Dmat(q)\succeq 0$, this reduces to
\begin{equation}
  \frac{d\Ham}{dt} = -v^\T \Dmat(q)\,v + {y^{\mathrm{port}}}^\T u \le {y^{\mathrm{port}}}^\T u,
  \qquad v := \frac{\partial \Ham}{\partial p}\; (= \Mmat^{-1}p\ \text{for our separable Hamiltonian}).
\end{equation}
In particular, the unforced system ($u=0$) is passive with nonincreasing energy.
\end{theorem}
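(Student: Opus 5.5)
The plan is a direct chain-rule computation along trajectories, followed by the skew-symmetry and positive-semidefiniteness arguments already sketched in the introduction. Let $x(t)$ be a ($C^1$) solution and $\Ham$ be $C^1$. First compute
\[
\frac{d\Ham}{dt} = \grad\Ham(x)^\T\dot x = \grad\Ham^\T\Jmat\grad\Ham - \grad\Ham^\T\Rmat\grad\Ham + \grad\Ham^\T G u .
\]
Then handle the three terms in turn.

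\textbf{Conservative term.} Because $\Jmat=-\Jmat^\T$, the scalar $a:=z^\T\Jmat z$ satisfies $a=a^\T=z^\T\Jmat^\T z=-a$. Hence $a=0$. This holds pointwise, even if $\Jmat$ depends on $x$.

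\textbf{Port term.} Rewrite it as $(G^\T\grad\Ham)^\T u={y^{\mathrm{port}}}^\T u$, using the definition of the port output.

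\textbf{Dissipative term.} Since $\Rmat\succeq 0$, we have $-\grad\Ham^\T\Rmat\grad\Ham\le 0$. This gives the identity and the inequality $\frac{d\Ham}{dt}\le {y^{\mathrm{port}}}^\T u$.

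\textbf{Mechanical specialization.} Write $\grad\Ham=(\grad_q\Ham,\grad_p\Ham)$ and $\Rmat=\mathrm{diag}(0,\Dmat(q))$. The block structure gives $\grad\Ham^\T\Rmat\grad\Ham=v^\T\Dmat(q)v$ with $v=\partial\Ham/\partial p$. It remains to note that $\Rmat\succeq0$ follows from $\Dmat(q)\succeq 0$: for $z=(z_q,z_p)$ we get $z^\T\Rmat z=z_p^\T\Dmat z_p\ge0$. For the separable Hamiltonian $\Ham=V(q)+\tfrac12 p^\T\Mmat^{-1}p$ with $\Mmat$ symmetric positive definite, $\grad_p\Ham=\Mmat^{-1}p$. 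This uses the symmetry of $\Mmat^{-1}$.

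\textbf{Unforced case.} Setting $u=0$ gives $\dot\Ham\le 0$. Integrating from $0$ to $t$ yields $\Ham(x(t))\le\Ham(x(0))$, i.e.\ passivity with $\Ham$ as the storage function. To match the textbook definition of passivity, I would also note that the integrated form $\Ham(x(t))-\Ham(x(0))\le\int_0^t {y^{\mathrm{port}}}^\T u\,ds$ holds in the forced case. Storage functions are usually required to be bounded below, which holds for the benchmarks (with $V$ bounded below); I would state this as a mild assumption.

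\textbf{Main obstacle.} Nothing here is technically hard. The only care needed is regularity: existence of $C^1$ trajectories, and differentiability of $\Ham$ and of the state-dependent $\Jmat,\Rmat,G$. I would state that regularity as a standing assumption.
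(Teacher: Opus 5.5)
Your proposal is correct and follows the paper's proof essentially line for line: expand $\frac{d\Ham}{dt}=\grad\Ham^\T\dot x$, kill the $\Jmat$ term by skew-symmetry, identify the port term as ${y^{\mathrm{port}}}^\T u$, and use the block structure $\Rmat=\mathrm{diag}(0,\Dmat(q))$ with $\Dmat(q)\succeq 0$ to get $-v^\T\Dmat(q)v\le 0$. Your extra remarks are sound refinements that the paper leaves implicit: deriving $\Rmat\succeq 0$ from $\Dmat(q)\succeq 0$, the integrated dissipation inequality, and the regularity and lower-boundedness assumptions.
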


\begin{proof}
Expanding the time derivative of the Hamiltonian:
\begin{align}
    \frac{d\Ham}{dt} &= \grad\Ham^\T \dot{x}
    = \grad\Ham^\T (\Jmat - \Rmat)\grad\Ham + \grad\Ham^\T G u \\
    &= \underbrace{\grad\Ham^\T \Jmat \grad\Ham}_{= 0 \text{ (skew-sym)}} - \grad\Ham^\T \Rmat \grad\Ham \\
    &\quad + \underbrace{(G^\T \grad\Ham)^\T u}_{={y^{\mathrm{port}}}^\T u}.
\end{align}
With the block structure $\Rmat = \mathrm{diag}(0, \Dmat(q))$ and $\grad_p\Ham = v$:
\begin{equation}
    \frac{d\Ham}{dt} = -v^\T \Dmat(q)\,v + {y^{\mathrm{port}}}^\T u \le {y^{\mathrm{port}}}^\T u,
\end{equation}
since $\Dmat(q) \succeq 0$ by construction.
\end{proof}

\subsection{Energy Budget Under Strang Splitting}
\label{app:math:strang}

\begin{proposition}
Let $\Phi_H^{\dt}$ denote the (exact) flow of the conservative subsystem and $\Phi_D^{\dt}$ the (exact) flow of the dissipative subsystem with $q$ held fixed.
Under Strang splitting $\Phi_{\dt} = \Phi_D^{\dt/2} \circ \Phi_H^{\dt} \circ \Phi_D^{\dt/2}$ with sufficiently small $\dt$:
\begin{equation}
    \Ham(q^+, p^+) - \Ham(q, p)
    = -\int_{0}^{\dt} v(t)^\T \Dmat(q(t))\,v(t)\,dt
    = -\dt \cdot v^\T \Dmat(q)\,v + O(\dt^2),
\end{equation}
where $v(t)=\Mmat^{-1}p(t)$ and the $O(\dt^2)$ term reflects variation of $(q(t),v(t))$ over the step.
\end{proposition}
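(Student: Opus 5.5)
Under Strang splitting the energy change over one step equals the sum of the energy changes over the three substeps, since $\Ham$ is a function of state alone. Write the step as $(q,p)\mapsto(q,p_1)\mapsto(q^+,p_2)\mapsto(q^+,p^+)$. Let $(q_0,p_0)=(q,p)$ be the starting point of the first damping half-step, $(q,p_1)$ the start of the conservative flow, and $(q^+,p_2)$ the start of the last damping half-step. The plan is to bound each piece separately, as follows.

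\textbf{The conservative substep.} The exact Hamiltonian flow $\Phi_H^{\dt}$ preserves $\Ham$, because $\grad\Ham^\T\Jmat\grad\Ham=0$; this is the $\Rmat=0$, $u=0$ case of the Energy Balance and Passivity theorem. Hence $\Ham(q^+,p_2)=\Ham(q,p_1)$ exactly, and all the energy change comes from the two damping half-steps.

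\textbf{The damping substeps.} On the dissipative subsystem $q$ is frozen and $\dot p=-\Dmat(q)\Mmat^{-1}p$. Along this flow
\[
\frac{d\Ham}{ds}=\grad_p\Ham^\T\dot p=-v(s)^\T\Dmat(q)\,v(s)\le 0 ,
\]
which is again an instance of the passivity theorem with $\dot q=0$. Integrating over each half-step gives
\[
\Ham(q^+,p^+)-\Ham(q,p)=-\int_0^{\dt/2}v^\T\Dmat(q)v\,ds-\int_0^{\dt/2}v^\T\Dmat(q^+)v\,ds .
\]
We then parametrize the composite trajectory by a single time variable $t\in[0,\dt]$, letting the first damping half-step occupy $[0,\dt/2]$ with $q(t)=q$ and the second occupy $[\dt/2,\dt]$ with $q(t)=q^+$. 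With this parametrization the sum of the two integrals can be written as $-\int_0^{\dt}v(t)^\T\Dmat(q(t))v(t)\,dt$. That is the first equality, with $(q(t),v(t))$ understood as this piecewise path.

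\textbf{The $O(\dt^2)$ estimate.} We use smoothness. Assume $\Dmat$ is $C^1$ and $V,\Mmat$ are smooth on a compact set containing the trajectory, which the "sufficiently small $\dt$" hypothesis provides. Then on $[0,\dt]$ the path satisfies $|q(t)-q|=O(\dt)$, since $q^+-q=\dt\,\Mmat^{-1}p_1+O(\dt^2)$. It also satisfies $|v(t)-v|=O(\dt)$, since each flow has a bounded vector field. The integrand therefore equals $v^\T\Dmat(q)v+O(\dt)$ uniformly in $t$. Integrating over an interval of length $\dt$ gives $-\dt\,v^\T\Dmat(q)v+O(\dt^2)$.

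\textbf{Main obstacle.} The delicate point is the first equality. It is exact only under the stitched-path interpretation above. The integral is \emph{not} taken along the true dissipative trajectory: the two damping pieces act at frozen $q$ and $q^+$, and the conservative flow, on which $q$ does move, contributes zero. Beyond that, the proof needs bounded derivatives of $\Dmat$, $\grad V$ and $\Mmat^{-1}$, so that the $O(\cdot)$ constants are uniform. The argument also assumes exact subflows. For the leapfrog core of Algorithm~\ref{alg:phast_step} an extra $O(\dt^3)$ local energy error enters, which is still absorbed into the $O(\dt^2)$ term.
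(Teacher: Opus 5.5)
Your proposal is correct. The paper gives no proof of this proposition: it states it and moves straight to a remark on the explicit damping update. The only route it hints at is in the phrase about ``variation of $(q(t),v(t))$'': integrate the identity $d\Ham/dt=-v^\T\Dmat(q)v$ over the step, then Taylor-expand the integrand. Your argument carries this out rigorously. You split the step into its three substeps, use exact conservation of $\Ham$ under $\Phi_H^{\dt}$ and the rate $-v^\T\Dmat(q)v$ on each frozen-$q$ damping flow, and then bound the integrand to $O(\dt)$.

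You also point out that the first equality is exact only along the stitched piecewise path. Along the true dissipative trajectory it holds only up to the $O(\dt^3)$ local splitting error. The paper leaves this ambiguity unaddressed. Either reading gives the same final estimate $-\dt\,v^\T\Dmat(q)v+O(\dt^2)$.

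Two small refinements. First, Lipschitz continuity of $\Dmat$ already gives the $O(\dt)$ bound on the integrand, so $C^1$ is more than you need. This matters if you want the argument to cover the windy law $d_0+\Delta d\,|\sin\theta|$, which is not $C^1$. Second, your closing comment on Algorithm~\ref{alg:phast_step} misses that its damping half-steps are explicit Euler rather than exact flows. The paper's remark shows each one contributes an extra $\tfrac{h^2}{2}v^\T\Dmat\Mmat^{-1}\Dmat v$ with $h=\dt/2$. This is still absorbed into the $O(\dt^2)$ term, but it breaks the exact integral identity for the implemented map.
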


\paragraph{Remark (discrete-time dissipation).}
Our dissipative half-step uses an explicit (Euler) update on the momentum dynamics $\dot p=-\Dmat(q)\,v$:
\begin{equation}
    p \leftarrow p - \frac{\dt}{2}\,\Dmat(q)\,v, \qquad v = \Mmat^{-1}p.
\end{equation}
Since $q$ is fixed in this half-step, only the kinetic energy changes. Writing $h=\dt/2$, a direct expansion gives
\begin{equation}
  \Ham(q,p^+) - \Ham(q,p)
  =
  -h\,v^\T\Dmat(q)\,v
  + \frac{h^2}{2}\,v^\T \Dmat(q)\,\Mmat^{-1}\,\Dmat(q)\,v,
\end{equation}
so discrete-time energy monotonicity is step-size dependent.
In particular, a sufficient condition for $\Ham(q,p^+)\le \Ham(q,p)$ for all $v$ is
\begin{equation}
  s := \frac{\dt}{2}\,\frac{\lambda_{\max}(\Dmat(q))}{\lambda_{\min}(\Mmat)} \le 2,
\end{equation}
which can fail in stiff regimes.
Bounding $\sum_{i=1}^{r} \beta_i(q) \le \bbar$ (where $r$ is the number of rank-1 terms; Eq.~\ref{eq:damping_bound}) controls $\lambda_{\max}(\Dmat)$ and improves stability in practice.
Separately, our conservative map $\Phi_H^{\dt}$ is implemented with symplectic integrators (leapfrog / implicit midpoint), so the true Hamiltonian need not be exactly preserved even when the damping half-step is stable; we therefore treat passivity violations and energy-budget residuals as empirical diagnostics of the net discrete-time behavior.

\subsection{Woodbury Identity for Mass Inverse}
\label{app:math:woodbury}

For the low-rank mass $\Mmat = \Lambda + UU^\T$ where $\Lambda = \mathrm{diag}(d)$ and $U \in \R^{n \times r}$:
\begin{equation}
    \Mmat^{-1} = \Lambda^{-1} - \Lambda^{-1}U\,(I_r + U^\T\Lambda^{-1}U)^{-1}\,U^\T\Lambda^{-1}.
\end{equation}
The $r \times r$ matrix $(I_r + U^\T\Lambda^{-1}U)$ is inverted once; total cost is $O(nr^2)$ instead of $O(n^3)$.
More precisely, the dominant terms are $O(nr^2 + r^3)$ (with $r \ll n$ in all our experiments).

For the log-determinant (needed if extending to configuration-dependent/Riemannian Hamiltonians):
\begin{equation}
    \log|\Mmat| = \log|\Lambda| + \log|I_r + U^\T\Lambda^{-1}U|,
\end{equation}
where the first term is $O(n)$ and the second is $O(r^3)$.

\subsection{Spectral Properties of the Linearized System}
\label{app:math:spectrum}

Consider the linearized port-Hamiltonian dynamics $\dot x = (\Jmat - \Rmat)Q\,x$, where
$Q = Q^\T \succ 0$ is the Hessian of $\Ham$ at an equilibrium and $\Rmat \succeq 0$ is PSD.
Since $\Jmat$ is skew-symmetric, we have
\begin{equation}
  A := (\Jmat - \Rmat)\,Q.
\end{equation}
The eigenvalues of $A$ lie in the closed left half-plane,
i.e., $\mathrm{Re}(\lambda_i) \le 0$ for all $i$.

\begin{proof}
Let $(\lambda, z)$ be an eigenpair of $A$, so $(\Jmat - \Rmat)Qz = \lambda z$.
Define $w = Q^{1/2}z$ (well-defined since $Q \succ 0$), so
$Q^{1/2}(\Jmat - \Rmat)Q^{1/2}w = \lambda w$.
Then
$\mathrm{Re}(\lambda) = \mathrm{Re}(w^*Q^{1/2}(\Jmat - \Rmat)Q^{1/2}w) / (w^*w)$.
Since $w^*Q^{1/2}\Jmat Q^{1/2}w$ is purely imaginary (by skew-symmetry),
$\mathrm{Re}(\lambda) = -w^*Q^{1/2}\Rmat Q^{1/2}w / (w^*w) \le 0$
since $\Rmat \succeq 0$.
\end{proof}

\noindent
\textbf{Conservative case ($\Rmat = 0$):} All eigenvalues are purely imaginary,
corresponding to oscillatory energy exchange (phase-space rotations).

\noindent
\textbf{Dissipative case ($\Rmat \succ 0$):} Eigenvalues shift into the open left
half-plane (e.g., $-\sigma \pm i\omega$ with $\sigma > 0$), yielding damped oscillations.
See \citet{mehl2018linear} for a comprehensive treatment.

% ===========================================================================
% B. Environment Descriptions
% ===========================================================================
\section{Environment Descriptions}
\label{app:environments}

This section provides detailed descriptions of the systems used in our q-only benchmarks, including their state spaces, Hamiltonians, and damping models. We cover four mechanical systems (Secs.~\ref{app:env:pendulum}--\ref{app:env:double_pendulum}) and five non-mechanical systems (Sec.~\ref{app:table1_mappings}).

% ---------------------------------------------------------------------------
% B.1 Single Pendulum
% ---------------------------------------------------------------------------
\subsection{Single Pendulum}
\label{app:env:pendulum}

\begin{figure}[H]
    \centering
    \begin{tikzpicture}[scale=1.0]
        % Pivot/ceiling
        \fill[black!20] (-0.8,0.2) rectangle (0.8,0);
        \draw[thick] (-0.8,0) -- (0.8,0);
        \fill[black] (0,0) circle (0.08);

        % Rod
        \draw[very thick, blue!70!black, line cap=round] (0,0) -- (240:3.0);

        % Mass
        \fill[blue!60] (240:3.0) circle (0.25);
        \node[right] at ($(240:3.0)+(0.3,0)$) {$m, I$};

        % Center of mass marker
        \coordinate (CoM) at (240:2.0);
        \draw[thick, black!50, dashed] (0,0) -- (CoM);
        \fill[red!60] (CoM) circle (0.08);
        \node[left, red!60] at ($(CoM)+(-0.15,0)$) {CoM};

        % Vertical reference (dashed)
        \draw[dashed, black!40] (0,0) -- (0,-3.2);

        % Angle arc
        \draw[->, very thick] (0,-1.0) arc (-90:-120:1.0);
        \node[font=\large] at (-0.7,-1.3) {$\theta$};

        % Length annotations
        \draw[|-, thin, black!60] (0.25,0) -- (0.25,-1.3);
        \node[right, black!60] at (0.28,-0.65) {$\ell_c$};
        \draw[|-, thin, black!60] (0.5,0) -- ($(240:3.0)+(0.6,0.2)$);
        \node[right, black!60] at (0.53,-1.5) {$\ell$};

        % Gravity arrow
        \draw[->, very thick, black!50] (2.0,-0.5) -- (2.0,-1.8);
        \node[right, black!50, font=\large] at (2.0,-1.15) {$g$};

        % Damping torque (curved arrow at pivot)
        \draw[<-, thick, red!70, decorate, decoration={snake, amplitude=0.5mm, segment length=3mm}]
            (0.6,-0.2) arc (-20:-70:0.7);
        \node[red!70, right] at (0.9,-0.6) {$d(\theta)\dot{\theta}$};

        % Coordinate system
        \draw[->, thick] (-2.5,-2.8) -- (-1.8,-2.8) node[right] {$x$};
        \draw[->, thick] (-2.5,-2.8) -- (-2.5,-2.1) node[above] {$y$};
    \end{tikzpicture}
    \caption{\textbf{Single pendulum.} A point mass $m$ at distance $\ell$ from a fixed pivot, with moment of inertia $I$ about the pivot. The angle $\theta$ is measured from the downward vertical. Position-dependent damping $d(\theta)$ acts at the pivot.}
    \label{fig:pendulum_detail}
\end{figure}
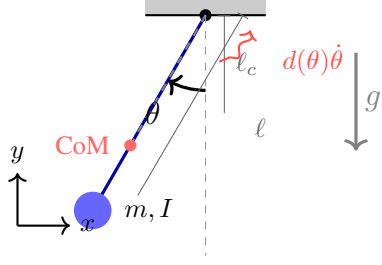

\paragraph{Configuration space.}
The single pendulum has one degree of freedom: the angle $\theta \in \mathbb{S}^1$ measured from the downward vertical (so $\theta = 0$ corresponds to the stable equilibrium with the pendulum hanging straight down).

\paragraph{State space.}
The phase-space state is $x = (\theta, p) \in \mathbb{S}^1 \times \mathbb{R}$, where $p = I\dot{\theta}$ is the angular momentum and $I$ is the moment of inertia about the pivot.

\paragraph{Kinematics.}
The Cartesian position of the center of mass is:
\begin{equation}
    \mathbf{r} = \begin{bmatrix} \ell_c \sin\theta \\ -\ell_c \cos\theta \end{bmatrix},
\end{equation}
where $\ell_c$ is the distance from the pivot to the center of mass.

\paragraph{Kinetic energy.}
For rotation about the fixed pivot:
\begin{equation}
    E_{\mathrm{kin}} = \frac{1}{2} I \dot{\theta}^2 = \frac{p^2}{2I},
\end{equation}
where $I = m\ell_c^2$ for a point mass (or includes rotational inertia for an extended body).

\paragraph{Potential energy.}
Taking the lowest point ($\theta = 0$) as the zero reference:
\begin{equation}
    V(\theta) = m g \ell_c (1 - \cos\theta).
\end{equation}

\paragraph{Hamiltonian.}
The total energy is:
\begin{equation}
    H(\theta, p) = \frac{p^2}{2I} + m g \ell_c (1 - \cos\theta).
    \label{eq:pendulum_hamiltonian}
\end{equation}

\paragraph{Equations of motion.}
The conservative dynamics are:
\begin{align}
    \dot{\theta} &= \frac{\partial H}{\partial p} = \frac{p}{I}, \\
    \dot{p} &= -\frac{\partial H}{\partial \theta} = -m g \ell_c \sin\theta.
\end{align}

\paragraph{Damping model.}
We consider three damping variants:
\begin{itemize}
    \item \textbf{Conservative}: $d(\theta) = 0$.
    \item \textbf{Constant damping}: $d(\theta) = \gamma$ (constant viscous friction, $\gamma = 0.5$).
    \item \textbf{Windy (position-dependent)}: $d(\theta) = d_0 + \Delta d\,|\sin\theta|$, where $d_0 = 0.3$ and $\Delta d = 0.5$.
\end{itemize}

\paragraph{Windy damping: physical interpretation.}
The ``windy'' damping model $d(\theta) = d_0 + \Delta d\,|\sin\theta|$ captures position-dependent air resistance: when the pendulum is horizontal ($\theta = \pm\pi/2$), it presents maximum cross-sectional area to the airflow, yielding maximum damping $d_{\max} = d_0 + \Delta d = 0.8$. When vertical ($\theta = 0$ or $\pi$), the cross-section is minimal, yielding $d_{\min} = d_0 = 0.3$. This is the \textbf{primary benchmark} for testing PHAST's ability to learn configuration-dependent dissipation $D(q)$.

\begin{figure}[H]
    \centering
    \begin{tikzpicture}[scale=0.85]
        % Axes
        \draw[->, thick] (-3.5,0) -- (3.8,0) node[right] {$\theta$};
        \draw[->, thick] (0,-0.3) -- (0,2.5) node[above] {$d(\theta)$};

        % Tick marks on x-axis
        \draw (-3.14,-0.1) -- (-3.14,0.1) node[below=2pt] {\small $-\pi$};
        \draw (-1.57,-0.1) -- (-1.57,0.1) node[below=2pt] {\small $-\frac{\pi}{2}$};
        \draw (0,-0.1) -- (0,0.1);
        \node[below=2pt] at (0,-0.1) {\small $0$};
        \draw (1.57,-0.1) -- (1.57,0.1) node[below=2pt] {\small $\frac{\pi}{2}$};
        \draw (3.14,-0.1) -- (3.14,0.1) node[below=2pt] {\small $\pi$};

        % d_0 and d_max lines
        \draw[dashed, black!50] (-3.5,0.6) -- (3.5,0.6);
        \node[left, black!60, font=\small] at (-3.5,0.6) {$d_0{=}0.3$};
        \draw[dashed, black!50] (-3.5,1.6) -- (3.5,1.6);
        \node[left, black!60, font=\small] at (-3.5,1.6) {$d_0{+}\Delta d{=}0.8$};

        % Plot d(theta) = 0.3 + 0.5*|sin(theta)|, scaled: y = 2*(0.3 + 0.5*|sin(x)|)
        \draw[very thick, blue!70!black, domain=-3.14:3.14, samples=100]
            plot (\x, {2*(0.3 + 0.5*abs(sin(\x r)))});

        % Annotations
        \node[blue!70!black, font=\small, align=center] at (2.2,2.2) {$d(\theta) = d_0 + \Delta d\,|\sin\theta|$};

        % Physical interpretation sketches
        \begin{scope}[shift={(-2.5,1.8)}, scale=0.35]
            \draw[thick, blue!60] (0,0) -- (-0.7,-0.7);
            \fill[blue!50] (-0.7,-0.7) circle (0.15);
            \node[font=\tiny] at (0,-1.3) {min drag};
        \end{scope}
        \begin{scope}[shift={(0,2.3)}, scale=0.35]
            \draw[thick, blue!60] (0,0) -- (1,0);
            \fill[blue!50] (1,0) circle (0.15);
            \node[font=\tiny] at (0.5,-0.7) {max drag};
        \end{scope}
    \end{tikzpicture}
    \caption{\textbf{Windy damping profile.} The position-dependent damping coefficient $d(\theta) = d_0 + \Delta d\,|\sin\theta|$ varies between $d_0 = 0.3$ (vertical) and $d_0 + \Delta d = 0.8$ (horizontal), modeling air resistance that depends on the pendulum's cross-sectional area.}
    \label{fig:windy_damping}
\end{figure}
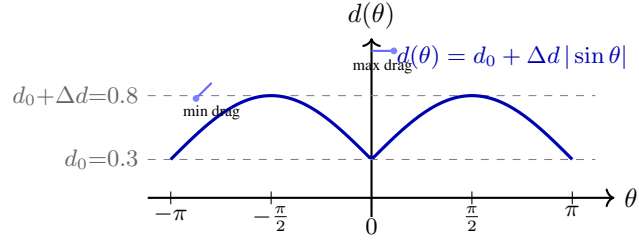

\paragraph{Port-Hamiltonian dynamics with windy damping.}
The dissipative equations of motion are:
\begin{align}
    \dot{\theta} &= \frac{p}{I}, \\
    \dot{p} &= -m g \ell_c \sin\theta - d(\theta) \cdot \frac{p}{I}.
\end{align}
The energy dissipation rate is:
\begin{equation}
    \frac{dH}{dt} = -d(\theta) \cdot v^2 = -\big(d_0 + \Delta d\,|\sin\theta|\big) \cdot \left(\frac{p}{I}\right)^2 \leq 0,
\end{equation}
which is always non-positive (passive) since $d(\theta) \geq d_0 > 0$.

\paragraph{Simulator parameters.}
In our experiments: $m = 1$, $\ell_c = \ell = 1$, $g = 9.81$, $I = m\ell_c^2 = 1$.

% ---------------------------------------------------------------------------
% B.2 Cart-Pole
% ---------------------------------------------------------------------------
\subsection{Cart-Pole}
\label{app:env:cartpole}

\begin{figure}[H]
    \centering
    \begin{tikzpicture}[scale=1.0]
        % Ground/rail
        \fill[black!10] (-2.5,-2.0) rectangle (3.5,-1.8);
        \draw[thick, black!50] (-2.5,-1.8) -- (3.5,-1.8);
        \foreach \x in {-2.2,-1.6,-1.0,-0.4,0.2,0.8,1.4,2.0,2.6,3.2}
            \draw[black!30] (\x,-1.8) -- (\x-0.2,-2.0);

        % Cart
        \draw[very thick, fill=black!15, rounded corners=2pt] (-0.8,-1.75) rectangle (0.8,-0.95);
        \fill[black] (-0.5,-1.8) circle (0.12);
        \fill[black] (0.5,-1.8) circle (0.12);
        \node at (0,-1.35) {$m_c$};

        % Cart position arrow
        \draw[<->, thick, black!60] (-2.0,-2.2) -- (0,-2.2);
        \node[below, black!60] at (-1.0,-2.2) {$x$};
        \draw[dashed, black!40] (0,-2.2) -- (0,-0.95);

        % Pole pivot
        \fill[black] (0,-0.95) circle (0.08);

        % Pole
        \coordinate (PoleEnd) at ($(0,-0.95)+(50:2.8)$);
        \draw[very thick, blue!70!black, line cap=round] (0,-0.95) -- (PoleEnd);

        % Pole CoM
        \coordinate (PoleCoM) at ($(0,-0.95)!0.5!(PoleEnd)$);
        \fill[blue!60] (PoleCoM) circle (0.18);
        \node[right] at ($(PoleCoM)+(0.25,0)$) {$m_p$};

        % Pole end mass (optional, for clarity)
        \fill[blue!40] (PoleEnd) circle (0.1);

        % Vertical reference (downward; stable at theta=0)
        \draw[dashed, black!40] (0,-0.95) -- (0,-1.75);

        % Angle arc
        \draw[->, very thick] (0,-1.75) arc (-90:50:0.8);
        \node[font=\large] at (0.6,0.8) {$\theta$};

        % Length annotation
        \draw[|-, thin, black!60] ($(0,-0.95)+(0.3,0)$) -- ($(PoleCoM)+(0.35,-0.1)$);
        \node[right, black!60] at ($(0,-0.95)!0.5!(PoleCoM)+(0.4,0)$) {$\ell$};

        % Gravity arrow
        \draw[->, very thick, black!50] (2.8,0.5) -- (2.8,-0.8);
        \node[right, black!50, font=\large] at (2.8,-0.15) {$g$};

        % Damping on pole (curved arrow)
        \draw[<-, thick, red!70, decorate, decoration={snake, amplitude=0.5mm, segment length=2.5mm}]
            (0.4,-0.75) arc (0:50:0.5);
        \node[red!70, right, font=\small] at (0.6,-0.5) {$d(\theta)\dot q$};

        % Coordinate frame
        \draw[->, thick] (-2.0,-0.5) -- (-1.3,-0.5) node[right] {$x$};
        \draw[->, thick] (-2.0,-0.5) -- (-2.0,0.2) node[above] {$y$};
    \end{tikzpicture}
    \caption{\textbf{Cart-pole system.} A cart of mass $m_c$ moves on a horizontal rail while a pole of mass $m_p$ (concentrated at distance $\ell$ from the pivot) rotates about the cart. The angle $\theta$ is measured from the downward vertical (hanging configuration: $\theta=0$). Windy damping acts isotropically on $\dot q$ via a scalar coefficient $d(\theta)$.}
    \label{fig:cartpole_detail}
\end{figure}
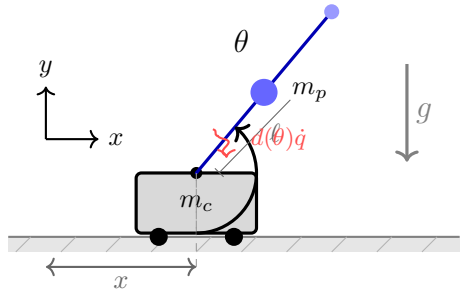

\paragraph{Configuration space.}
The cart-pole has two degrees of freedom: $q = (x, \theta) \in \mathbb{R} \times \mathbb{S}^1$, where $x$ is the cart position and $\theta$ is the pole angle measured from the downward vertical (stable at $\theta=0$).

\paragraph{State space.}
The phase-space state is $x = (q, p) = (x, \theta, p_x, p_\theta) \in \mathbb{R} \times \mathbb{S}^1 \times \mathbb{R}^2$.

\paragraph{Kinematics.}
The positions of the cart and pole center of mass are:
\begin{align}
    \mathbf{r}_c &= \begin{bmatrix} x \\ 0 \end{bmatrix}, &
    \mathbf{r}_p &= \begin{bmatrix} x + \ell \sin\theta \\ -\ell \cos\theta \end{bmatrix}.
\end{align}

\paragraph{Kinetic energy.}
The velocities are $\dot{\mathbf{r}}_c = (\dot{x}, 0)^\T$ and:
\begin{equation}
    \dot{\mathbf{r}}_p = \begin{bmatrix} \dot{x} + \ell \cos\theta \cdot \dot{\theta} \\ \ell \sin\theta \cdot \dot{\theta} \end{bmatrix}.
\end{equation}
The total kinetic energy is:
\begin{align}
    E_{\mathrm{kin}} &= \frac{1}{2} m_c \dot{x}^2 + \frac{1}{2} m_p \|\dot{\mathbf{r}}_p\|^2 \\
      &= \frac{1}{2}(m_c + m_p)\dot{x}^2 + m_p \ell \cos\theta \cdot \dot{x}\dot{\theta} + \frac{1}{2} m_p \ell^2 \dot{\theta}^2,
\end{align}
which can be written as $E_{\mathrm{kin}} = \frac{1}{2}\dot{q}^\T M(q)\dot{q}$ with the \textbf{configuration-dependent mass matrix}:
\begin{equation}
    M(\theta) = \begin{bmatrix}
        m_c + m_p & m_p \ell \cos\theta \\
        m_p \ell \cos\theta & m_p \ell^2
    \end{bmatrix}.
    \label{eq:cartpole_mass}
\end{equation}

\paragraph{Potential energy.}
Taking the hanging configuration ($\theta = 0$) as the zero reference:
\begin{equation}
    V(\theta) = m_p g \ell (1 - \cos\theta).
\end{equation}

\paragraph{Hamiltonian.}
With generalized momenta $p = M(\theta)\dot{q}$:
\begin{equation}
    H(q, p) = \frac{1}{2} p^\T M(\theta)^{-1} p + m_p g \ell (1 - \cos\theta).
    \label{eq:cartpole_hamiltonian}
\end{equation}

\paragraph{Separable approximation.}
In our PHAST experiments, we use a \emph{separable} Hamiltonian with constant $M$ (Sec.~\ref{sec:intro:regimes}), which is an approximation. The true Cart-Pole has configuration-dependent inertia as shown in Eq.~\eqref{eq:cartpole_mass}.

\paragraph{Damping model.}
We use windy (position-dependent) damping on the pole angle:
\begin{equation}
    d(\theta) = d_0 + \Delta d\,|\sin\theta|,
\end{equation}
with $d_0 = 0.3$ and $\Delta d = 0.5$. The damping acts on both the cart and pole velocities as a scalar coefficient:
\begin{equation}
    \dot{p} = -\nabla_q H - d(\theta) \cdot \dot{q}.
\end{equation}
This models increased air resistance when the pole is horizontal. The energy dissipation rate is:
\begin{equation}
    \frac{dH}{dt} = -d(\theta) \cdot \|\dot{q}\|^2 \leq 0.
\end{equation}

\paragraph{Simulator parameters.}
In our experiments: $m_c = 1$, $m_p = 1$, $\ell = 1$, $g = 9.81$, $\dt = 0.02$.

% ---------------------------------------------------------------------------
% B.3 Harmonic Oscillator
% ---------------------------------------------------------------------------
\subsection{Harmonic Oscillator}
\label{app:env:oscillator}

\begin{figure}[H]
    \centering
    \begin{tikzpicture}[scale=1.0]
        % Two independent harmonic oscillators visualization
        % Oscillator 1
        \begin{scope}[shift={(0,0.8)}]
            % Wall
            \fill[black!20] (-0.2,-0.6) rectangle (0,0.6);
            \draw[very thick] (0,-0.6) -- (0,0.6);
            \foreach \y in {-0.4,0,0.4}
                \draw[black!40] (0,\y) -- (-0.2,\y+0.1);

            % Spring
            \draw[thick, decorate, decoration={coil, aspect=0.5, segment length=3mm, amplitude=2.5mm}]
                (0,0) -- (2.2,0);
            \node[above, font=\small] at (1.1,0.3) {$\omega$};

            % Mass
            \draw[very thick, fill=blue!30, rounded corners=2pt] (2.2,-0.4) rectangle (3.2,0.4);
            \node at (2.7,0) {$m_1$};

            % Displacement
            \draw[->, very thick, black!70] (2.7,-0.6) -- (3.5,-0.6);
            \node[below, black!70, font=\small] at (3.1,-0.6) {$q_1$};

            % Damping
            \draw[thick, red!60, decorate, decoration={snake, amplitude=0.6mm, segment length=2mm}]
                (2.7,0.55) -- (2.7,0.9);
            \node[red!60, above, font=\scriptsize] at (2.7,0.9) {$\gamma$};
        \end{scope}

        % Oscillator 2
        \begin{scope}[shift={(0,-0.8)}]
            % Wall
            \fill[black!20] (-0.2,-0.6) rectangle (0,0.6);
            \draw[very thick] (0,-0.6) -- (0,0.6);
            \foreach \y in {-0.4,0,0.4}
                \draw[black!40] (0,\y) -- (-0.2,\y+0.1);

            % Spring
            \draw[thick, decorate, decoration={coil, aspect=0.5, segment length=3mm, amplitude=2.5mm}]
                (0,0) -- (2.2,0);
            \node[above, font=\small] at (1.1,0.3) {$\omega$};

            % Mass
            \draw[very thick, fill=blue!30, rounded corners=2pt] (2.2,-0.4) rectangle (3.2,0.4);
            \node at (2.7,0) {$m_2$};

            % Displacement
            \draw[->, very thick, black!70] (2.7,-0.6) -- (3.5,-0.6);
            \node[below, black!70, font=\small] at (3.1,-0.6) {$q_2$};

            % Damping
            \draw[thick, red!60, decorate, decoration={snake, amplitude=0.6mm, segment length=2mm}]
                (2.7,0.55) -- (2.7,0.9);
            \node[red!60, above, font=\scriptsize] at (2.7,0.9) {$\gamma$};
        \end{scope}

        % Label
        \node[font=\small, align=center] at (5.5,0) {$n=2$ independent\\harmonic oscillators\\with unit mass};
    \end{tikzpicture}
    \caption{\textbf{Harmonic oscillator.} Two independent harmonic oscillators with natural frequency $\omega$ and unit mass. Coordinates $q_1, q_2$ measure displacements from equilibrium. Optional viscous damping $\gamma$ acts on each oscillator.}
    \label{fig:oscillator_detail}
\end{figure}
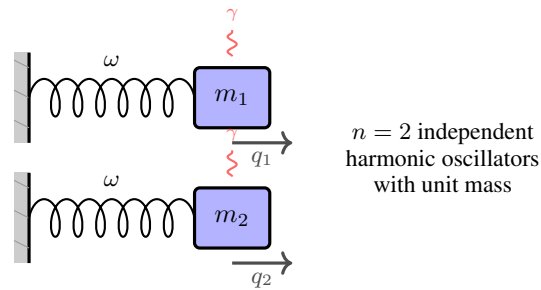

\paragraph{Configuration space.}
The oscillator benchmark uses $n$ degrees of freedom (we use $n=2$): $q = (q_1, \ldots, q_n) \in \mathbb{R}^n$, where $q_i$ is the displacement of oscillator $i$ from equilibrium.

\paragraph{State space.}
The phase-space state is $x = (q, p) \in \mathbb{R}^{2n}$, where $p = \dot{q}$ (unit mass).

\paragraph{Kinetic energy.}
With unit mass ($M = I$):
\begin{equation}
    E_{\mathrm{kin}} = \frac{1}{2} \|p\|^2 = \frac{1}{2} \sum_{i=1}^{n} p_i^2.
\end{equation}

\paragraph{Potential energy.}
Simple harmonic potential with natural frequency $\omega$:
\begin{equation}
    V(q) = \frac{1}{2} \omega^2 \|q\|^2 = \frac{1}{2} \omega^2 \sum_{i=1}^{n} q_i^2.
\end{equation}

\paragraph{Hamiltonian.}
The total energy is:
\begin{equation}
    H(q, p) = \frac{1}{2} \|p\|^2 + \frac{1}{2} \omega^2 \|q\|^2.
    \label{eq:oscillator_hamiltonian}
\end{equation}

\paragraph{Equations of motion.}
\begin{align}
    \dot{q} &= p, \\
    \dot{p} &= -\omega^2 q - \gamma p,
\end{align}
where $\gamma$ is the viscous damping coefficient.

\paragraph{Damping model.}
\begin{itemize}
    \item \textbf{Conservative}: $\gamma = 0$ (energy preserved).
    \item \textbf{Damped}: $\gamma = 0.1$ (constant viscous damping).
\end{itemize}

\paragraph{Simulator parameters.}
In our experiments: $n = 2$ (degrees of freedom), $\omega = 1$ (natural frequency), $\dt = 0.02$.

% ---------------------------------------------------------------------------
% B.4 Double Pendulum
% ---------------------------------------------------------------------------
\subsection{Double Pendulum}
\label{app:env:double_pendulum}

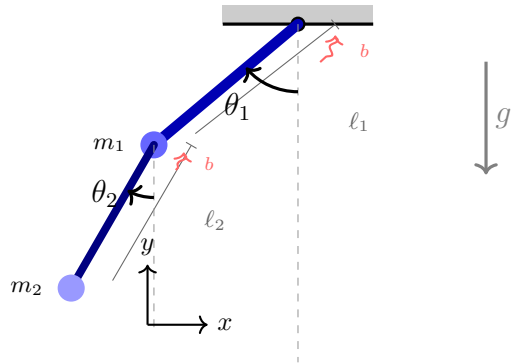
\begin{figure}[H]
    \centering
    \begin{tikzpicture}[scale=1.0]
        % Pivot/ceiling
        \fill[black!20] (-1.0,0.25) rectangle (1.0,0);
        \draw[very thick] (-1.0,0) -- (1.0,0);
        \fill[black] (0,0) circle (0.1);

        % Define angles
        \def\thetaOne{-50}  % theta1 from downward vertical
        \def\thetaTwo{-30}  % theta2 (absolute angle of link 2)

        % Link 1
        \coordinate (J1) at (0,0);
        \coordinate (E1) at ($(\thetaOne-90:2.5)$);
        \draw[line width=4pt, blue!70!black, line cap=round] (J1) -- (E1);
        \fill[black] (E1) circle (0.1);

        % Point mass 1 at end of link 1
        \fill[blue!60] (E1) circle (0.18);
        \node[left, font=\small] at ($(E1)+(-0.25,0)$) {$m_1$};

        % Link 1 length annotation
        \draw[|-, thin, black!60] (0.5,0) -- ($(E1)+(0.55,0.15)$);
        \node[right, black!60, font=\small] at (0.55,-1.3) {$\ell_1$};

        % Link 2
        \coordinate (E2) at ($(E1)+(\thetaTwo-90:2.2)$);
        \draw[line width=3pt, blue!50!black, line cap=round] (E1) -- (E2);

        % Point mass 2 at end of link 2
        \fill[blue!40] (E2) circle (0.18);
        \node[left, font=\small] at ($(E2)+(-0.25,0)$) {$m_2$};

        % Link 2 length annotation
        \draw[|-, thin, black!60] ($(E1)+(0.5,0)$) -- ($(E2)+(0.55,0.1)$);
        \node[right, black!60, font=\small] at ($(E1)+(0.55,-1.0)$) {$\ell_2$};

        % Vertical reference lines
        \draw[dashed, black!40] (0,0) -- (0,-4.5);
        \draw[dashed, black!40] (E1) -- ($(E1)+(0,-2.5)$);

        % Angle theta1 (absolute)
        \draw[->, very thick] (0,-0.9) arc (-90:\thetaOne-90:0.9);
        \node[font=\large] at (-0.8,-1.1) {$\theta_1$};

        % Angle theta2 (absolute)
        \draw[->, very thick] ($(E1)+(0,-0.7)$) arc (-90:\thetaTwo-90:0.7);
        \node[font=\large] at ($(E1)+(-0.65,-0.65)$) {$\theta_2$};

        % Gravity arrow
        \draw[->, very thick, black!50] (2.5,-0.5) -- (2.5,-2.0);
        \node[right, black!50, font=\large] at (2.5,-1.25) {$g$};

        % Coordinate system
        \draw[->, thick] (-2.0,-4.0) -- (-1.2,-4.0) node[right] {$x$};
        \draw[->, thick] (-2.0,-4.0) -- (-2.0,-3.2) node[above] {$y$};

        % Optional damping indicators
        \draw[<-, thick, red!60, decorate, decoration={snake, amplitude=0.4mm, segment length=2mm}]
            (0.5,-0.15) arc (-10:-50:0.6);
        \node[red!60, right, font=\scriptsize] at (0.7,-0.35) {$b$};
        \draw[<-, thick, red!60, decorate, decoration={snake, amplitude=0.4mm, segment length=2mm}]
            ($(E1)+(0.4,-0.1)$) arc (-10:-40:0.5);
        \node[red!60, right, font=\scriptsize] at ($(E1)+(0.55,-0.25)$) {$b$};
    \end{tikzpicture}
    \caption{\textbf{Double pendulum (point-mass model).} Two point masses $m_1, m_2$ at the ends of massless rods of lengths $\ell_1, \ell_2$. The angles $\theta_1$ and $\theta_2$ are both measured from the downward vertical (absolute angles). Optional viscous damping $b$ acts at both joints.}
    \label{fig:double_pendulum_detail}
\end{figure}

\paragraph{Configuration space.}
The double pendulum has two degrees of freedom: $q = (\theta_1, \theta_2) \in \mathbb{S}^1 \times \mathbb{S}^1$, where $\theta_1$ is the angle of link 1 from the downward vertical, and $\theta_2$ is the angle of link 2 from the downward vertical (both absolute angles).

\paragraph{State space.}
The phase-space state is $x = (q, p) = (\theta_1, \theta_2, p_1, p_2) \in (\mathbb{S}^1)^2 \times \mathbb{R}^2$, where $p = M(q)\dot{q}$ are the canonical momenta.

\paragraph{Kinematics.}
Using shorthand $s_i = \sin\theta_i$, $c_i = \cos\theta_i$:
\begin{align}
    \mathbf{r}_1 &= \begin{bmatrix} \ell_1 s_1 \\ -\ell_1 c_1 \end{bmatrix}, &
    \mathbf{r}_2 &= \begin{bmatrix} \ell_1 s_1 + \ell_2 s_2 \\ -\ell_1 c_1 - \ell_2 c_2 \end{bmatrix}.
\end{align}

\paragraph{Kinetic energy.}
For point masses at the ends of massless rods:
\begin{equation}
    E_{\mathrm{kin}} = \frac{1}{2} m_1 \|\dot{\mathbf{r}}_1\|^2 + \frac{1}{2} m_2 \|\dot{\mathbf{r}}_2\|^2 = \frac{1}{2} \dot{q}^\T M(q) \dot{q},
\end{equation}
where the \textbf{configuration-dependent mass matrix} is:
\begin{equation}
    M(q) = \begin{bmatrix}
        (m_1 + m_2) \ell_1^2 & m_2 \ell_1 \ell_2 \cos(\theta_1 - \theta_2) \\
        m_2 \ell_1 \ell_2 \cos(\theta_1 - \theta_2) & m_2 \ell_2^2
    \end{bmatrix}.
    \label{eq:double_pendulum_mass}
\end{equation}
Note that $M$ depends on the angle difference $(\theta_1 - \theta_2)$.

\paragraph{Potential energy.}
Taking the pivot level as the zero reference:
\begin{equation}
    V(\theta_1, \theta_2) = -(m_1 + m_2) g \ell_1 \cos\theta_1 - m_2 g \ell_2 \cos\theta_2.
    \label{eq:double_pendulum_potential}
\end{equation}

\paragraph{Hamiltonian.}
With generalized momenta $p = M(q) \dot{q}$:
\begin{equation}
    H(q, p) = \frac{1}{2} p^\T M(q)^{-1} p + V(\theta_1, \theta_2).
    \label{eq:double_pendulum_hamiltonian}
\end{equation}

\paragraph{Equations of motion.}
The Euler--Lagrange equations yield (in manipulator form):
\begin{equation}
    M(q) \ddot{q} + C(q, \dot{q}) \dot{q} + G(q) = -b \dot{q},
\end{equation}
where $C(q, \dot{q})$ contains Coriolis/centrifugal terms involving $\sin(\theta_1 - \theta_2)$, $G(q) = \nabla_q V(q)$ is the gravity vector, and $b$ is the viscous damping coefficient applied equally to both joints.

\paragraph{Separable approximation.}
In our PHAST experiments, we use a \emph{separable} Hamiltonian with constant $M$ (Sec.~\ref{sec:intro:regimes}). This is an approximation; the true double pendulum has configuration-dependent inertia as shown in Eq.~\eqref{eq:double_pendulum_mass}.

\paragraph{Damping model.}
\begin{itemize}
    \item \textbf{Conservative}: $b = 0$.
    \item \textbf{Damped}: $b = 0.2$ (viscous damping on both joints).
\end{itemize}

\paragraph{Simulator parameters.}
In our experiments: $m_1 = m_2 = 1$, $\ell_1 = \ell_2 = 1$, $g = 9.81$, $\dt = 0.01$.

% ---------------------------------------------------------------------------
% B.5 Detailed Table 1 Mappings
% ---------------------------------------------------------------------------
\subsection{Detailed Port-Hamiltonian Mappings for Table~\ref{tab:structured_hamiltonians}}
\label{app:table1_mappings}

Section~\ref{sec:intro} walks through three representative entries
(simple pendulum, cart-pole, RLC circuit).
Here we provide the complete $(V,\Mmat,\Dmat)$ decomposition for every
system in Table~\ref{tab:structured_hamiltonians}, organized by knowledge
regime.

% ---- KNOWN regime ----
\paragraph{KNOWN regime.}
In these systems, the potential $V$ and mass $\Mmat$ are fully specified by the physics; only the dissipation $\Dmat$ (or a small number of scalar parameters) must be learned.
This makes the inverse problem well-posed: recovering $\Dmat$ from trajectory data has a unique solution.

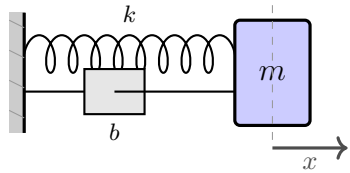
\begin{figure}[H]
    \centering
    \begin{tikzpicture}[scale=1.0]
        % Wall
        \fill[black!20] (-0.2,-0.8) rectangle (0,0.8);
        \draw[very thick] (0,-0.8) -- (0,0.8);
        \foreach \y in {-0.6,-0.2,0.2,0.6}
            \draw[black!40] (0,\y) -- (-0.2,\y+0.1);

        % Spring (upper path)
        \draw[thick, decorate, decoration={coil, aspect=0.5, segment length=3mm, amplitude=2.5mm}]
            (0,0.25) -- (2.8,0.25);
        \node[above, font=\small] at (1.4,0.55) {$k$};

        % Dashpot (lower path)
        \draw[thick] (0,-0.25) -- (0.8,-0.25);
        \draw[thick, fill=black!10] (0.8,-0.55) rectangle (1.6,0.05);
        \draw[thick] (1.2,-0.25) -- (2.8,-0.25);
        \node[below, font=\small] at (1.2,-0.55) {$b$};

        % Mass block
        \draw[very thick, fill=blue!20, rounded corners=2pt] (2.8,-0.7) rectangle (3.8,0.7);
        \node[font=\large] at (3.3,0) {$m$};

        % Displacement arrow
        \draw[->, very thick, black!70] (3.3,-1.0) -- (4.3,-1.0);
        \node[below, black!70] at (3.8,-1.0) {$x$};

        % Equilibrium reference
        \draw[dashed, black!40] (3.3,0.9) -- (3.3,-0.9);
    \end{tikzpicture}
    \caption{\textbf{Spring--mass--damper.} A mass $m$ connected to a wall by a spring (stiffness $k$) and dashpot (damping $b$) in parallel.  Displacement $x$ is measured from equilibrium.}
    \label{fig:spring_mass_detail}
\end{figure}

\paragraph{Spring--mass (KNOWN, mechanical).}
The damped harmonic oscillator is the simplest non-trivial
port-Hamiltonian system and serves as the ``hello world'' for our
benchmark: if a method cannot recover the damping coefficient of a
linear spring, it will struggle with anything more complex.
Despite its simplicity, the system already exhibits the
conservative--dissipative split that defines the port-Hamiltonian
framework: the spring stores energy, the dashpot removes it, and
the mass mediates how quickly energy converts between potential and
kinetic forms.
\begin{itemize}
  \item \emph{State}: $q = x$ (displacement from equilibrium),
    $p = m\dot x$ (linear momentum).
  \item \emph{Potential}: $V(x) = \tfrac{1}{2}kx^2$ --- a quadratic
    well whose gradient $-kx$ is the familiar restoring force.
  \item \emph{Mass}: $\Mmat = m$ --- a scalar constant.
    In the port-Hamiltonian picture, $m$ controls the ``exchange rate''
    between momentum and velocity: $\dot x = p/m$.
  \item \emph{Damping}: $\Dmat = b$ --- viscous friction ($F_d = -b\dot x$).
    Energy dissipation is $\dot H = -b\dot x^2 \le 0$, the defining
    passivity inequality.
  \item \emph{Hamiltonian}:
    $H(x,p) = \tfrac{1}{2}kx^2 + \tfrac{p^2}{2m}$.
  \item \emph{Identifiability}: With $V$ and $m$ both given, the only
    unknown is~$b$---recoverable from the decay rate of oscillations.
\end{itemize}

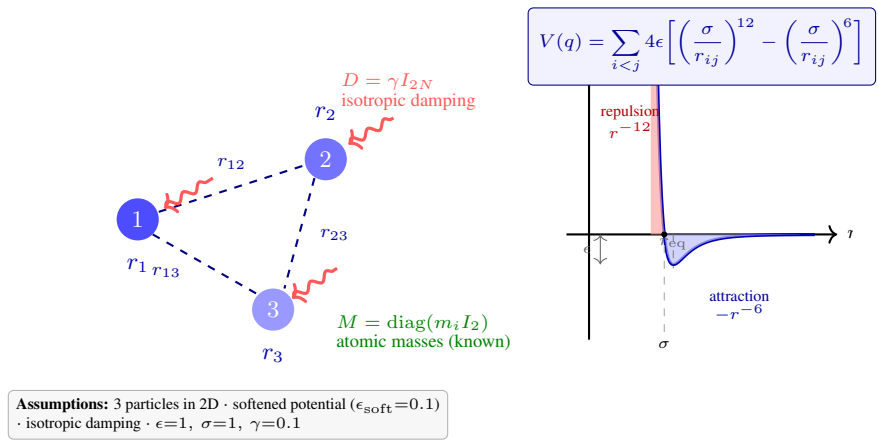
\begin{figure}[H]
    \centering
    \begin{tikzpicture}[scale=1.0]
        % --- Particle cluster (left side) ---
        % Particles with colored borders
        \fill[blue!70] (0,0) circle (0.28);
        \node[white, font=\small\bfseries] at (0,0) {$1$};
        \node[below, font=\small, blue!70!black] at (0,-0.4) {$r_1$};

        \fill[blue!55] (2.5,0.8) circle (0.28);
        \node[white, font=\small\bfseries] at (2.5,0.8) {$2$};
        \node[above, font=\small, blue!70!black] at (2.5,1.2) {$r_2$};

        \fill[blue!40] (1.8,-1.2) circle (0.28);
        \node[white, font=\small\bfseries] at (1.8,-1.2) {$3$};
        \node[below, font=\small, blue!70!black] at (1.8,-1.6) {$r_3$};

        % Pairwise interaction lines
        \draw[thick, blue!50!black, dashed] (0.28,0.1) -- (2.22,0.72);
        \node[above, font=\scriptsize, blue!60!black] at (1.25,0.55) {$r_{12}$};
        \draw[thick, blue!50!black, dashed] (0.2,-0.2) -- (1.6,-1.0);
        \node[left, font=\scriptsize, blue!60!black] at (0.7,-0.7) {$r_{13}$};
        \draw[thick, blue!50!black, dashed] (2.35,0.55) -- (1.95,-0.92);
        \node[right, font=\scriptsize, blue!60!black] at (2.3,-0.2) {$r_{23}$};

        % Velocity-proportional damping (wavy arrows) — red for damping
        \draw[<-, very thick, red!65, decorate, decoration={snake, amplitude=0.5mm, segment length=2.5mm}]
            (0.35,0.15) -- (1.0,0.55);
        \draw[<-, very thick, red!65, decorate, decoration={snake, amplitude=0.5mm, segment length=2.5mm}]
            (2.8,1.0) -- (3.4,1.35);
        \draw[<-, very thick, red!65, decorate, decoration={snake, amplitude=0.5mm, segment length=2.5mm}]
            (2.05,-1.05) -- (2.65,-0.65);

        % Damping label
        \node[red!65, font=\scriptsize, align=left] at (3.6,1.7)
            {$\textcolor{red!65}{D = \gamma I_{2N}}$\\[-0.1em]\scriptsize isotropic damping};

        % Mass label
        \node[green!50!black, font=\scriptsize, align=left] at (3.8,-1.5)
            {$\textcolor{green!50!black}{M = \mathrm{diag}(m_i I_2)}$\\[-0.1em]\scriptsize atomic masses (known)};

        % --- LJ potential inset (right side, larger) ---
        \begin{scope}[shift={(6.0,-0.2)}]
            \clip (-0.5,-1.6) rectangle (3.5,2.4);
            % Axes
            \draw[->, thick] (0,-1.4) -- (0,2.2) node[above, font=\scriptsize] {$V(r)$};
            \draw[->, thick] (-0.3,0) -- (3.3,0) node[right, font=\scriptsize] {$r$};

            % LJ curve — correct (12,6) potential: V(r) = 4ε[(σ/r)^12 - (σ/r)^6]
            % Scaled by 0.4 for visual fit: 1.6*(x^{-12} - x^{-6})
            % Computed as 1.6 * (1/x^6) * (1/x^6 - 1) to avoid x^12
            \draw[very thick, blue!70!black, smooth]
                plot[domain=0.82:3.0, samples=100]
                (\x, {1.6*(1/(\x*\x*\x*\x*\x*\x))*(1/(\x*\x*\x*\x*\x*\x)-1)});

            % Repulsive region shading (r < sigma = 1.0, V > 0)
            \fill[red!35, opacity=0.7]
                plot[domain=0.82:1.0, samples=40]
                (\x, {min(2.4, 1.6*(1/(\x*\x*\x*\x*\x*\x))*(1/(\x*\x*\x*\x*\x*\x)-1))})
                -- (1.0,0) -- (0.82,0) -- cycle;
            \node[red!70!black, font=\tiny, align=center] at (0.55,1.5) {repulsion\\$r^{-12}$};

            % Attractive region shading (r > sigma, V < 0)
            \fill[blue!25, opacity=0.7]
                plot[domain=1.0:2.8, samples=50]
                (\x, {min(0, 1.6*(1/(\x*\x*\x*\x*\x*\x))*(1/(\x*\x*\x*\x*\x*\x)-1))})
                -- (2.8,0) -- (1.0,0) -- cycle;
            \node[blue!60!black, font=\tiny, align=center] at (2.0,-0.95) {attraction\\$-r^{-6}$};

            % Equilibrium distance at 2^{1/6} ≈ 1.122
            \draw[dashed, black!50] (1.12,0) -- (1.12,-0.45);
            \node[below, font=\tiny, black!60] at (1.12,0.08) {$r_{\mathrm{eq}}$};

            % Sigma marker
            \draw[dashed, black!35] (1.0,-1.3) -- (1.0,0.05);
            \node[below, font=\tiny] at (1.0,-1.3) {$\sigma$};

            % Well depth (minimum at r_eq is ~-0.4 in scaled units)
            \draw[<->, thin, black!60] (0.15,-0.40) -- (0.15,0);
            \node[left, font=\tiny, black!60] at (0.15,-0.20) {$\epsilon$};

            % Zero crossing
            \fill[black] (1.0,0) circle (0.04);
        \end{scope}

        % Potential equation box
        \node[draw=blue!50!black, fill=blue!5, rounded corners=2pt, font=\scriptsize,
              inner sep=4pt, align=center] at (7.5,2.3)
            {$\textcolor{blue!60!black}{V(q) = \displaystyle\sum_{i<j} 4\epsilon\!\left[\left(\frac{\sigma}{r_{ij}}\right)^{\!12} - \left(\frac{\sigma}{r_{ij}}\right)^{\!6}\right]}$};

        % Assumptions box
        \node[draw=black!30, fill=black!3, rounded corners=2pt, font=\tiny,
              inner sep=3pt, align=left] at (1.2,-2.6)
            {\textbf{Assumptions:} 3 particles in 2D $\cdot$ softened potential ($\epsilon_{\mathrm{soft}}{=}0.1$)\\
             $\cdot$ isotropic damping $\cdot$ $\epsilon{=}1,\,\sigma{=}1,\,\gamma{=}0.1$};
    \end{tikzpicture}
    \caption{\textbf{Lennard--Jones cluster}~\citep{jones1924determination}.
    $N{=}3$ particles interact pairwise via the LJ potential.
    Inset: the repulsive core ($r^{-12}$, \textcolor{red!70!black}{red shading}) dominates at $r < \sigma$; the attractive well ($-r^{-6}$, \textcolor{blue!60!black}{blue shading}) has depth~$\epsilon$ near the equilibrium distance~$r_{\mathrm{eq}} = 2^{1/6}\sigma$.
    Dissipation is modeled by isotropic velocity-proportional damping (\textcolor{red!65}{$D = \gamma I$}).}
    \label{fig:lj_detail}
\end{figure}

\paragraph{Lennard--Jones cluster (KNOWN, molecular dynamics).}
Molecular dynamics is a natural stress test for structure-preserving
learners: the Lennard--Jones potential has a steep repulsive wall that
amplifies integration errors, yet energy must remain bounded for the
cluster to stay intact.
A port-Hamiltonian integrator respects this energy budget by
construction, making it a principled alternative to hand-tuned
thermostats.
We simulate $N{=}3$ particles in 2D interacting through the
classical LJ pair potential~\citep{jones1924determination}, with a
velocity-proportional drag that models coupling to an implicit heat
bath.
\begin{itemize}
  \item \emph{State}: $q = (r_1, \ldots, r_N) \in \R^{2N}$ (particle positions in 2D),
    $p = (m_1\dot r_1, \ldots, m_N\dot r_N)$ (momenta).
  \item \emph{Potential}: $\textcolor{blue!60!black}{V(q) = \sum_{i<j} 4\epsilon\bigl[(\sigma/r_{ij})^{12}
    - (\sigma/r_{ij})^6\bigr]}$,
    where $r_{ij} = \|r_i - r_j\|$, $\epsilon$ is the well depth, and
    $\sigma$ is the particle diameter.
    The $r^{-12}$ term encodes Pauli repulsion (atoms cannot overlap);
    the $r^{-6}$ term encodes van der Waals attraction (induced-dipole
    interactions at longer range).
    Together they create a potential well at
    $r_{\mathrm{eq}} = 2^{1/6}\sigma$ that traps the cluster.
  \item \emph{Mass}: $\textcolor{green!50!black}{\Mmat = \mathrm{diag}(m_1 I_2, \ldots, m_N I_2)}$
    --- each particle carries a known scalar mass; the block-diagonal
    structure reflects the absence of cross-particle inertial coupling.
  \item \emph{Damping}: $\textcolor{red!65}{\Dmat = \gamma I_{2N}}$ ---
    isotropic friction that damps every degree of freedom equally.
    Physically, $\gamma$ controls how quickly kinetic energy drains to
    the heat bath; in the conservative limit $\gamma{\to}0$ the cluster
    oscillates forever.
  \item \emph{Identifiability}: All physics (V, M) is given, so the
    inverse problem reduces to recovering a single scalar~$\gamma$.
\end{itemize}

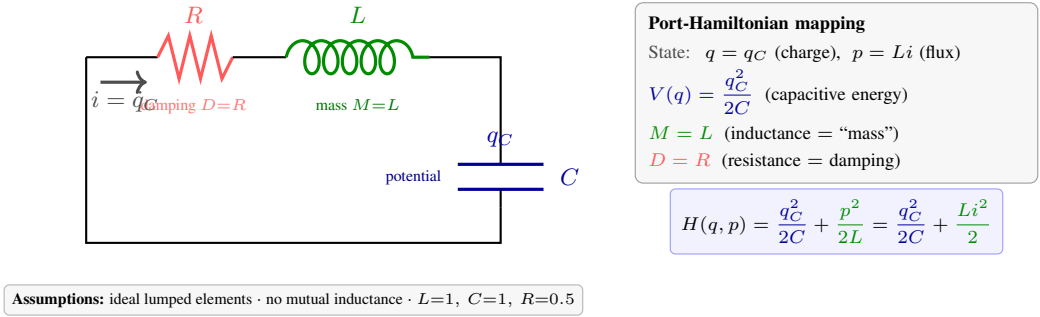
\begin{figure}[H]
    \centering
    \begin{tikzpicture}[scale=0.95]
        % --- Circuit loop ---
        % Top wire
        \draw[thick] (0,0) -- (1.0,0);

        % Resistor (zigzag) — RED for damping
        \draw[very thick, red!65] (1.0,0) -- (1.15,0.3) -- (1.3,-0.3) -- (1.45,0.3)
            -- (1.6,-0.3) -- (1.75,0.3) -- (1.9,-0.3) -- (2.05,0);
        \node[above, font=\small\bfseries, red!65] at (1.5,0.35) {$R$};
        \node[below, font=\tiny, red!55] at (1.5,-0.45) {damping $D{=}R$};

        % Wire to inductor
        \draw[thick] (2.05,0) -- (2.8,0);

        % Inductor (coil) — GREEN for mass/inertia
        \draw[very thick, green!55!black, decorate, decoration={coil, aspect=0.7, segment length=2.5mm, amplitude=2.2mm}]
            (2.8,0) -- (4.8,0);
        \node[above, font=\small\bfseries, green!55!black] at (3.8,0.35) {$L$};
        \node[below, font=\tiny, green!45!black] at (3.8,-0.45) {mass $M{=}L$};

        % Right wire down
        \draw[thick] (4.8,0) -- (5.8,0) -- (5.8,-1.5);

        % Capacitor — BLUE for potential
        \draw[very thick, blue!60!black] (5.2,-1.5) -- (6.4,-1.5);
        \draw[very thick, blue!60!black] (5.2,-1.85) -- (6.4,-1.85);
        \draw[thick] (5.8,-1.85) -- (5.8,-2.1);
        \node[right, font=\small\bfseries, blue!60!black] at (6.5,-1.68) {$C$};
        \node[left, font=\tiny, blue!50!black] at (5.1,-1.68) {potential};

        % Charge label
        \node[font=\small, blue!70!black] at (5.8,-1.15) {$q_C$};

        % Bottom wire
        \draw[thick] (5.8,-2.1) -- (5.8,-2.6) -- (0,-2.6) -- (0,0);

        % Current arrow
        \draw[->, very thick, black!70] (0.2,-0.35) -- (0.85,-0.35);
        \node[below, font=\small, black!70] at (0.55,-0.35) {$i = \dot q_C$};

        % --- pH mapping box (right side) ---
        \node[draw=black!40, fill=black!3, rounded corners=3pt, font=\scriptsize,
              inner sep=5pt, align=left, text width=5.0cm] at (10.5,-0.5)
            {\textbf{Port-Hamiltonian mapping}\\[3pt]
             \textcolor{black!70}{State:}\; $q = q_C$ (charge),\; $p = Li$ (flux)\\[2pt]
             \textcolor{blue!60!black}{$V(q) = \dfrac{q_C^2}{2C}$}\; (capacitive energy)\\[4pt]
             \textcolor{green!55!black}{$M = L$}\; (inductance $=$ ``mass'')\\[2pt]
             \textcolor{red!65}{$D = R$}\; (resistance $=$ damping)};

        % Hamiltonian equation
        \node[draw=blue!40, fill=blue!5, rounded corners=2pt, font=\scriptsize,
              inner sep=4pt, align=center] at (10.5,-2.3)
            {$H(q,p) = \textcolor{blue!60!black}{\dfrac{q_C^2}{2C}}
              + \textcolor{green!55!black}{\dfrac{p^2}{2L}}
              = \textcolor{blue!60!black}{\dfrac{q_C^2}{2C}}
              + \textcolor{green!55!black}{\dfrac{Li^2}{2}}$};

        % Assumptions
        \node[draw=black!30, fill=black!3, rounded corners=2pt, font=\tiny,
              inner sep=3pt, align=left] at (2.9,-3.4)
            {\textbf{Assumptions:} ideal lumped elements $\cdot$ no mutual inductance
             $\cdot$ $L{=}1,\,C{=}1,\,R{=}0.5$};
    \end{tikzpicture}
    \caption{\textbf{Series RLC circuit}~\citep{horowitz2015art}.
    Charge $q_C$ on the capacitor is the generalized coordinate; flux linkage $p = Li$ is the conjugate momentum.
    Color coding shows the port-Hamiltonian mapping:
    \textcolor{blue!60!black}{capacitance $C$} $\to$ potential $V$,
    \textcolor{green!55!black}{inductance $L$} $\to$ mass $M$,
    \textcolor{red!65}{resistance $R$} $\to$ damping $D$.
    The electrical energy $H = q_C^2/2C + Li^2/2$ is the storage function.}
    \label{fig:rlc_detail}
\end{figure}

\paragraph{Series RLC circuit (KNOWN, electrical).}
The series RLC circuit is perhaps the cleanest non-mechanical
port-Hamiltonian system: every textbook component maps one-to-one onto
the $(V, M, D)$ triple, making it an ideal sanity check for
structure-preserving learning in a domain where the analogy is
exact rather than approximate~\citep{horowitz2015art}.
\begin{itemize}
  \item \emph{State}: $q = q_C$ (charge on the capacitor), $p = Li$
    (magnetic flux linkage through the inductor, conjugate to~$q_C$).
    Current $i = \dot q_C$ is the generalized velocity.
  \item \emph{Potential}: $\textcolor{blue!60!black}{V(q) = q_C^2 / 2C}$
    --- electrostatic energy stored in the capacitor.
    The restoring ``force'' $-\partial V/\partial q_C = -q_C/C$ is
    simply the capacitor voltage, driving current back around the loop.
  \item \emph{Mass}: $\textcolor{green!55!black}{\Mmat = L}$ ---
    inductance plays the role of inertia.
    Just as a heavy mass resists changes in velocity, a large inductance
    resists changes in current (Lenz's law).
  \item \emph{Damping}: $\textcolor{red!65}{\Dmat = R}$ ---
    resistance dissipates electrical energy as Joule heat, exactly
    analogous to viscous friction converting kinetic energy to thermal
    energy.
  \item \emph{Identifiability}: With $V$ and $L$ given, the resistance
    $R$ is uniquely recoverable from the oscillation decay envelope.
\end{itemize}

\begin{figure}[H]
    \centering
    \begin{tikzpicture}[scale=1.0]
        % Thermal body 1 — blue border for potential (stored energy)
        \draw[very thick, blue!60!black, fill=red!12, rounded corners=4pt] (0,0) rectangle (3.0,2.2);
        \node[font=\large, blue!70!black] at (1.5,1.3) {$T_1$};
        \node[font=\scriptsize, black!60] at (1.5,0.65) {heat capacity $c_1$};
        \node[font=\tiny, blue!50!black] at (1.5,0.25)
            {$\textcolor{blue!60!black}{V_1 = \tfrac{1}{2}c_1 T_1^2}$};

        % Mass annotation above body 1
        \node[font=\scriptsize, green!55!black] at (1.5,2.75)
            {$\textcolor{green!55!black}{M_1 = \tau_1}$};
        \draw[->, thick, green!55!black] (1.5,2.55) -- (1.5,2.25);

        % Thermal body 2 — blue border for potential (stored energy)
        \draw[very thick, blue!60!black, fill=blue!12, rounded corners=4pt] (6.0,0) rectangle (9.0,2.2);
        \node[font=\large, blue!70!black] at (7.5,1.3) {$T_2$};
        \node[font=\scriptsize, black!60] at (7.5,0.65) {heat capacity $c_2$};
        \node[font=\tiny, blue!50!black] at (7.5,0.25)
            {$\textcolor{blue!60!black}{V_2 = \tfrac{1}{2}c_2 T_2^2}$};

        % Mass annotation above body 2
        \node[font=\scriptsize, green!55!black] at (7.5,2.75)
            {$\textcolor{green!55!black}{M_2 = \tau_2}$};
        \draw[->, thick, green!55!black] (7.5,2.55) -- (7.5,2.25);

        % Coupling arrows — red for dissipation
        \draw[->, very thick, red!65] (3.1,1.5) -- (5.9,1.5);
        \node[above, font=\small, red!65] at (4.5,1.5) {$\dot Q_{12}$};

        \draw[<-, very thick, red!65] (3.1,0.7) -- (5.9,0.7);
        \node[below, font=\scriptsize, red!60] at (4.5,0.55)
            {$\textcolor{red!65}{D = \kappa}$:\; $\dot Q = \kappa(T_1 - T_2)$};

        % Individual heat loss arrows to environment (downward)
        \draw[->, thick, red!45, decorate, decoration={snake, amplitude=0.4mm, segment length=2.5mm}]
            (1.5,-0.1) -- (1.5,-0.8);
        \node[below, font=\tiny, red!45] at (1.5,-0.8) {loss $\kappa_{\mathrm{loss}} T_1$};

        \draw[->, thick, red!45, decorate, decoration={snake, amplitude=0.4mm, segment length=2.5mm}]
            (7.5,-0.1) -- (7.5,-0.8);
        \node[below, font=\tiny, red!45] at (7.5,-0.8) {loss $\kappa_{\mathrm{loss}} T_2$};

        % Hamiltonian equation box
        \node[draw=blue!40, fill=blue!5, rounded corners=2pt, font=\scriptsize,
              inner sep=4pt, align=center] at (4.5,-1.7)
            {$H = \textcolor{blue!60!black}{\tfrac{1}{2}c_1 T_1^2 + \tfrac{1}{2}c_2 T_2^2}
              + \textcolor{green!55!black}{\tfrac{p_1^2}{2\tau_1} + \tfrac{p_2^2}{2\tau_2}}$
              \quad where $q{=}(T_1,T_2)$, $p_i{=}\tau_i\dot T_i$};

        % Assumptions box
        \node[draw=black!30, fill=black!3, rounded corners=2pt, font=\tiny,
              inner sep=3pt, align=left] at (4.5,-2.6)
            {\textbf{Assumptions:} lumped-parameter model (spatially uniform $T_i$)
             $\cdot$ two coupled thermal masses\\
             $\cdot$ $c_1{=}c_2{=}1$,\, coupling $\kappa{=}0.5$,\, loss $\kappa_{\mathrm{loss}}{=}0.1$
             $\cdot$ linear Newton cooling};
    \end{tikzpicture}
    \caption{\textbf{Lumped heat exchange}~\citep{incropera2007fundamentals}.
    Two thermal masses with temperatures $T_1$ and $T_2$ are the generalized coordinates ($n_{\mathrm{dof}}{=}2$).
    Stored thermal energy \textcolor{blue!60!black}{$V_i = \tfrac{1}{2}c_i T_i^2$} is the potential for each mass;
    the thermal time constants \textcolor{green!55!black}{$\tau_i$} play the role of inertia;
    and the coupling coefficient \textcolor{red!65}{$\kappa$} provides mutual heat exchange
    while individual losses $\kappa_{\mathrm{loss}}$ dissipate energy to the environment.
    This is a lumped-parameter model: temperature is assumed spatially uniform within each thermal mass
    (valid at low Biot number).}
    \label{fig:heat_exchange_detail}
\end{figure}
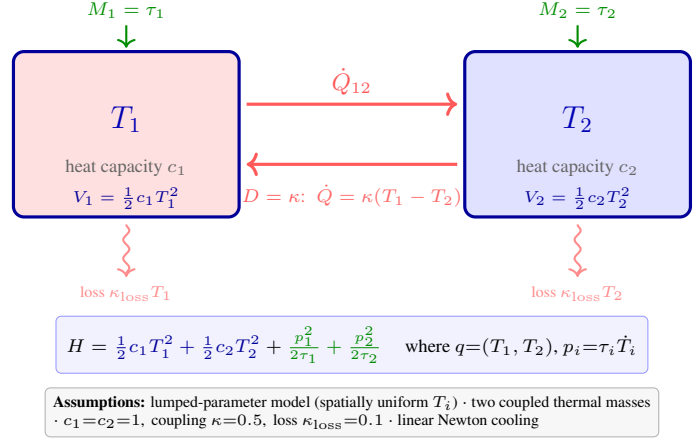

\paragraph{Heat exchange (KNOWN, thermal).}
This environment tests whether a port-Hamiltonian learner can
handle a system where ``position'' is temperature and ``momentum''
has no mechanical meaning at all~\citep{incropera2007fundamentals}.
Two thermal masses exchange heat with each other through a
coupling coefficient~$\kappa$ and independently lose heat to the
environment at rate~$\kappa_{\mathrm{loss}}$.
The key analogy: temperature differences create a thermodynamic
``force'' (like a compressed spring), while thermal time constants
resist instantaneous temperature change (like inertia resists
acceleration).
The lumped-parameter assumption---each mass has a single, spatially
uniform temperature---is valid when internal conduction is fast
relative to surface transfer (low Biot number).
\begin{itemize}
  \item \emph{State}: $q = (T_1, T_2)$ (temperatures of the two masses),
    $p = (\tau_1\dot T_1, \tau_2\dot T_2)$ (thermal ``momenta'').
    The Legendre structure $\partial H/\partial p_i = p_i/\tau_i$
    gives a thermal ``velocity''---the rate of temperature change.
  \item \emph{Potential}: $\textcolor{blue!60!black}{V(T_1,T_2)
    = \tfrac{1}{2}c_1\,T_1^2 + \tfrac{1}{2}c_2\,T_2^2
    + \tfrac{1}{2}\kappa(T_1 - T_2)^2}$ ---
    the first two terms store thermal energy in each mass individually;
    the coupling term $\tfrac{1}{2}\kappa(T_1{-}T_2)^2$ acts like a
    spring connecting the two temperatures, driving heat from hot to
    cold whenever $T_1 \neq T_2$.
  \item \emph{Mass}: $\textcolor{green!55!black}{\Mmat = \mathrm{diag}(\tau_1, \tau_2) = I}$
    --- thermal time constants play the role of inertia: a large $\tau$
    means the mass responds sluggishly to energy input, just as a heavy
    object accelerates slowly under force.
    Set to unity in our experiments.
  \item \emph{Damping}: $\textcolor{red!65}{\Dmat = \kappa_{\mathrm{loss}} I}$
    --- heat loss to the environment.
    This is \emph{distinct} from the inter-body coupling~$\kappa$:
    coupling redistributes energy between masses, while
    $\kappa_{\mathrm{loss}}$ removes it from the system entirely.
  \item \emph{Identifiability}: With $V$ and $\Mmat$ given,
    $\kappa_{\mathrm{loss}}$ is uniquely recoverable---the only
    unknown is how fast energy leaves the system.
\end{itemize}

% ---- PARTIAL regime ----
\paragraph{PARTIAL regime.}
Here the functional form of $V$ or $\Mmat$ is known (e.g., from first principles or CAD), but some parameters must be learned from data.
Anchoring part of the model breaks gauge freedom and makes the remaining unknowns identifiable.

\paragraph{Pendulum with unknown mass (PARTIAL, mechanical).}
This variant of the simple pendulum (Fig.~\ref{fig:pendulum_detail})
tests a core prediction of our identifiability theory: when the
potential~$V$ is anchored by physics, the mass--damping gauge freedom
is broken and both become recoverable.
The setup is deliberately minimal---a single unknown scalar~$m$---so
that success or failure is unambiguous.
\begin{itemize}
  \item \emph{State}: $q = \theta$ (angle from vertical),
    $p = m\ell^2\dot\theta$ (angular momentum).
  \item \emph{Potential}: $V(\theta) = -g\cos\theta$ --- given from
    gravity.
    Because $mg\ell$ is absorbed into a single effective parameter~$g$,
    the potential shape is fully specified independent of the unknown
    mass.
  \item \emph{Mass}: $\Mmat = m\ell^2$ --- a single positive scalar to
    be learned.
    This is the simplest possible PARTIAL scenario: one unknown
    inertia parameter.
  \item \emph{Damping}: Learned with bounded strength.
  \item \emph{Identifiability}: Anchoring $V$ is the key.
    Without it, any rescaling $V \to \alpha V$, $\Mmat \to \alpha\Mmat$
    produces identical trajectories (gauge freedom).
    Fixing $V$ breaks this symmetry, making both $m$ and $\Dmat$
    uniquely recoverable.
\end{itemize}

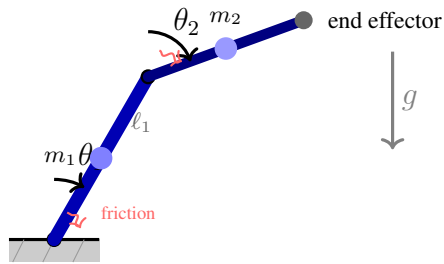
\begin{figure}[H]
    \centering
    \begin{tikzpicture}[scale=1.0]
        % Base
        \fill[black!20] (-0.6,-0.3) rectangle (0.6,0);
        \draw[very thick] (-0.6,0) -- (0.6,0);
        \foreach \x in {-0.4,0,0.4}
            \draw[black!40] (\x,0) -- (\x-0.15,-0.3);
        \fill[black] (0,0) circle (0.1);

        % Link 1
        \def\tOne{60}
        \coordinate (J1) at (0,0);
        \coordinate (E1) at (\tOne:2.5);
        \draw[line width=5pt, blue!70!black, line cap=round] (J1) -- (E1);

        % Joint 1 label
        \draw[->, very thick] (0,0.8) arc (90:\tOne:0.8);
        \node[font=\large] at (0.5,1.1) {$\theta_1$};

        % Mass 1
        \coordinate (M1) at (\tOne:1.25);
        \fill[blue!50] (M1) circle (0.15);
        \node[left, font=\small] at ($(M1)+(-0.2,0)$) {$m_1$};

        % Link 1 length
        \node[right, font=\small, black!60] at (\tOne:1.8) {$\ell_1$};

        % Joint 2
        \fill[black] (E1) circle (0.1);

        % Link 2
        \def\tTwo{20}
        \coordinate (E2) at ($(E1)+(\tTwo:2.2)$);
        \draw[line width=4pt, blue!50!black, line cap=round] (E1) -- (E2);

        % Joint 2 angle
        \draw[->, very thick] ($(E1)+(90:0.6)$) arc (90:\tTwo:0.6);
        \node[font=\large] at ($(E1)+(55:0.9)$) {$\theta_2$};

        % Mass 2
        \coordinate (M2) at ($(E1)!0.5!(E2)$);
        \fill[blue!40] (M2) circle (0.15);
        \node[above, font=\small] at ($(M2)+(0,0.2)$) {$m_2$};

        % End effector
        \fill[black!60] (E2) circle (0.12);
        \node[right, font=\small] at ($(E2)+(0.2,0)$) {end effector};

        % Gravity
        \draw[->, very thick, black!50] (4.5,2.5) -- (4.5,1.2);
        \node[right, black!50, font=\large] at (4.5,1.85) {$g$};

        % Friction at joints
        \draw[<-, thick, red!60, decorate, decoration={snake, amplitude=0.4mm, segment length=2mm}]
            (0.35,0.15) arc (20:\tOne:0.4);
        \node[red!60, font=\scriptsize, right] at (0.5,0.35) {friction};

        \draw[<-, thick, red!60, decorate, decoration={snake, amplitude=0.4mm, segment length=2mm}]
            ($(E1)+(0.45,0.2)$) -- ($(E1)+(0.15,0.35)$);
    \end{tikzpicture}
    \caption{\textbf{Robot arm (2-DOF manipulator).} Joint angles $\theta_1, \theta_2$ are generalized coordinates.  Link inertias and gravity torques are known from CAD; joint friction (red) is learned from trajectory data.}
    \label{fig:robot_arm_detail}
\end{figure}

\paragraph{Robot arm (PARTIAL, robotics).}
Industrial manipulators are a compelling PARTIAL-regime test case:
CAD models provide excellent link inertias and gravity torques, but
joint friction---the dominant source of trajectory error in
practice---must be calibrated from motion data.
This is exactly the scenario our framework targets: anchor the
well-understood physics ($V$, $\Mmat$) and let the learner focus on
the residual dissipation.
The robot arm is also the first system in our benchmark with a
\emph{configuration-dependent} mass matrix, where off-diagonal
entries couple different joints through Coriolis and centrifugal
effects.
\begin{itemize}
  \item \emph{State}: $q = (\theta_1, \ldots, \theta_n)$ (joint angles),
    $p = \Mmat(q)\dot q$ (generalized momenta).
  \item \emph{Potential}: $V(q)$ is a gravity regressor---a linear
    combination of known trigonometric basis functions weighted by
    link-mass--length products:
    $V(q) = \sum_i m_i g\, \ell_{\mathrm{cm},i}\sin(\cdot)$.
    The functional form comes from rigid-body kinematics; parameters
    may be refined from data.
  \item \emph{Mass}: $\Mmat(q)$ is the $n \times n$
    configuration-dependent inertia tensor, computed from CAD link
    inertias via the recursive Newton--Euler equations.
    Unlike the scalar-mass systems above, $\Mmat(q)$ varies with
    configuration and has non-zero off-diagonal entries that encode
    how motion at one joint creates torques at another.
  \item \emph{Damping}: $\Dmat(q)$ encodes joint friction---typically
    diagonal, with separate Coulomb + viscous friction per joint,
    learned from trajectory data.
  \item \emph{Identifiability}: With both $V$ and $\Mmat$ anchored by
    CAD, the remaining friction parameters are fully identifiable.
\end{itemize}

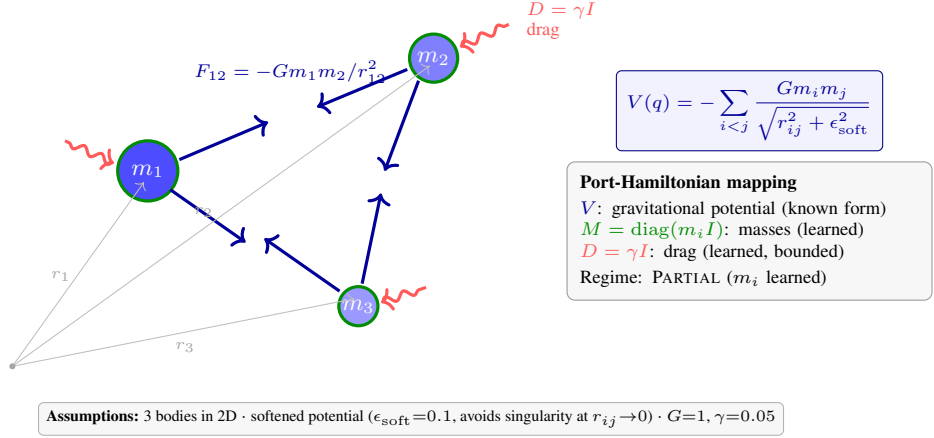
\begin{figure}[H]
    \centering
    \begin{tikzpicture}[scale=1.0]
        % Three bodies — sized proportional to mass, green borders for "mass" component
        \draw[very thick, green!55!black, fill=blue!70] (0,0) circle (0.40);
        \node[white, font=\small\bfseries] at (0,0) {$m_1$};

        \draw[very thick, green!55!black, fill=blue!50] (3.8,1.5) circle (0.32);
        \node[white, font=\small\bfseries] at (3.8,1.5) {$m_2$};

        \draw[very thick, green!55!black, fill=blue!40] (2.8,-1.8) circle (0.26);
        \node[white, font=\small\bfseries] at (2.8,-1.8) {$m_3$};

        % Gravitational force arrows — blue for conservative potential
        \draw[->, very thick, blue!60!black] (0.42,0.15) -- (1.6,0.65);
        \draw[->, very thick, blue!60!black] (3.45,1.35) -- (2.25,0.85);
        \node[above, font=\scriptsize, blue!60!black] at (1.9,1.05)
            {$F_{12} = -Gm_1m_2/r_{12}^2$};

        \draw[->, very thick, blue!60!black] (0.3,-0.25) -- (1.3,-0.95);
        \draw[->, very thick, blue!60!black] (2.55,-1.6) -- (1.55,-0.9);

        \draw[->, very thick, blue!60!black] (3.6,1.2) -- (3.15,0.0);
        \draw[->, very thick, blue!60!black] (2.85,-1.5) -- (3.1,-0.3);

        % Drag annotation — red wavy arrows for damping
        \draw[<-, very thick, red!65, decorate, decoration={snake, amplitude=0.5mm, segment length=2.5mm}]
            (-0.45,0.1) -- (-1.1,0.4);
        \draw[<-, very thick, red!65, decorate, decoration={snake, amplitude=0.5mm, segment length=2.5mm}]
            (4.15,1.65) -- (4.8,1.95);
        \draw[<-, very thick, red!65, decorate, decoration={snake, amplitude=0.5mm, segment length=2.5mm}]
            (3.1,-1.75) -- (3.7,-1.55);
        \node[red!65, font=\scriptsize, align=left] at (5.5,2.0)
            {$\textcolor{red!65}{D = \gamma I}$\\[-0.1em]\scriptsize drag};

        % Position vectors from origin
        \draw[->, thin, black!25] (-1.8,-2.6) -- (-0.05,-0.15);
        \node[left, font=\tiny, black!35] at (-0.9,-1.4) {$r_1$};
        \draw[->, thin, black!25] (-1.8,-2.6) -- (3.75,1.4);
        \node[left, font=\tiny, black!35] at (1.0,-0.55) {$r_2$};
        \draw[->, thin, black!25] (-1.8,-2.6) -- (2.75,-1.7);
        \node[below, font=\tiny, black!35] at (0.5,-2.15) {$r_3$};
        \fill[black!35] (-1.8,-2.6) circle (0.04);

        % Potential equation box
        \node[draw=blue!50!black, fill=blue!5, rounded corners=2pt, font=\scriptsize,
              inner sep=4pt, align=center] at (8.0,0.8)
            {$\textcolor{blue!60!black}{V(q) = -\displaystyle\sum_{i<j} \frac{Gm_im_j}{\sqrt{r_{ij}^2 + \epsilon_{\mathrm{soft}}^2}}}$};

        % pH mapping box
        \node[draw=black!40, fill=black!3, rounded corners=3pt, font=\scriptsize,
              inner sep=5pt, align=left, text width=4.5cm] at (8.0,-0.8)
            {\textbf{Port-Hamiltonian mapping}\\[2pt]
             \textcolor{blue!60!black}{$V$}: gravitational potential (known form)\\
             \textcolor{green!55!black}{$M = \mathrm{diag}(m_i I)$}: masses (learned)\\
             \textcolor{red!65}{$D = \gamma I$}: drag (learned, bounded)\\[2pt]
             Regime: \textsc{Partial} ($m_i$ learned)};

        % Assumptions box
        \node[draw=black!30, fill=black!3, rounded corners=2pt, font=\tiny,
              inner sep=3pt, align=left] at (3.5,-3.3)
            {\textbf{Assumptions:} 3 bodies in 2D $\cdot$ softened potential ($\epsilon_{\mathrm{soft}}{=}0.1$, avoids singularity at $r_{ij}{\to}0$)
             $\cdot$ $G{=}1$, $\gamma{=}0.05$};
    \end{tikzpicture}
    \caption{\textbf{N-body gravitational system}~\citep{binney2008galactic}.
    $N{=}3$ point masses interact pairwise via Newtonian gravity.
    The \textcolor{blue!60!black}{gravitational potential} $V$ is known in functional form but individual
    \textcolor{green!55!black}{masses $m_i$} are learned (PARTIAL regime).
    A softened potential $r_{ij}^2 + \epsilon_{\mathrm{soft}}^2$ replaces $r_{ij}^2$ to avoid
    the singularity at close approach.
    \textcolor{red!65}{Drag $D = \gamma I$} models residual dissipation (e.g., tidal friction or numerical damping);
    it is learned with tight bounds since point-mass gravitational systems are nearly conservative.}
    \label{fig:nbody_detail}
\end{figure}

\paragraph{N-body gravity (PARTIAL, astrophysics).}
Gravitational N-body problems are the prototypical example of a system
where the functional form of the physics is known (Newton's law) but
the parameters---individual masses---must be inferred from observed
trajectories~\citep{binney2008galactic}.
This makes N-body an ideal PARTIAL-regime benchmark: the learner is
given the gravitational template and must fill in the mass values,
a task that is meaningful in astrophysics (e.g., inferring stellar
masses from orbital data).
We use Plummer softening to regularize the $r{\to}0$ singularity, a
standard technique in computational astrophysics that replaces the
point-particle divergence with a smooth core.
\begin{itemize}
  \item \emph{State}: $q = (r_1, \ldots, r_N) \in \R^{2N}$ (positions in 2D),
    $p = (m_1\dot r_1, \ldots, m_N\dot r_N)$.
  \item \emph{Potential}: $\textcolor{blue!60!black}{V(q) = -\sum_{i<j} Gm_im_j / \sqrt{r_{ij}^2 + \epsilon_{\mathrm{soft}}^2}}$ ---
    the gravitational potential is known in functional form, but the
    mass products $m_im_j$ are unknown parameters to be learned.
    This is what makes the system PARTIAL rather than KNOWN: the
    \emph{shape} of $V$ is given, but its \emph{strength} is not.
  \item \emph{Mass}: $\textcolor{green!55!black}{\Mmat = \mathrm{diag}(m_1 I_2, \ldots, m_N I_2)}$ ---
    individual masses appear both here and in $V$, coupling the
    two components.
    Each $m_i$ is learned as a positive scalar.
  \item \emph{Damping}: $\textcolor{red!65}{\Dmat = \gamma I}$ ---
    a small drag modeling tidal dissipation or dynamical friction.
    For isolated point masses this is nearly zero; we learn it with
    tight bounds to test whether PHAST can recover a weak signal.
  \item \emph{Identifiability}: The masses $m_i$ appear as products
    $m_im_j$ in $V$ and individually in $\Mmat$, so they are jointly
    identifiable up to an overall scale.
    Fixing $G$ resolves this gauge freedom.
\end{itemize}

\paragraph{General partially known system (PARTIAL, template).}
This row in Table~\ref{tab:structured_hamiltonians} represents the
generic PARTIAL recipe:
\begin{itemize}
  \item \emph{Potential}: $V(q) = \bar V(q) + \varepsilon\,\tilde V(q)$,
    where $\bar V$ is a known physics template and
    $\tilde V$ is a neural residual bounded by $\varepsilon$.
    This allows the model to correct for unmodeled effects (e.g.,
    spring nonlinearities, thermal expansion) while staying close to
    the physics prior.
  \item \emph{Mass}: $\Mmat = m_0 I + UU^\top$ --- a diagonal-plus-low-rank
    SPD parameterization, where $m_0 > 0$ is a known base inertia and
    $UU^\top$ captures learned corrections.
  \item \emph{Damping}: Learned with bounded strength
    ($\sum_i \beta_i \le \bar\beta$, Eq.~\ref{eq:damping_bound}).
  \item \emph{Identifiability}: Partial --- depends on how much of $V$
    and $\Mmat$ is anchored.
\end{itemize}

% ---- UNKNOWN regime ----
\paragraph{UNKNOWN regime.}
When no physics is available, all three components $(V, \Mmat, \Dmat)$ are learned from data.
The port-Hamiltonian structure still guarantees passivity ($dH/dt \le 0$), which stabilizes long-horizon rollouts, but the recovered parameters are no longer uniquely identifiable (see the gauge-freedom discussion in Appendix~\ref{app:ablations:gauge_freedom}).

\paragraph{Black-box dynamics (UNKNOWN, general).}
When no physics is available, all three components are learned:
\begin{itemize}
  \item \emph{Potential}: $V(q) = f_\theta(q)$, a neural network with
    no structural constraint beyond smoothness.
  \item \emph{Mass}: $\Mmat(q)$ is parameterized as an SPD neural network
    (diagonal-plus-low-rank, Eq.~\ref{eq:householder_mass}).
  \item \emph{Damping}: $\Dmat(q)$ is parameterized as a PSD neural network
    (Eq.~\ref{eq:householder_damping}).
  \item \emph{Identifiability}: No --- the inverse problem is
    underdetermined.
    Multiple $(V, \Mmat, \Dmat)$ triples can generate identical
    trajectories (gauge freedom,
    Appendix~\ref{app:ablations:gauge_freedom}).
    Forecasting may still be accurate, but recovered parameters lack
    unique physical meaning.
\end{itemize}

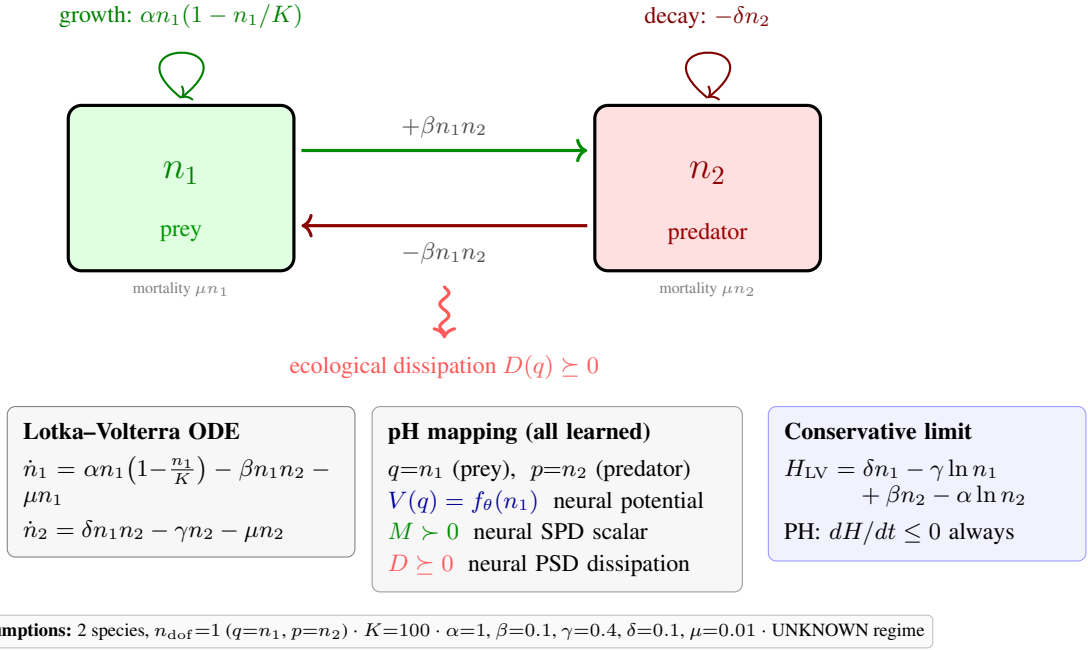
\begin{figure}[H]
    \centering
    \begin{tikzpicture}[scale=1.0]
        % ============================================================
        % Row 1: Ecosystem schematic (centered, width ~10cm)
        % ============================================================

        % Prey population box
        \draw[very thick, fill=green!12, rounded corners=5pt] (0,0) rectangle (3.0,2.2);
        \node[font=\Large, green!50!black] at (1.5,1.3) {$n_1$};
        \node[font=\small, green!60!black] at (1.5,0.5) {prey};

        % Predator population box
        \draw[very thick, fill=red!12, rounded corners=5pt] (7.0,0) rectangle (10.0,2.2);
        \node[font=\Large, red!55!black] at (8.5,1.3) {$n_2$};
        \node[font=\small, red!60!black] at (8.5,0.5) {predator};

        % Interaction arrows — well separated from boxes
        \draw[->, very thick, green!55!black] (3.1,1.6) -- (6.9,1.6);
        \node[above, font=\small, black!70] at (5.0,1.65) {$+\beta n_1 n_2$};

        \draw[->, very thick, red!55!black] (6.9,0.6) -- (3.1,0.6);
        \node[below, font=\small, black!70] at (5.0,0.5) {$-\beta n_1 n_2$};

        % Self-growth (prey) with carrying capacity
        \draw[->, thick, green!50!black] (1.5,2.3) .. controls (0.5,3.1) and (2.5,3.1) .. (1.5,2.3);
        \node[above, font=\small, green!50!black] at (1.5,3.1)
            {growth: $\alpha n_1(1 - n_1/K)$};

        % Self-decay (predator)
        \draw[->, thick, red!50!black] (8.5,2.3) .. controls (7.5,3.1) and (9.5,3.1) .. (8.5,2.3);
        \node[above, font=\small, red!50!black] at (8.5,3.1) {decay: $-\delta n_2$};

        % Dissipation annotation — red, below center
        \draw[->, very thick, red!65, decorate, decoration={snake, amplitude=0.6mm, segment length=3mm}]
            (5.0,-0.2) -- (5.0,-0.9);
        \node[below, font=\small, red!65] at (5.0,-1.0)
            {ecological dissipation $\textcolor{red!65}{D(q)\succeq 0}$};

        % Background mortality labels inside boxes
        \node[font=\tiny, black!50] at (1.5,-0.25) {mortality $\mu n_1$};
        \node[font=\tiny, black!50] at (8.5,-0.25) {mortality $\mu n_2$};

        % ============================================================
        % Row 2: Three info panels side by side (below the schematic)
        % ============================================================

        % ODE box (left)
        \node[draw=black!50, fill=black!3, rounded corners=3pt, font=\small,
              inner sep=6pt, align=left, text width=4.2cm,
              anchor=north] (odebox) at (1.5,-1.8)
            {\textbf{Lotka--Volterra ODE}\\[4pt]
             $\dot n_1 = \alpha n_1\!\left(1{-}\frac{n_1}{K}\right) - \beta n_1 n_2 - \mu n_1$\\[3pt]
             $\dot n_2 = \delta n_1 n_2 - \gamma n_2 - \mu n_2$};

        % pH mapping box (center)
        \node[draw=black!40, fill=black!3, rounded corners=3pt, font=\small,
              inner sep=6pt, align=left, text width=4.5cm,
              anchor=north] (phbox) at (6.5,-1.8)
            {\textbf{pH mapping (all learned)}\\[4pt]
             $q{=}n_1$ (prey),\; $p{=}n_2$ (predator)\\[2pt]
             \textcolor{blue!60!black}{$V(q) = f_\theta(n_1)$}\; neural potential\\[2pt]
             \textcolor{green!55!black}{$M\succ 0$}\; neural SPD scalar\\[2pt]
             \textcolor{red!65}{$D\succeq 0$}\; neural PSD dissipation};

        % Conservative Hamiltonian box (right)
        \node[draw=blue!40, fill=blue!5, rounded corners=3pt, font=\small,
              inner sep=6pt, align=left, text width=3.8cm,
              anchor=north west] (consbox) at (9.3,-1.8)
            {\textbf{Conservative limit}\\[4pt]
             $H_{\mathrm{LV}} = \delta n_1 - \gamma\ln n_1$\\
             $\phantom{H_{\mathrm{LV}} =}{}+ \beta n_2 - \alpha\ln n_2$\\[4pt]
             {\footnotesize PH: $dH/dt \le 0$ always}};

        % Assumptions line at the very bottom
        \node[draw=black!25, fill=black!2, rounded corners=2pt, font=\scriptsize,
              inner sep=3pt, align=left] at (5.0,-4.8)
            {\textbf{Assumptions:} 2 species, $n_{\mathrm{dof}}{=}1$ ($q{=}n_1$, $p{=}n_2$) $\cdot$ $K{=}100$ $\cdot$ $\alpha{=}1$, $\beta{=}0.1$, $\gamma{=}0.4$, $\delta{=}0.1$, $\mu{=}0.01$ $\cdot$ UNKNOWN regime};
    \end{tikzpicture}
    \caption{\textbf{Predator--prey (Lotka--Volterra)}~\citep{murray2002mathematical,strogatz2018nonlinear}.
    Prey density $n_1$ is the generalized coordinate~$q$; predator density $n_2$ serves as the conjugate momentum~$p$ ($n_{\mathrm{dof}}{=}1$).
    All port-Hamiltonian components are learned (UNKNOWN regime):
    \textcolor{blue!60!black}{$V(q)$} is a neural potential generalizing the classical Lotka--Volterra conserved quantity,
    \textcolor{green!55!black}{$M$} is a learned SPD ``ecological inertia,'' and
    \textcolor{red!65}{$D$} is a learned PSD dissipation encoding mortality, disease, and resource depletion.
    The port-Hamiltonian structure guarantees bounded dynamics even without parameter recovery.}
    \label{fig:predator_prey_detail}
\end{figure}

\paragraph{Predator--prey (UNKNOWN, ecology).}
This is the most challenging benchmark in our suite: an ecological
system with no known physics, where the port-Hamiltonian framework must
be justified purely by its structural benefits rather than by any
mechanical analogy~\citep{murray2002mathematical,strogatz2018nonlinear}.
The classical Lotka--Volterra model admits a conserved quantity
$H = \delta n_1 - \gamma\ln n_1 + \beta n_2 - \alpha\ln n_2$ in the
conservative limit, hinting that a Hamiltonian viewpoint is natural.
Our simulator breaks conservation with a carrying-capacity term
$\alpha n_1(1{-}n_1/K)$ and background mortality~$\mu$, making the
dynamics dissipative---exactly the setting port-Hamiltonian structure
is designed for.
The system has $n_{\mathrm{dof}} = 1$: prey density~$n_1$ is the
generalized coordinate and predator density~$n_2$ serves as the
conjugate momentum.
This pairing has no interpretation as mass~$\times$~velocity, but it
is not arbitrary: it lets the skew-symmetric~$J$ matrix couple prey
growth to predator response (and vice versa), while the
positive-semidefinite~$D$ captures all irreversible losses.
The guarantee $dH/dt \le 0$ then enforces bounded population
dynamics---populations cannot diverge---even though no physics is
provided.
\begin{itemize}
  \item \emph{State}: $q = n_1$ (prey density), $p = n_2$ (predator
    density).
    The $(q,p)$ pairing is non-standard but structurally motivated:
    it lets the pH framework enforce that predator--prey oscillations
    remain bounded.
  \item \emph{Potential}: $\textcolor{blue!60!black}{V(q) = f_\theta(n_1)}$
    --- a neural network of prey density alone, generalizing the
    prey-dependent part of the classical conserved quantity.
    The predator-dependent terms are absorbed into the Hamiltonian's
    kinetic part $p^2/2M$.
  \item \emph{Mass}: $\textcolor{green!55!black}{M(q)}$ --- a learned
    SPD scalar that controls how sensitively predator density (the
    ``momentum'') responds to changes in the ecological potential.
    Larger $M$ means more sluggish predator response, analogous to
    heavier inertia.
  \item \emph{Damping}: $\textcolor{red!65}{D(q)}$ --- a learned PSD
    scalar encoding irreversible ecological losses: disease,
    emigration, starvation, and resource depletion.
  \item \emph{Identifiability}: No --- the same gauge freedom as
    any fully black-box system applies.
    But identifiability is not the goal here; the value of pH
    structure is \emph{stability}: long-horizon rollouts remain
    physical (bounded populations) where unconstrained models
    diverge.
\end{itemize}

% ===========================================================================
% C. Architecture Details
% ===========================================================================
\section{Architecture Details}
\label{app:arch}

\subsection{Terminology: Depth vs.\ Substeps vs.\ Blocks}
\label{app:arch:terminology}

The word \emph{layer} is overloaded in the literature; we disambiguate:
\begin{itemize}
    \item \textbf{Observer depth} (FD+TCN layers) refers to the number of layers in the causal observer $o_\phi$ in Eq.~\eqref{eq:qonly_observer}.
    \item \textbf{PHAST substeps} $L$ refer to numerical \emph{substepping} of the same port-Hamiltonian update $\Phi_{\dt}$ within one environment step.
    All substeps reuse the same learned modules $(V,\Mmat,\Dmat)$ (parameters shared); only the intermediate states differ:
    \begin{equation}
      x_{t+1} = \underbrace{\Phi_{\dt/L} \circ \cdots \circ \Phi_{\dt/L}}_{L\ \text{substeps}}(x_t).
      \label{eq:phast_substeps}
    \end{equation}
    \item \textbf{PHAST blocks} (optional) refer to a sequence-model wrapper (projections + residual) around the PHAST transition; our q-only experiments unroll $\Phi_{\dt}$ directly and do not stack PHAST blocks.
\end{itemize}

\subsection{Velocity Computation}
\label{app:arch:velocity}

PHAST supports a general configuration-dependent mass $\Mmat(q)\succ 0$;
in our main experiments we use a constant-mass approximation $\Mmat(q)\approx\Mmat$
for efficiency.
In the UNKNOWN regime, PHAST learns this (constant) mass matrix $\Mmat \succ 0$ with a diagonal-plus-low-rank parameterization:
\begin{equation}
  \Mmat = \mathrm{diag}(d) + \sum_{i=1}^{r}\alpha_i\,k_i k_i^\T,
  \qquad d_j>0\ (j=1,\ldots,n),\ \alpha_i\ge 0.
  \label{eq:householder_mass}
\end{equation}
Equivalently, letting $\Lambda=\mathrm{diag}(d)$ and $U=[\sqrt{\alpha_1}k_1,\ldots,\sqrt{\alpha_r}k_r]$, we have $\Mmat=\Lambda+UU^\T$.
We compute $v=\Mmat^{-1}p$ via the Woodbury identity (Alg.~\ref{alg:velocity}).
The full Strang-splitting update that uses this primitive is given in Algorithm~\ref{alg:phast_step} (main paper).

\begin{algorithm}[t]
\caption{Compute $v = \Mmat^{-1}p$ via Woodbury}
\label{alg:velocity}
\begin{algorithmic}[1]
\REQUIRE $p \in \R^n$, diagonal $d \in \R_{>0}^n$ defining $\Lambda=\mathrm{diag}(d)$, and $U\in\R^{n\times r}$ such that $\Mmat=\Lambda+UU^\T$
\STATE $z \leftarrow p \oslash d$ \COMMENT{$z=\Lambda^{-1}p$;\; $O(n)$ element-wise division}
\STATE $Z \leftarrow U \oslash d$ \COMMENT{$Z=\Lambda^{-1}U$;\; $O(nr)$ column-wise division}
\STATE $A \leftarrow I_r + U^\T Z$ \COMMENT{$O(nr^2)$}
\STATE $c \leftarrow A^{-1}(U^\T z)$ \COMMENT{$O(r^3)$}
\STATE $v \leftarrow z - Zc$ \COMMENT{$O(nr)$}
\RETURN $v$
\end{algorithmic}
\end{algorithm}

\subsection{Q-Only Pipeline}
\label{app:arch:qonly}

When only positions $q_t$ are observed (common in vision-based robotics), the Markov state is $x_t=(q_t,p_t)$ but $p_t$ is unobserved.
PHAST uses a causal observer and a canonicalizer to infer a single initial canonical state from a burn-in context, then predicts open-loop with the same PHAST core transition.

\paragraph{Observer and takeover rollouts.}
We use the finite-difference + TCN observer in Eq.~\eqref{eq:qonly_observer} and infer a single initial state at the end of the burn-in window, then roll out open-loop via Eq.~\eqref{eq:qonly_takeover}.
This yields a parameter-efficient q-only pipeline that is forecasting-aligned (no measurements after burn-in).

\begin{enumerate}
    \item \textbf{Canonicalizer} $C_\psi: (q, \hat{\dot{q}}) \mapsto (q, \hat{p})$. In our q-only experiments we use the identity canonicalizer, so $\hat p=\hat{\dot q}$. When $\Mmat$ is known (and not identity), one could instead map $\hat{p} = \Mmat\,\hat{\dot{q}}$.
    \item \textbf{PHAST core transition} $\Phi_\theta: (q, \hat{p}) \to (q^+, \hat{p}^+)$. Same as full-state; project to $q^+$ for output.
\end{enumerate}

\paragraph{Interpretation of $\hat p$ (q-only).}
We use $\hat p$ to denote the inferred conjugate coordinate for notational consistency with the port-Hamiltonian template.
With identity canonicalization, $\hat p$ is a velocity-like proxy used to form an approximately Markov phase state from q-only observations (it coincides with generalized momentum only when $\Mmat=I$ up to scaling).

\begin{table}[t]
    \centering
    \caption{\textbf{What is learned vs.\ shared in the q-only PHAST pipeline.} PHAST core transition substeps $L$ reuse the same physics components $(V,\Mmat,\Dmat)$ and differ only through intermediate states (and, if enabled, per-substep timesteps). Regime choices determine which physics components are learned (Fig.~\ref{fig:phast_regime_spectrum}).}
    \label{tab:qonly_sharing}
    \small
    \begin{tabular}{@{}llll@{}}
        \toprule
        \textbf{Component} & \textbf{Parameters} & \textbf{Shared across $t$?} & \textbf{Shared across substeps $L$?} \\
        \midrule
        Observer $o_\phi$ (FD+TCN) & $\phi$ & \cmark & --- \\
        Canonicalizer $C_\psi$ & $\psi$ & \cmark & --- \\
        Physics $(V,\Mmat,\Dmat)$ & $\theta$ & \cmark & \cmark \\
        Substep timesteps $\{\delta t_s\}_{s=1}^L$ & $\tau$ & \cmark & per-substep \\
        \bottomrule
    \end{tabular}
\end{table}

\paragraph{Rollout modes.}
\begin{itemize}
    \item \textbf{Takeover (forecasting)}: infer a single initial state from burn-in, then pure integration (no measurements after burn-in).
    \item \textbf{Self-conditioned (forecasting)}: re-infer velocity at each step from predicted positions (errors compound; more observer-dependent).
    \item \textbf{Predict--correct (filtering)}: online state estimation with measurement updates (requires measurements at every step).
    \item \textbf{Feedback control}: online stabilization/tracking with inputs $u_t$ (requires measurements; see \ifdefined\isarxiv Sec.~\ref{app:casimir}\else Appendix~\ref{app:casimir}\fi).
\end{itemize}

% ===========================================================================
% D. Potential Parameterizations
% ===========================================================================
\section{Potential Energy Parameterizations}
\label{app:potentials}

\subsection{Structured Potentials (KNOWN Regime)}

\paragraph{Cosine (pendulum-like).}
\begin{equation}
    V(q) = \sum_{i=1}^{n} a_i\,(1 - \cos q_i), \qquad \grad V(q) = [a_1\sin q_1, \ldots, a_n\sin q_n]^\T.
\end{equation}

\paragraph{Quadratic (spring-like).}
\begin{equation}
    V(q) = \half\,q^\T K_s\,q, \qquad \grad V(q) = K_s\,q.
\end{equation}
where $K_s \succeq 0$ is a stiffness matrix.

\subsection{Neural Potentials (UNKNOWN Regime)}

\paragraph{MLP potential.}
\begin{equation}
    V_\eta(q) = f_\eta(q) \in \R, \qquad \grad V_\eta(q)\ \text{computed by automatic differentiation}.
\end{equation}
In our experiments, $f_\eta$ is a 3-layer MLP with SiLU activations.

\paragraph{Periodic neural potential.}
For angular coordinates, use Fourier features:
\begin{equation}
    \phi(q_i) = [\sin(q_i), \cos(q_i), \sin(2q_i), \cos(2q_i), \ldots], \qquad V_\eta(q) = f_\eta(\phi(q)).
\end{equation}

\subsection{Hybrid Potentials (PARTIAL Regime)}

\begin{equation}
    V(q) = \Vbar(q) + \varepsilon\,\Vtilde(q),
\end{equation}
where $\Vbar(q)$ is structured (given), $\Vtilde(q)$ is neural (learned), and $\varepsilon = \mathrm{softplus}(\rho_\varepsilon)$ is a learnable scale initialized small.

% ===========================================================================
% E. Training Details
% ===========================================================================
\section{Training Details}
\label{app:training}

\subsection{Loss Function Breakdown}

Let $y_{0:T-1}$ denote the observed sequence (in q-only, $y_t=q_t$), where $T$ is the sequence length. Let $\mathrm{err}(\hat y, y)$ denote the appropriate per-step error on the manifold (Appendix~\ref{app:metrics}).
For full-state training (when $x=(q,p)$ is available), we use teacher-forced one-step predictions $\hat x_{t+1}=\Phi_{\dt}(x_t)$ and define $H_t=\Ham(x_t)$ and $\hat H_{t+1}=\Ham(\hat x_{t+1})$.
We use the following optional losses:
\begin{align}
  \mathcal{L}_{\text{data}}
  &= \frac{1}{T-1}\sum_{t=0}^{T-2} \mathrm{err}(\hat y_{t+1}, y_{t+1}), \\
  \mathcal{L}_{\text{pass}}
  &= \frac{1}{T-1}\sum_{t=0}^{T-2} \max(0, \hat H_{t+1} - H_t), \\
  \mathcal{L}_{\text{energy}}
  &= \frac{1}{T-1}\sum_{t=0}^{T-2}\left|\frac{\hat H_{t+1}-H_t}{\dt} + v_t^\T \Dmat(q_t) v_t\right|,
     \quad v_t := \Mmat^{-1}p_t, \\
  \mathcal{L}_{\text{roll}}
  &= \mathbb{E}_{t_0}\left[\frac{1}{H_{\mathrm{roll}}}\sum_{h=1}^{H_{\mathrm{roll}}}\mathrm{err}(\tilde y_{t_0+h}, y_{t_0+h})\right],
     \quad \tilde x_0=x_{t_0},\ \tilde x_{h+1}=\Phi_{\dt}(\tilde x_h),\ t_0\sim\mathrm{Unif}\{0,\ldots,T-1-H_{\mathrm{roll}}\},
\end{align}
	where $\mathcal{L}_{\text{roll}}$ evaluates open-loop rollouts (no teacher forcing).
	In the q-only setting, energy diagnostics can be formed using finite-difference velocity estimates as in Appendix~\ref{app:metrics}.

	\subsection{Optional Rollout-Aligned Training (Pseudocode)}
	\label{app:training:rollout_alg}
	
	\begin{algorithm}[t]
	\caption{Optional rollout-aligned training (full-state; uses Alg.~\ref{alg:phast_step} for $\Phi_{\dt}$)}
	\label{alg:phast_train}
	\begin{algorithmic}[1]
	\REQUIRE Batch of trajectories $\{x^{(b)}_{0:T-1}\}_{b=1}^B$, rollout horizon $H_{\mathrm{roll}}$, weights $\lambda_\bullet$
	\STATE $\mathcal{L}_{\text{data}},\mathcal{L}_{\text{pass}},\mathcal{L}_{\text{energy}},\mathcal{L}_{\text{roll}} \leftarrow 0$
	\FOR{$b=1$ to $B$}
	\STATE \textbf{Teacher-forced next-step losses}
	  \FOR{$t=0$ to $T-2$}
	    \STATE $\hat x_{t+1} \leftarrow \Phi_{\dt}(x^{(b)}_t)$ \COMMENT{Alg.~\ref{alg:phast_step}}
	    \STATE $(q_t,p_t) \leftarrow x^{(b)}_t$ \COMMENT{unpack $x^{(b)}_t$}
	    \STATE $v_t \leftarrow \Mmat^{-1}p_t$
	    \STATE $\mathcal{L}_{\text{data}} \mathrel{+}= \|\hat x_{t+1} - x^{(b)}_{t+1}\|^2$
	    \STATE $\mathcal{L}_{\text{pass}} \mathrel{+}= \max(0,\Ham(\hat x_{t+1}) - \Ham(x^{(b)}_t))$
	    \STATE $\mathcal{L}_{\text{energy}} \mathrel{+}= \Big|\frac{\Ham(\hat x_{t+1})-\Ham(x^{(b)}_t)}{\dt} + v_t^\T\Dmat(q_t)v_t\Big|$
	  \ENDFOR
	  \STATE \textbf{Open-loop rollout loss}
	  \STATE Choose start index $t_0 \in \{0,\ldots,T-1-H_{\mathrm{roll}}\}$
	  \STATE $\tilde x_0 \leftarrow x^{(b)}_{t_0}$
	  \FOR{$h=1$ to $H_{\mathrm{roll}}$}
	    \STATE $\tilde x_h \leftarrow \Phi_{\dt}(\tilde x_{h-1})$ \COMMENT{Alg.~\ref{alg:phast_step}}
	    \STATE $\mathcal{L}_{\text{roll}} \mathrel{+}= \|\tilde x_h - x^{(b)}_{t_0+h}\|^2$
	  \ENDFOR
	\ENDFOR
	\STATE $\mathcal{L} \leftarrow \lambda_{\text{data}}\mathcal{L}_{\text{data}} + \lambda_{\text{pass}}\mathcal{L}_{\text{pass}} + \lambda_{\text{energy}}\mathcal{L}_{\text{energy}} + \lambda_{\text{roll}}\,\mathcal{L}_{\text{roll}}$
	\STATE Update parameters with $\nabla \mathcal{L}$
	\end{algorithmic}
	\end{algorithm}
	
	\subsection{Hyperparameters}

	\begin{table}[h]
	    \centering
	    \caption{Hyperparameters for the q-only experiments (Sec.~\ref{sec:experiments}).}
    \label{tab:hyperparams}
    \small
    \begin{tabular}{@{}lccc@{}}
        \toprule
        \textbf{Hyperparameter} & \textbf{KNOWN} & \textbf{PARTIAL} & \textbf{UNKNOWN} \\
        \midrule
        Damping terms $r$ & 2 & 2 & 2 \\
        Damping bound $\bbar$ (Windy) & unbounded & $\Delta d \,(=0.5)$ & unbounded \\
        $d_0$ policy (Windy) & learned & fixed & learned \\
        Potential hidden dim & --- & 64 & 64 \\
        Potential layers & --- & 3 & 3 \\
        Mass terms (if learned) & --- & --- & $\min(4,n)$ \\
        \midrule
        Observer & FD+TCN & FD+TCN & FD+TCN \\
        Observer hidden dim & 32 & 32 & 32 \\
        Observer layers & 2 & 2 & 2 \\
        Canonicalizer & identity & identity & identity \\
        PHAST core transition trainable & yes & yes & yes \\
        \midrule
        Optimizer & AdamW & AdamW & AdamW \\
        Learning rate & $10^{-3}$ & $10^{-3}$ & $10^{-3}$ \\
        Batch size & 64 & 64 & 64 \\
        Epochs & 50 & 50 & 50 \\
        Rollout horizons (eval) & $\{10,50,100\}$ & $\{10,50,100\}$ & $\{10,50,100\}$ \\
        Burn-in context $K$ & 10 & 10 & 10 \\
        \midrule
        $\lambda_{\text{data}}$ & 1.0 & 1.0 & 1.0 \\
        $\lambda_{\text{pass}}$ & 0.0 & 0.0 & 0.0 \\
        $\lambda_{\text{energy}}$ & 0.0 & 0.0 & 0.0 \\
        $\lambda_{\text{roll}}$ & 0.0 & 0.0 & 0.0 \\
        \bottomrule
    \end{tabular}
\end{table}

\subsection{Baseline Hyperparameters}
\label{app:training:baselines}

\begin{table}[t]
    \centering
    \caption{\textbf{Baseline architecture hyperparameters (q-only experiments).} All baselines are trained as causal seq2seq predictors mapping $q_{0:t}\mapsto \hat q_{t+1}$ with shared global settings hidden dimension $d=64$ and depth $2$ (unless otherwise noted).}
    \label{tab:baseline_hparams}
    \scriptsize
    \begin{tabular}{@{}l l@{}}
        \toprule
        \textbf{Baseline} & \textbf{Key settings} \\
        \midrule
        GRU & dropout=0.0 \\
        S5 & $d_{\text{state}}=64$, internal $\dt_{\text{ssm}}=0.01$ \\
        LinOSS & internal $\dt_{\text{linoss}}=1.0$ (absorbed into learned eigenvalues) \\
        D-LinOSS & internal state dim $=\lfloor d/2\rfloor$, learnable per-oscillator timestep \\
        Transformer & $n_{\text{heads}}=4$, FF dim $=4d$, dropout=0.1 \\
        VPT & $n_{\text{ff\_sublayers}}=2$, skew-symmetric attention \\
        \bottomrule
    \end{tabular}
\end{table}

\paragraph{Dataset parameters (Windy Pendulum).}
Trajectories are generated by the simulator with $d_0=0.3$, $\Delta d=0.5$, $\dt=0.05$, length $T=200$,
initial conditions $\theta_0 \sim \mathrm{Unif}[-\pi,\pi]$, $p_0 \sim \mathcal{N}(0, 4^2)$,
and $N_{\text{train}}/N_{\text{val}}/N_{\text{test}}=1000/200/200$.
We report mean $\pm$ std over 5 random model seeds using a fixed dataset shared across models.

\paragraph{Initial-condition distributions (all benchmarks).}
All q-only benchmarks use trajectories of length $T{=}200$ with $N_{\text{train}}/N_{\text{val}}/N_{\text{test}}=1000/200/200$ (Sec.~\ref{sec:experiments:setup}).
Table~\ref{tab:dataset_ics} summarizes the initial-condition distributions used by the simulators.

\begin{table}[t]
    \centering
    \caption{\textbf{Simulator initial conditions for the thirteen q-only benchmarks.}}
    \label{tab:dataset_ics}
    \scriptsize
    \setlength{\tabcolsep}{3pt}
    \begin{tabular}{@{}p{0.22\linewidth} c p{0.48\linewidth} p{0.24\linewidth}@{}}
        \toprule
        \textbf{Benchmark} & $\dt$ & \textbf{Initial conditions} & \textbf{Damping} \\
        \midrule
        Pendulum (cons) & 0.05 &
          $\theta_0\sim\mathrm{Unif}[-\pi,\pi]$, $p_0\sim\mathcal{N}(0,3^2)$ &
          none \\
        Pendulum (damped) & 0.05 &
          $\theta_0\sim\mathrm{Unif}[-\pi,\pi]$, $p_0\sim\mathcal{N}(0,3^2)$ &
          constant ($\gamma{=}0.5$) \\
        Pendulum (windy) & 0.05 &
          $\theta_0\sim\mathrm{Unif}[-\pi,\pi]$, $p_0\sim\mathcal{N}(0,4^2)$ &
          $d(\theta)=d_0+\Delta d|\sin\theta|$ ($d_0{=}0.3$, $\Delta d{=}0.5$) \\
        Cart-Pole (windy) & 0.02 &
          $(x_0,\theta_0)\sim\mathrm{Unif}([-1,1]\times[-\pi,\pi])$, $p_0\sim\mathrm{Unif}[-2,2]^2$ &
          windy on $\theta$ ($d_0{=}0.3$, $\Delta d{=}0.5$) \\
        Oscillator (cons) & 0.02 &
          $q_0\sim\mathcal{N}(0,I)$, $p_0\sim\mathcal{N}(0,I)$ &
          none \\
        Oscillator (damped) & 0.02 &
          $q_0\sim\mathcal{N}(0,I)$, $p_0\sim\mathcal{N}(0,I)$ &
          constant ($\gamma{=}0.1$) \\
        Double Pendulum (cons) & 0.01 &
          $\theta_{1,0},\theta_{2,0}\sim\mathrm{Unif}[-\pi,\pi]$, $\omega_{1,0},\omega_{2,0}\sim\mathrm{Unif}[-2,2]$ &
          none \\
        Double Pendulum (damped) & 0.01 &
          $\theta_{1,0},\theta_{2,0}\sim\mathrm{Unif}[-\pi,\pi]$, $\omega_{1,0},\omega_{2,0}\sim\mathrm{Unif}[-2,2]$ &
          viscous ($b{=}0.2$ on both joints) \\
        \midrule
        RLC circuit (damped) & 0.02 &
          $q_0\sim\mathrm{Unif}[-2,2]$, $\varphi_0\sim\mathrm{Unif}[-2,2]$ &
          resistive ($R/L{=}0.5$) \\
        LJ-3 cluster (damped) & 0.002 &
          equilateral triangle at $r_{\mathrm{eq}}{=}2^{1/6}\sigma$, perturbation $\sigma{=}0.05$, $p_0\sim\mathcal{N}(0,0.1^2)$ &
          Langevin ($\gamma{=}0.1$) \\
        Heat exchange (damped) & 0.02 &
          $T_{1,0},T_{2,0}\sim 0.5+1.5\cdot\mathrm{Unif}[0,1]$, $p_0\sim\mathcal{N}(0,0.3^2)$ &
          heat loss ($\kappa_{\mathrm{loss}}{=}0.1$) \\
        N-body 3 (damped) & 0.01 &
          equilateral triangle $R{=}2.0$, perturbation $\sigma{=}0.3$, $p_0\sim\mathcal{N}(0,0.5^2)$ &
          drag ($\gamma{=}0.05$) \\
        Predator--prey (damped) & 0.1 &
          $x_0\sim\mathrm{Unif}[10,50]$ (prey), $y_0\sim\mathrm{Unif}[5,20]$ (predator) &
          intra-species ($\alpha/K$, $\mu$) \\
        \bottomrule
    \end{tabular}
\end{table}

\section{Evaluation Metrics}
\label{app:metrics}

We summarize the main metrics reported in Sec.~\ref{sec:experiments:eval} and in the forecasting/identifiability tables (e.g., Tables~\ref{tab:qonly_rollout_pendulum_h100} and~\ref{tab:windy_identifiability}). All metrics are computed on the test split under the evaluation protocol described in Sec.~\ref{sec:experiments:setup}.

\paragraph{One-step wrapped-angle MSE.}
In the q-only setting, the model predicts $\hat q_{t+1}$ from $q_{0:t}$.
We report the mean squared wrapped angular error:
\begin{equation}
  \mathrm{WrapMSE}
  = \frac{1}{B(T-1)}\sum_{b=1}^{B}\sum_{t=0}^{T-2} \big(\mathrm{wrap}(\hat q_{b,t+1}-q_{b,t+1})\big)^2,
\end{equation}
where $\mathrm{wrap}(\cdot)$ maps angle differences to $[-\pi,\pi]$, $B$ is the number of trajectories, and $T$ is the sequence length.

\paragraph{One-step Euclidean MSE.}
For Euclidean q-only environments (e.g., Oscillator), we report the standard mean squared error:
\begin{equation}
  \mathrm{MSE}
  = \frac{1}{B(T-1)}\sum_{b=1}^{B}\sum_{t=0}^{T-2} \|\hat q_{b,t+1}-q_{b,t+1}\|_2^2.
\end{equation}

\paragraph{Mixed-manifold MSE.}
For product manifolds with both Euclidean and angular coordinates (e.g., Cart-Pole with $q_t=(x_t,\theta_t)\in\mathbb{R}\times\mathbb{S}^1$),
we report a mixed metric that averages translation MSE and wrapped-angle MSE:
\begin{equation}
  \mathrm{MixedMSE}
  = \tfrac12\left(\mathrm{MSE}_x + \mathrm{WrapMSE}_\theta\right),
  \qquad
  \mathrm{MSE}_x = \frac{1}{B(T-1)}\sum_{b=1}^{B}\sum_{t=0}^{T-2} (\hat x_{b,t+1}-x_{b,t+1})^2.
\end{equation}
Open-loop rollouts use the analogous metric at horizon $H$.

\paragraph{Open-loop rollouts.}
For burn-in length $K$ and horizon $H$, we condition on a ground-truth prefix $q_{0:K-1}$ and then predict open-loop for $H$ steps.
Baselines produce autoregressive rollouts; PHAST additionally supports \emph{takeover} rollouts (burn-in then integrate).
We report wrapped rollout error and (where applicable) takeover rollout error at horizon $H$.
For Euclidean environments, we report rollout MSE at horizon $H$ (and takeover analogues where available).
For reproducibility, Table~\ref{tab:metric_keys} maps paper metrics to the identifiers used in our implementation.

\paragraph{Damping recovery.}
When a model exposes a diagonal damping prediction $\hat d_t \approx D(q_t)$, we compute
\begin{align}
  \mathrm{MAE}_D &= \frac{1}{BT}\sum_{b=1}^{B}\sum_{t=0}^{T-1}\left|\,\hat d_{b,t} - d_{b,t}\right|, \\
  R^2_D &= 1 - \frac{\sum_{b=1}^{B}\sum_{t=0}^{T-1} (\hat d_{b,t} - d_{b,t})^2}{\sum_{b=1}^{B}\sum_{t=0}^{T-1} (d_{b,t} - \bar d)^2 + 10^{-12}},
\end{align}
where $d_{b,t} = D(q_{b,t})$ is the ground-truth damping and $\bar d = \frac{1}{BT}\sum_{b=1}^{B}\sum_{t=0}^{T-1} d_{b,t}$ is its mean over the test set.

\paragraph{Passivity violations on rollouts.}
Using an energy estimate $\hat H$ formed from the predicted rollout (e.g., finite-difference velocity estimates in q-only settings), we report the fraction of rollout steps with an energy increase:
\begin{equation}
  \mathrm{PassViol}(H)
  = \frac{1}{BH}\sum_{b=1}^{B}\sum_{h=0}^{H-1}\mathbb{I}\!\left[\hat H_{b,h+1}-\hat H_{b,h} > \varepsilon\right],
\end{equation}
where $\varepsilon$ is a small tolerance (we use $\varepsilon=10^{-6}$ in code).

\paragraph{Energy budget residual on rollouts.}
For our input-free benchmarks ($u=0$), the continuous-time energy identity is
\begin{equation}
  \frac{dH_{\mathrm{env}}}{dt} = -D_{\mathrm{env}}(q)\,\|v\|_2^2,
\end{equation}
where $H_{\mathrm{env}}$ is the benchmark's analytic energy/Hamiltonian (Appendix~\ref{app:environments}) and $D_{\mathrm{env}}(\cdot)$ is the \emph{simulator} damping law (a scalar damping coefficient applied isotropically in our benchmarks).
On predicted rollouts $\{\hat q_{b,h}\}_{h=0}^{H}$, we form a finite-difference velocity $\hat v_{b,h} \approx (\hat q_{b,h+1}-\hat q_{b,h})/\dt$ (wrapping angular components) and an energy estimate $\hat H_{b,h}$ using $H_{\mathrm{env}}$ (e.g., $H_{\mathrm{env}}(\hat q_{b,h+1},\hat v_{b,h})$ in q-only).
We then report the mean absolute discrete-time residual
\begin{equation}
  \rho_{b,h}
  =
  \frac{\hat H_{b,h+1}-\hat H_{b,h}}{\dt}
  -
  \big(-D_{\mathrm{env}}(\hat q_{b,h})\,\|\hat v_{b,h}\|_2^2\big),
  \qquad
  \mathrm{EbudRes}(H) = \frac{1}{BH}\sum_{b=1}^{B}\sum_{h=0}^{H-1}|\rho_{b,h}|.
\end{equation}
Evaluating $D_{\mathrm{env}}$ on the predicted configuration makes this diagnostic comparable across baselines that do not learn an explicit $D(q)$; damping recovery metrics are reported separately when a model exposes $\hat D(q)$.

\begin{table}[t]
  \centering
  \caption{Paper metric names and their corresponding metric identifiers used in our implementation.}
  \label{tab:metric_keys}
  \scriptsize
  \begin{tabular}{@{}p{0.44\linewidth}p{0.52\linewidth}@{}}
    \toprule
    \textbf{Paper metric} & \textbf{Implementation key(s)} \\
    \midrule
    One-step wrapped-angle MSE ($\mathrm{WrapMSE}$) & \texttt{theta\_\allowbreak wrap\_\allowbreak mse} \\
    One-step Euclidean MSE ($\mathrm{MSE}$) & \texttt{mse} \\
    Rollout wrapped-angle MSE at horizon $H$ ($\mathrm{WrapMSE}_\theta^{\mathrm{roll}}(H)$) &
      \texttt{rollout\_\allowbreak theta\_\allowbreak wrap\_\allowbreak mse\_\allowbreak h\{H\}} \\
    Rollout Euclidean MSE at horizon $H$ ($\mathrm{MSE}^{\mathrm{roll}}(H)$) &
      \texttt{rollout\_\allowbreak mse\_\allowbreak h\{H\}} \\
    Rollout mixed-manifold MSE at horizon $H$ ($\mathrm{MixedMSE}^{\mathrm{roll}}(H)$) &
      \texttt{rollout\_\allowbreak mixed\_\allowbreak mse\_\allowbreak h\{H\}} \\
    Takeover rollouts (PHAST only; Eq.~\eqref{eq:qonly_takeover}) &
      \texttt{rollout\_\allowbreak takeover\_\allowbreak *} \\
	    Damping recovery ($R^2_D$, $\mathrm{MAE}_D$) &
	      \texttt{damping\_\allowbreak r2}, \texttt{damping\_\allowbreak mae} \\
	    Energy-budget residual ($\mathrm{EbudRes}(H)$) &
	      \texttt{rollout\_\allowbreak energy\_\allowbreak budget\_\allowbreak resid\_\allowbreak h\{H\}} \\
	    Passivity violation rate on rollouts ($\mathrm{PassViol}(H)$) &
	      \texttt{rollout\_\allowbreak passivity\_\allowbreak violations\_\allowbreak h\{H\}} \\
	    \bottomrule
	  \end{tabular}
\end{table}

\subsection{Error Sources and Diagnostics (q-only)}
\label{app:metrics:debugging}

The q-only pipeline introduces multiple amplification points beyond discretization error: (i) velocity inference from position-only measurements, (ii) the semantics of the inferred canonical state $(q,\hat p)$, and (iii) open-loop compounding when the model consumes its own predictions at test time.
Figure~\ref{fig:qonly_debug_graph} summarizes the computation graph and diagnostic comparisons that help localize these error sources.

\begin{figure}[t]
  \centering
  \begin{minipage}{0.98\linewidth}
    \centering
    \begin{tikzpicture}[
        block/.style={rectangle, draw=black, rounded corners=2pt, minimum height=0.75cm, minimum width=2.7cm, align=center, font=\scriptsize, fill=black!4},
        note/.style={rectangle, draw=black!45, rounded corners=2pt, align=left, font=\scriptsize, fill=blue!4, text width=3.5cm, inner sep=4pt},
        warn/.style={rectangle, draw=red!55, rounded corners=2pt, align=left, font=\scriptsize, fill=red!4, text width=3.5cm, inner sep=4pt},
        arrow/.style={-{Stealth[length=2mm]}, thick},
      ]
      \node[font=\scriptsize\bfseries] at (0,1.1) {Teacher-forced training (next-step)};
      \node[block] (q) at (0,0.3) {$q_{0:T-1}$ (ground truth)};
      \node[block] (obs) at (0,-0.75) {observer $o_{\phi}$\\[-0.1em]$\hat{\dot q}_{0:T-1}$};
      \node[block] (canon) at (0,-1.8) {canonicalizer $C_{\psi}$\\[-0.1em]$\hat x_t=(q_t,\hat p_t)$};
      \node[block] (core) at (0,-2.85) {PHAST core transition $\Phi_{\dt}$\\[-0.1em]$(q_t,\hat p_t)\mapsto(\hat q_{t+1},\hat p_{t+1})$};
      \node[block, fill=white] (loss) at (0,-3.9) {loss on $\hat q_{t+1}$};

      \draw[arrow] (q) -- (obs);
      \draw[arrow] (obs) -- (canon);
      \draw[arrow] (canon) -- (core);
      \draw[arrow] (core) -- (loss);

      \node[note] (n1) at (4.2,-0.75) {\textbf{Observer error proxies}\\
      MAE of inferred $\hat p$ (when $p$ is available)\\
      MAE of FD proxy (from $q$ history)};
      \node[warn] (n2) at (4.2,-2.85) {\textbf{Train--test mismatch}\\
      teacher forcing uses ground-truth history;\\
      open-loop drift can still grow.};

      \draw[arrow, black!50] (obs.east) -- (n1.west);
      \draw[arrow, red!55] (core.east) -- (n2.west);
    \end{tikzpicture}
  \end{minipage}

  \vspace{0.6em}

  \begin{minipage}{0.98\linewidth}
    \centering
    \begin{tikzpicture}[
        block/.style={rectangle, draw=black, rounded corners=2pt, minimum height=0.75cm, minimum width=2.5cm, align=center, font=\scriptsize, fill=black!4},
        note/.style={rectangle, draw=black!45, rounded corners=2pt, align=left, font=\scriptsize, fill=green!5, text width=3.2cm, inner sep=4pt},
        warn/.style={rectangle, draw=red!55, rounded corners=2pt, align=left, font=\scriptsize, fill=red!4, text width=3.2cm, inner sep=4pt},
        arrow/.style={-{Stealth[length=2mm]}, thick},
      ]
      \node[font=\scriptsize\bfseries] at (0,1.1) {Open-loop evaluation (rollouts)};
      \node[block] (ctx) at (0,0.3) {burn-in context\\$q_{0:K-1}$};

      \node[block] (ar) at (-1.8,-0.9) {autoregressive rollout\\(observer-in-loop)};
      \node[block] (tk) at (1.8,-0.9) {takeover rollout\\(infer once, integrate)};

      \node[block] (obs2) at (-1.8,-2.1) {$o_{\phi}, C_{\psi}$ each step};
      \node[block] (core2) at (1.8,-2.1) {$\Phi_{\dt}$ only};

      \node[block, fill=white] (m1) at (-1.8,-3.3) {rollout metrics};
      \node[block, fill=white] (m2) at (1.8,-3.3) {rollout metrics};

      \draw[arrow] (ctx) -- (ar);
      \draw[arrow] (ctx) -- (tk);
      \draw[arrow] (ar) -- (obs2);
      \draw[arrow] (obs2) -- (m1);
      \draw[arrow] (tk) -- (core2);
      \draw[arrow] (core2) -- (m2);

      \node[note] (n3) at (5.6,-0.6) {\textbf{Observer-in-loop gap}\\
      autoregressive vs.\ takeover rollouts\\
      (large gap $\Rightarrow$ observer)};
      \node[warn] (n5) at (5.6,-1.95) {\textbf{Discrete-time stability}\\
      passivity-violation rate and\\
      stiffness proxy for damping half-step};
      \node[note] (n4) at (5.6,-3.3) {\textbf{Core/physics diagnostics}\\
      rollout error and energy-budget residual};

      \draw[arrow, black!50] (tk.east) -- (n3.west);
      \draw[arrow, red!55] (core2.east) -- (n5.west);
      \draw[arrow, black!50] (m2.east) -- (n4.west);
    \end{tikzpicture}
  \end{minipage}
  \caption{\textbf{Q-only computation graph and diagnostic probes.} Training is teacher-forced (ground-truth $q$ history), while evaluation is open-loop and therefore sensitive to compounding error. Autoregressive rollouts keep the observer in the loop; takeover rollouts isolate the PHAST core transition given a single inferred boundary state.}
  \label{fig:qonly_debug_graph}
\end{figure}
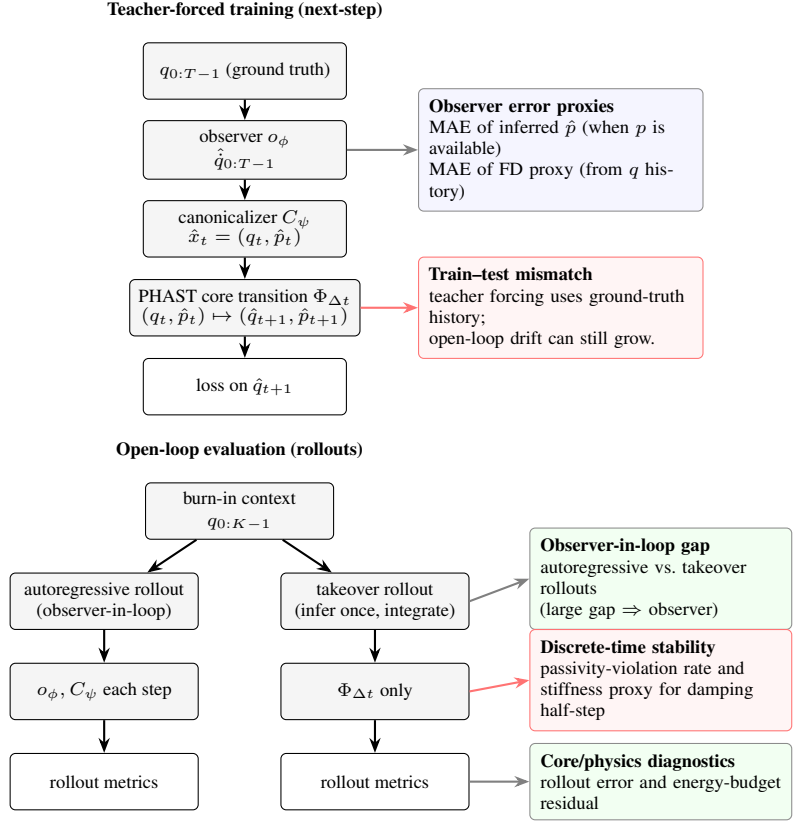

\begin{center}
  \fcolorbox{black!50}{black!3}{%
    \begin{minipage}{0.965\linewidth}
      \small
      \textbf{Error sources and diagnostic comparisons.}
      \vspace{0.25em}
      \begin{itemize}
        \item \textbf{Observer amplification (q-only):} if autoregressive rollouts are much worse than takeover rollouts, repeated velocity re-inference is injecting error (noise mismatch / insufficient context). Compare inferred $\hat p$ (when available) to the true $p$, and compare to a finite-difference proxy.
        \item \textbf{Discrete-time stiffness:} energy increases or numerical instabilities can occur when the explicit damping half-step is too stiff for the chosen $\dt$. Remedies include smaller $\dt$, more substeps $L$, or bounding the damping strength (Sec.~\ref{sec:methods:householder}).
        \item \textbf{Damping as an error-sink (identifiability breakdown):} accurate rollouts with poor damping recovery ($R^2_D\ll 0$) suggests dissipation is compensating for errors in $V$/$M$. Bounded Householder damping reduces this degeneracy; when the isotropic base term $d_0$ is known, fixing $d_0$ further improves identifiability.
        \item \textbf{Time-rescaling ambiguity:} a learned internal timestep $\dt_{\mathrm{model}}\neq \dt$ can improve forecasting by reparameterizing time, but it confounds parameter recovery; for identifiability studies, we recommend fixed $\dt$.
        \item \textbf{Error compounding / phase sensitivity:} small one-step errors can yield large long-horizon drift. Horizon sweeps (e.g., $H\in\{10,50,100\}$) and the full-state ablation help localize whether the dominant source is the observer pipeline or the PHAST core transition.
      \end{itemize}
      \vspace{0.1em}
      \scriptsize
      \textit{Implementation note:} Table~\ref{tab:metric_keys} lists the metric identifiers used in our evaluation pipeline; additional PHAST-specific stability diagnostics (e.g., stiffness proxies and timestep statistics) are reported when applicable.
    \end{minipage}
  }
\end{center}

% ===========================================================================
% F. Additional Benchmark Tables
% ===========================================================================
% appendix_tables_qonly.tex — Q-only forecasting tables (shared between ICML appendix and arXiv main)

% ===========================================================================
% F. Additional Benchmark Tables
% ===========================================================================
\section{Additional Benchmark Tables}
\label{app:tables:qonly_rollout}

We report the full set of q-only forecasting tables across all thirteen benchmarks.
All results follow the protocol in Sec.~\ref{sec:experiments:setup} (mean $\pm$ std over 5 seeds; dataset seed fixed to 42).
Tables~\ref{tab:qonly_rollout_pendulum_h100}--\ref{tab:qonly_rollout_oscillator_h100} cover the eight mechanical benchmarks (rollout MSE at horizon $H{=}100$).
Tables~\ref{tab:qonly_rollout_rlc}--\ref{tab:qonly_rollout_predprey} cover the five non-mechanical systems from Table~\ref{tab:structured_hamiltonians} (next-step MSE).

\begin{table}[t]
    \centering
    \caption{\textbf{Q-only open-loop forecasting (1-DOF pendulum environments).} Mean $\pm$ std of wrapped-angle rollout MSE at horizon $H{=}100$ over 5 seeds. Parameter counts correspond to the Windy setting (largest; position-dependent damping introduces additional parameters). Lower is better.}
    \label{tab:qonly_rollout_pendulum_h100}
    \small
    \begin{tabular}{@{}lrrrr@{}}
        \toprule
        \textbf{Model} & \textbf{Params} & \textbf{Conservative} & \textbf{Damped} & \textbf{Windy} \\
        \midrule
        \multicolumn{5}{l}{\textit{PHAST (ours)}} \\
        PHAST (KNOWN) & 3{,}364 & $0.738 \pm 0.143$ & \best{0.017 \pm 0.005} & $0.106 \pm 0.020$ \\
        PHAST (PARTIAL) & 13{,}736 & \best{0.680 \pm 0.043} & $0.018 \pm 0.001$ & \best{0.092 \pm 0.014} \\
        PHAST (UNKNOWN) & 13{,}738 & $0.993 \pm 0.128$ & $0.275 \pm 0.083$ & $0.298 \pm 0.048$ \\
        \midrule
        \multicolumn{5}{l}{\textit{Baselines}} \\
        GRU & 37{,}889 & $2.939 \pm 0.302$ & $1.310 \pm 0.743$ & $1.796 \pm 0.625$ \\
        S5 & 17{,}089 & $2.941 \pm 0.042$ & $0.657 \pm 0.065$ & $0.600 \pm 0.047$ \\
        LinOSS & 17{,}089 & $2.849 \pm 0.475$ & $2.286 \pm 0.324$ & $1.458 \pm 0.324$ \\
        D-LinOSS & 33{,}793 & $2.738 \pm 0.630$ & $0.450 \pm 0.241$ & $0.435 \pm 0.239$ \\
        Transformer & 100{,}161 & $2.320 \pm 0.224$ & $0.493 \pm 0.105$ & $0.824 \pm 0.134$ \\
        VPT & 16{,}833 & $2.875 \pm 0.182$ & $2.111 \pm 0.222$ & $2.218 \pm 0.135$ \\
        \bottomrule
    \end{tabular}
\end{table}

\ifdefined\isarxiv
% Included in the arXiv main body for a tighter narrative (see Sec.~\ref{sec:experiments:results}).
\else

\fi

\begin{table}[t]
    \centering
    \caption{\textbf{Q-only open-loop forecasting (2-DOF double pendulum environments).} Mean $\pm$ std of wrapped-angle rollout MSE at horizon $H{=}100$ over 5 seeds.}
    \label{tab:qonly_rollout_double_h100}
    \small
    \begin{tabular}{@{}lrrr@{}}
        \toprule
        \textbf{Model} & \textbf{Params} & \textbf{Double (cons)} & \textbf{Double (damped)} \\
        \midrule
        \multicolumn{4}{l}{\textit{PHAST (ours)}} \\
        PHAST (KNOWN) & 3{,}588 & $0.421 \pm 0.061$ & $0.332 \pm 0.030$ \\
        PHAST (PARTIAL) & 12{,}166 & \best{0.402 \pm 0.047} & \best{0.320 \pm 0.032} \\
        PHAST (UNKNOWN) & 13{,}067 & $0.629 \pm 0.060$ & $0.529 \pm 0.047$ \\
        \midrule
        \multicolumn{4}{l}{\textit{Baselines}} \\
        GRU & 38{,}146 & $1.300 \pm 0.143$ & $1.346 \pm 0.127$ \\
        S5 & 17{,}218 & $0.618 \pm 0.028$ & $0.630 \pm 0.031$ \\
        LinOSS & 17{,}218 & $1.640 \pm 0.214$ & $1.573 \pm 0.129$ \\
        D-LinOSS & 33{,}922 & $1.501 \pm 0.124$ & $1.298 \pm 0.105$ \\
        Transformer & 100{,}290 & $1.033 \pm 0.248$ & $0.846 \pm 0.132$ \\
        VPT & 16{,}962 & $2.848 \pm 0.213$ & $2.721 \pm 0.156$ \\
        \bottomrule
    \end{tabular}
\end{table}

\begin{table}[t]
    \centering
    \caption{\textbf{Q-only open-loop forecasting (Oscillator environments).} Mean $\pm$ std of rollout MSE at horizon $H{=}100$ over 5 seeds.}
    \label{tab:qonly_rollout_oscillator_h100}
    \small
    \begin{tabular}{@{}lrrrr@{}}
        \toprule
        \textbf{Model} & \textbf{Params} & \textbf{Oscillator (cons)} & \textbf{Oscillator (damped)} \\
        \midrule
        \multicolumn{4}{l}{\textit{PHAST (ours)}} \\
        PHAST (KNOWN) & 3{,}587 & \best{0.0010 \pm 0.0003} & $0.0012 \pm 0.0004$ \\
        PHAST (PARTIAL) & 12{,}165 & \best{0.0010 \pm 0.0002} & \best{0.0011 \pm 0.0003} \\
        PHAST (UNKNOWN) & 12{,}171 & $0.0072 \pm 0.0031$ & $0.0101 \pm 0.0046$ \\
        \midrule
        \multicolumn{4}{l}{\textit{Baselines}} \\
        GRU & 38{,}146 & $1.8470 \pm 0.0754$ & $1.4565 \pm 0.0811$ \\
        S5 & 17{,}218 & $2.0200 \pm 0.0899$ & $1.7079 \pm 0.0570$ \\
        LinOSS & 17{,}218 & $1.7241 \pm 0.1787$ & $1.3927 \pm 0.1189$ \\
        D-LinOSS & 33{,}922 & $1.8603 \pm 0.2750$ & $1.6275 \pm 0.1809$ \\
        Transformer & 100{,}290 & $1.0873 \pm 0.2987$ & $0.9259 \pm 0.2537$ \\
        VPT & 16{,}962 & $3.0941 \pm 1.1140$ & $2.5710 \pm 0.7990$ \\
        \bottomrule
    \end{tabular}
\end{table}

% ===========================================================================
% Tables for additional Table 1 systems (non-mechanical benchmarks)
% ===========================================================================

\begin{table}[t]
    \centering
    \caption{\textbf{Q-only open-loop forecasting (RLC circuit, KNOWN regime).} Mean $\pm$ std of next-step MSE over 5 seeds. The damped series RLC circuit ($L{=}1$, $C{=}1$, $R{=}0.5$) is the electrical analog of a damped spring--mass system. Lower is better.}
    \label{tab:qonly_rollout_rlc}
    \small
    \begin{tabular}{@{}lrr@{}}
        \toprule
        \textbf{Model} & \textbf{Params} & \textbf{RLC (damped)} \\
        \midrule
        \multicolumn{3}{l}{\textit{PHAST (ours)}} \\
        PHAST (KNOWN) & 3{,}364 & \best{2.64 \times 10^{-5} \pm 1.5 \times 10^{-8}} \\
        PHAST (PARTIAL) & 11{,}878 & $2.64 \times 10^{-5} \pm 1.1 \times 10^{-8}$ \\
        PHAST (UNKNOWN) & 11{,}879 & $2.63 \times 10^{-5} \pm 3.1 \times 10^{-8}$ \\
        \midrule
        \multicolumn{3}{l}{\textit{Baselines}} \\
        S5 & 17{,}089 & $1.84 \times 10^{-3} \pm 6.69 \times 10^{-4}$ \\
        LinOSS & 17{,}089 & $8.57 \times 10^{-4} \pm 1.26 \times 10^{-4}$ \\
        GRU & 37{,}889 & $3.22 \times 10^{-3} \pm 3.00 \times 10^{-4}$ \\
        LSTM & 50{,}497 & $7.53 \times 10^{-3} \pm 2.78 \times 10^{-4}$ \\
        VPT & 16{,}833 & $8.29 \times 10^{-4} \pm 4.38 \times 10^{-5}$ \\
        Transformer & 100{,}161 & $4.81 \times 10^{-4} \pm 1.32 \times 10^{-4}$ \\
        \bottomrule
    \end{tabular}
\end{table}

\begin{table}[t]
    \centering
    \caption{\textbf{Q-only open-loop forecasting (Lennard--Jones 3-particle cluster, KNOWN regime).} Mean $\pm$ std of next-step MSE over 5 seeds. Three particles in 2D with pairwise LJ potential ($\epsilon{=}1$, $\sigma{=}1$) and Langevin friction ($\gamma{=}0.1$). Lower is better.}
    \label{tab:qonly_rollout_lj3}
    \small
    \begin{tabular}{@{}lrr@{}}
        \toprule
        \textbf{Model} & \textbf{Params} & \textbf{LJ-3 (damped)} \\
        \midrule
        \multicolumn{3}{l}{\textit{PHAST (ours)}} \\
        PHAST (KNOWN) & 4{,}490 & \best{4.61 \times 10^{-10} \pm 4.20 \times 10^{-12}} \\
        PHAST (PARTIAL) & 13{,}324 & $4.59 \times 10^{-10} \pm 2.04 \times 10^{-12}$ \\
        PHAST (UNKNOWN) & 13{,}355 & $6.93 \times 10^{-10} \pm 1.38 \times 10^{-11}$ \\
        \midrule
        \multicolumn{3}{l}{\textit{Baselines}} \\
        S5 & 17{,}734 & $2.05 \times 10^{-4} \pm 4.33 \times 10^{-5}$ \\
        LinOSS & 17{,}734 & $3.07 \times 10^{-4} \pm 6.64 \times 10^{-5}$ \\
        GRU & 39{,}174 & $1.96 \times 10^{-3} \pm 1.80 \times 10^{-4}$ \\
        LSTM & 52{,}102 & $3.39 \times 10^{-3} \pm 3.15 \times 10^{-4}$ \\
        VPT & 17{,}478 & $1.49 \times 10^{-3} \pm 8.07 \times 10^{-4}$ \\
        Transformer & 100{,}806 & $1.20 \times 10^{-3} \pm 1.62 \times 10^{-4}$ \\
        \bottomrule
    \end{tabular}
\end{table}

\begin{table}[t]
    \centering
    \caption{\textbf{Q-only open-loop forecasting (Heat exchange, KNOWN regime).} Mean $\pm$ std of next-step MSE over 5 seeds. Two coupled thermal masses ($c_1{=}c_2{=}1$, coupling $\kappa{=}0.5$, loss $\kappa_{\mathrm{loss}}{=}0.1$). Lower is better.}
    \label{tab:qonly_rollout_heat}
    \small
    \begin{tabular}{@{}lrr@{}}
        \toprule
        \textbf{Model} & \textbf{Params} & \textbf{Heat Exch.\ (damped)} \\
        \midrule
        \multicolumn{3}{l}{\textit{PHAST (ours)}} \\
        PHAST (KNOWN) & 3{,}592 & \best{2.42 \times 10^{-6} \pm 9.74 \times 10^{-10}} \\
        PHAST (PARTIAL) & 12{,}170 & $2.42 \times 10^{-6} \pm 2.63 \times 10^{-9}$ \\
        PHAST (UNKNOWN) & 12{,}173 & $2.48 \times 10^{-6} \pm 1.85 \times 10^{-8}$ \\
        \midrule
        \multicolumn{3}{l}{\textit{Baselines}} \\
        S5 & 17{,}218 & $1.00 \times 10^{-3} \pm 1.62 \times 10^{-4}$ \\
        LinOSS & 17{,}218 & $4.46 \times 10^{-4} \pm 3.11 \times 10^{-5}$ \\
        GRU & 38{,}146 & $3.63 \times 10^{-3} \pm 3.31 \times 10^{-4}$ \\
        LSTM & 50{,}818 & $8.03 \times 10^{-3} \pm 1.39 \times 10^{-4}$ \\
        VPT & 16{,}962 & $1.71 \times 10^{-3} \pm 1.53 \times 10^{-4}$ \\
        Transformer & 100{,}290 & $7.74 \times 10^{-4} \pm 2.14 \times 10^{-4}$ \\
        \bottomrule
    \end{tabular}
\end{table}

\begin{table}[t]
    \centering
    \caption{\textbf{Q-only open-loop forecasting (N-body gravity, PARTIAL regime).} Mean $\pm$ std of next-step MSE over 5 seeds. Three bodies in 2D with softened gravity ($G{=}1$, $\epsilon_{\mathrm{soft}}{=}0.1$) and drag ($\gamma{=}0.05$). Gravitational potential template is given; masses and damping are learned. Lower is better.}
    \label{tab:qonly_rollout_nbody}
    \small
    \begin{tabular}{@{}lrr@{}}
        \toprule
        \textbf{Model} & \textbf{Params} & \textbf{N-body 3 (damped)} \\
        \midrule
        \multicolumn{3}{l}{\textit{PHAST (ours)}} \\
        PHAST (KNOWN) & 4{,}489 & \best{4.28 \times 10^{-8} \pm 1.28 \times 10^{-11}} \\
        PHAST (PARTIAL) & 13{,}323 & $4.27 \times 10^{-8} \pm 2.25 \times 10^{-11}$ \\
        PHAST (UNKNOWN) & 13{,}355 & $4.28 \times 10^{-8} \pm 6.17 \times 10^{-11}$ \\
        \midrule
        \multicolumn{3}{l}{\textit{Baselines}} \\
        S5 & 17{,}734 & $3.03 \times 10^{-3} \pm 5.98 \times 10^{-4}$ \\
        LinOSS & 17{,}734 & $2.00 \times 10^{-3} \pm 7.74 \times 10^{-5}$ \\
        GRU & 39{,}174 & $7.18 \times 10^{-3} \pm 1.79 \times 10^{-3}$ \\
        LSTM & 52{,}102 & $4.64 \times 10^{-2} \pm 8.17 \times 10^{-3}$ \\
        VPT & 17{,}478 & $1.47 \times 10^{-2} \pm 1.30 \times 10^{-2}$ \\
        Transformer & 100{,}806 & $1.83 \times 10^{-3} \pm 6.88 \times 10^{-4}$ \\
        \bottomrule
    \end{tabular}
\end{table}

\begin{table}[t]
    \centering
    \caption{\textbf{Q-only open-loop forecasting (Predator--prey, UNKNOWN regime).} Mean $\pm$ std of next-step MSE over 5 seeds. Dissipative Lotka--Volterra ($\alpha{=}1$, $\beta{=}0.1$, $\gamma{=}0.4$, $\delta{=}0.1$, $K{=}100$, $\mu{=}0.01$). All PHAST components are neural (non-canonical Hamiltonian structure). Lower is better.}
    \label{tab:qonly_rollout_predprey}
    \small
    \begin{tabular}{@{}lrr@{}}
        \toprule
        \textbf{Model} & \textbf{Params} & \textbf{Predator--Prey (damped)} \\
        \midrule
        \multicolumn{3}{l}{\textit{PHAST (ours)}} \\
        PHAST (KNOWN) & 3{,}364 & $0.0203 \pm 0.0002$ \\
        PHAST (PARTIAL) & 11{,}878 & $0.0203 \pm 0.0002$ \\
        PHAST (UNKNOWN) & 11{,}879 & \best{0.0199 \pm 0.0004} \\
        \midrule
        \multicolumn{3}{l}{\textit{Baselines}} \\
        S5 & 17{,}089 & $6.12 \pm 0.58$ \\
        LinOSS & 17{,}089 & $3.53 \pm 0.70$ \\
        GRU & 37{,}889 & $4.72 \pm 0.23$ \\
        LSTM & 50{,}497 & $5.85 \pm 0.14$ \\
        VPT & 16{,}833 & $0.226 \pm 0.012$ \\
        Transformer & 100{,}161 & $0.179 \pm 0.014$ \\
        \bottomrule
    \end{tabular}
\end{table}

% ===========================================================================
% F. Additional Ablations
% ===========================================================================
\section{Additional Ablations}
\label{app:ablations}

\subsection{Full-State Baseline (Isolating PHAST Core Transition)}
\label{app:ablations:fullstate}

\paragraph{Motivation.}
Our main benchmarks are q-only and therefore include an observer+canonicalizer pipeline (Appendix~\ref{app:arch:qonly}).
To isolate the contribution of the port-Hamiltonian transition itself (PHAST core transition), we include a full-state ablation where the state $(q,p)$ is observed and no observer/canonicalizer is used.

\paragraph{Setup.}
We evaluate on the full-state Windy Pendulum benchmark and compare PHAST variants against a dissipative Hamiltonian baseline (DHNN).
All methods use the same training protocol as Sec.~\ref{sec:experiments:setup} (AdamW with a cosine schedule; 50 epochs), and we report mean $\pm$ std over 5 model seeds with a fixed dataset seed ($42$).

\begin{table}[t]
    \centering
    \setlength{\tabcolsep}{3pt}
    \caption{\textbf{Full-state Windy Pendulum (q,p observed): PHAST vs DHNN.} Mean $\pm$ std over 5 seeds (dataset seed fixed to 42). Lower is better. DHNN does not expose an explicit damping field $D(q)$, so $R^2_D$/$\mathrm{MAE}_D$ are not reported.}
    \label{tab:fullstate_windy_dhnn}
    \scriptsize
    \begin{tabular}{@{}lrrrrr@{}}
        \toprule
        \textbf{Model} & \textbf{Params} &
        \textbf{$\mathrm{WrapMSE}_\theta$} $\downarrow$ &
        \textbf{$\mathrm{WrapMSE}_\theta^{\mathrm{roll}}(100)$} $\downarrow$ &
        \textbf{$R^2_D$} $\uparrow$ &
        \textbf{$\mathrm{MAE}_D$} $\downarrow$ \\
        \midrule
        PHAST (PARTIAL) & 10{,}376 & $1.49{\times}10^{-5} \pm 2.86{\times}10^{-6}$ & $0.089 \pm 0.005$ & $0.798 \pm 0.034$ & $0.045 \pm 0.005$ \\
        PHAST (UNKNOWN) & 10{,}378 & $6.35{\times}10^{-5} \pm 1.47{\times}10^{-5}$ & $0.128 \pm 0.003$ & $0.583 \pm 0.082$ & $0.081 \pm 0.010$ \\
        DHNN & 107{,}124 & $0.186 \pm 0.061$ & $2.264 \pm 0.160$ & --- & --- \\
        \bottomrule
    \end{tabular}
\end{table}

\subsection{Configuration-Dependent Mass (Nonseparable Hamiltonian)}
\label{app:ablations:nonseparable_mass}

\paragraph{Motivation.}
Several of our benchmarks (e.g., Cart-Pole and Double Pendulum; Appendix~\ref{app:environments}) have a configuration-dependent inertia $M(q)$, yielding a nonseparable Hamiltonian.
Our main results intentionally use a separable constant-mass approximation (Sec.~\ref{sec:intro:regimes}) to enable a lightweight leapfrog core.
Here we provide a targeted ablation showing that using the true $M(q)$ together with a nonseparable Hamiltonian integrator can materially improve long-horizon q-only rollouts.

\paragraph{Setup.}
We use Windy Cart-Pole (q-only) in the KNOWN regime and compare: (i) a constant-mass approximation vs.\ (ii) the true configuration-dependent $M(q)$ (Eq.~\eqref{eq:cartpole_mass}).
Both variants use Strang splitting with a Hamiltonian implicit midpoint conservative core and a fixed timestep, and follow the training protocol of Sec.~\ref{sec:experiments:setup}.
We report mean $\pm$ std over 3 model seeds (dataset seed fixed to 42).

\begin{table}[t]
    \centering
    \setlength{\tabcolsep}{2pt}
    \caption{\textbf{Nonseparable mass ablation (Windy Cart-Pole, q-only).} Mean $\pm$ std over 3 seeds (dataset seed fixed to 42). Both rows use the same implicit-midpoint conservative core; only the mass model differs. Lower is better.}
    \label{tab:cartpole_nonseparable_mass}
    \scriptsize
    \begin{tabular}{@{}lrrrrrr@{}}
        \toprule
        \textbf{Model} & \textbf{Params} &
        \textbf{$\mathrm{MixedMSE}^{\mathrm{roll}}(H{=}100)$} $\downarrow$ &
        \textbf{$\mathrm{EbudRes}^{\mathrm{roll}}(H{=}100)$} $\downarrow$ &
        \textbf{$\mathrm{PassViol}^{\mathrm{roll}}(H{=}100)$} $\downarrow$ &
        \textbf{$R^2_D$} $\uparrow$ &
        \textbf{$\mathrm{MAE}_D$} $\downarrow$ \\
        \midrule
        PHAST (KNOWN, constant $M$) & 3{,}589 & $0.096 \pm 0.016$ & $3.31 \pm 0.39$ & $0.124 \pm 0.007$ & $0.956 \pm 0.004$ & $0.028 \pm 0.001$ \\
        PHAST (KNOWN, $M(q)$) & 3{,}589 & \best{0.031 \pm 0.006} & \best{1.30 \pm 0.06} & \best{0.016 \pm 0.007} & \best{0.986 \pm 0.002} & \best{0.013 \pm 0.001} \\
        \bottomrule
    \end{tabular}
\end{table}

\begin{table}[t]
    \centering
    \setlength{\tabcolsep}{2pt}
    \caption{\textbf{Nonseparable mass ablation (Damped Double Pendulum, q-only).} Mean $\pm$ std over 3 seeds (dataset seed fixed to 42). Both rows use the same implicit-midpoint conservative core; only the mass model differs. Lower is better.}
    \label{tab:double_pendulum_nonseparable_mass}
    \scriptsize
    \begin{tabular}{@{}lrrrrr@{}}
        \toprule
        \textbf{Model} & \textbf{Params} &
        \textbf{$\mathrm{WrapMSE}_\theta^{\mathrm{roll}}(H{=}100)$} $\downarrow$ &
        \textbf{$\mathrm{EbudRes}^{\mathrm{roll}}(H{=}100)$} $\downarrow$ &
        \textbf{$\mathrm{PassViol}^{\mathrm{roll}}(H{=}100)$} $\downarrow$ \\
        \midrule
        PHAST (KNOWN, constant $M$) & 3{,}589 & $0.367 \pm 0.083$ & $32.71 \pm 4.40$ & $0.408 \pm 0.030$ \\
        PHAST (KNOWN, $M(q)$) & 3{,}589 & \best{0.284 \pm 0.029} & \best{17.16 \pm 1.00} & \best{0.343 \pm 0.026} \\
        \bottomrule
    \end{tabular}
\end{table}

\subsection{PHAST Core Transition Substeps and Learnable Timestep}
\label{app:ablations:substeps_timestep}

\paragraph{What we vary.}
The PHAST core transition can compose $L$ substeps per environment step (Eq.~\eqref{eq:phast_substeps}) and can optionally learn per-substep timesteps $\{\delta t_s\}_{s=1}^L$.
In the main experiments (Sec.~\ref{sec:experiments:setup}) we use $L{=}1$ and initialize the internal timestep to the dataset sampling interval $\dt$; unless otherwise noted, this timestep is learnable.

\paragraph{Why it may help.}
Increasing $L$ primarily reduces discretization error in stiff or coarsely sampled regimes; it does not introduce additional $(V,\Mmat,\Dmat)$ parameters (Table~\ref{tab:qonly_sharing}).
Learning $\delta t_s$ can improve forecasting by time-rescaling, but it can also confound identifiability since time scaling is partially interchangeable with force/damping magnitudes.

\paragraph{Results (coarse-$\dt$; fixed timestep).}
Table~\ref{tab:core_substeps_coarse_dt} summarizes a small multi-seed sweep for PHAST (PARTIAL, q-only) in two coarse-sampling settings.
Increasing substeps from $L{=}1$ to $L{=}4$ yields modest but consistent improvements on $H{=}100$ rollouts, at roughly $4\times$ per-step compute.

\begin{table}[t]
  \centering
  \caption{\textbf{Effect of PHAST core transition substeps at coarse sampling.} Mean $\pm$ std over 5 seeds for PHAST (PARTIAL, q-only) with fixed timesteps $\delta t_s=\dt/L$. Metrics are rollout error at horizon $H{=}100$.}
  \label{tab:core_substeps_coarse_dt}
  \small
  \begin{tabular}{@{}lccrr@{}}
    \toprule
    \textbf{Environment} & \textbf{Metric} & $\dt$ & $L{=}1$ & $L{=}4$ \\
    \midrule
    Windy Cart-Pole & $\mathrm{MixedMSE}^{\mathrm{roll}}(H{=}100)$ & 0.05 & $0.134 \pm 0.020$ & \best{0.126 \pm 0.016} \\
    Double Pendulum & $\mathrm{WrapMSE}_\theta^{\mathrm{roll}}(H{=}100)$ & 0.04 & $2.412 \pm 0.071$ & \best{2.366 \pm 0.088} \\
    \bottomrule
  \end{tabular}
\end{table}

\paragraph{Recommended reporting.}
When sweeping $L$ and learnable $\delta t_s$, we recommend reporting both rollout error and wall-clock per step, and including at least one coarse-$\dt$ setting to stress discretization.

\subsection{Runtime Microbenchmark (PHAST Primitives)}
\label{app:ablations:timing}

To support the efficiency claims in Sec.~\ref{sec:intro}, we microbenchmark the two dominant structured primitives on CPU:
(i) applying Householder damping to a vector ($v \mapsto \Dmat(q)v$, $O(nr)$) and
(ii) computing a velocity via Woodbury ($p \mapsto \Mmat^{-1}p$, $O(nr^2{+}r^3)$).
Figure~\ref{fig:phast_primitives_timing} shows near-linear scaling in $n$ for fixed rank $r{=}2$.

\begin{figure}[t]
    \centering
    \includegraphics[width=0.85\linewidth]{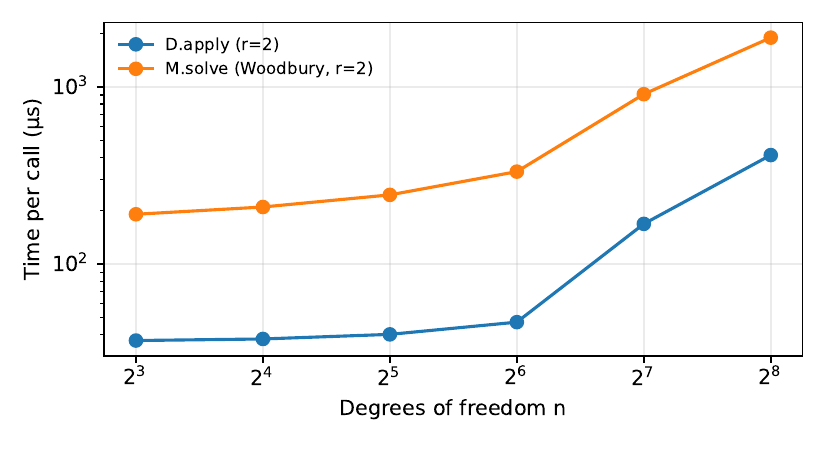}
    \caption{\textbf{CPU microbenchmark of PHAST structured primitives.} Average wall-clock time per call for Householder damping application and Woodbury-based mass solve, sweeping degrees of freedom $n$ with fixed rank $r{=}2$.}
    \label{fig:phast_primitives_timing}
\end{figure}

\subsection{Base Damping Policy (\texorpdfstring{$d_0$}{d0})}

\paragraph{Choice in our experiments.}
In grey-box settings where the base damping scale is known (e.g., Windy Pendulum and Windy Cart-Pole), we fix the isotropic base term $d_0$ in the PARTIAL regime to avoid an identifiability loophole where $d_0$ becomes an error sink when $\sum_{i=1}^{r} \beta_i(q)$ is bounded (Sec.~\ref{sec:methods:householder}).
We also considered learning or bounding $d_0$, but we keep it fixed in the reported PARTIAL windy experiments for interpretability.

\paragraph{Ablation: learnable timestep and $d_0$ policy (Windy Pendulum, q-only).}
Table~\ref{tab:windy_dt_d0_ablation} reports a small sweep that varies whether the internal timestep is fixed ($\dt_{\mathrm{model}}=\dt$) or learnable, and whether the base damping term $d_0$ is fixed or learned, holding the damping-strength cap at $\sum_i\beta_i(q)\le \bbar$ with $\bbar=0.5$.
Learning $\dt_{\mathrm{model}}$ improves long-horizon rollouts and reduces discrete-time passivity violations, while learning $d_0$ has only a minor effect under this bound; we therefore keep $d_0$ fixed for interpretability in the reported PARTIAL windy experiments.

\begin{table}[t]
    \centering
    \caption{\textbf{Learnable timestep and base damping policy (Windy Pendulum, q-only; PARTIAL).} Mean $\pm$ std over 3 seeds on CPU (20 epochs; dataset seed fixed to 42). All runs use Householder damping with a fixed cap $\sum_i\beta_i(q)\le \bbar$ ($\bbar=0.5$) and report horizon-$H{=}100$ diagnostics under the standard q-only rollout protocol (Sec.~\ref{sec:experiments:setup}).}
    \label{tab:windy_dt_d0_ablation}
    \scriptsize
    \setlength{\tabcolsep}{3pt}
    \begin{tabular}{@{}lrrrrr@{}}
        \toprule
        \textbf{Setting} &
        \textbf{$\mathrm{WrapMSE}_\theta^{\mathrm{roll}}(H{=}100)$} $\downarrow$ &
        \textbf{$R^2_D$} $\uparrow$ &
        \textbf{$\mathrm{MAE}_D$} $\downarrow$ &
        \textbf{$\mathrm{EbudRes}(H{=}100)$} $\downarrow$ &
        \textbf{$\mathrm{PassViol}^{\mathrm{roll}}(H{=}100)$} $\downarrow$ \\
        \midrule
        fixed $\dt_{\mathrm{model}}$, fixed $d_0$ &
        $0.175 \pm 0.009$ &
        $0.465 \pm 0.038$ &
        $0.084 \pm 0.003$ &
        $1.34 \pm 0.04$ &
        $0.020 \pm 0.004$ \\
        learnable $\dt_{\mathrm{model}}$, fixed $d_0$ &
        \best{0.153 \pm 0.013} &
        $0.472 \pm 0.040$ &
        $0.085 \pm 0.003$ &
        \best{1.28 \pm 0.04} &
        \best{0.015 \pm 0.004} \\
        fixed $\dt_{\mathrm{model}}$, learned $d_0$ &
        $0.175 \pm 0.009$ &
        $0.476 \pm 0.047$ &
        \best{0.082 \pm 0.004} &
        $1.34 \pm 0.04$ &
        $0.020 \pm 0.004$ \\
        learnable $\dt_{\mathrm{model}}$, learned $d_0$ &
        $0.153 \pm 0.013$ &
        \best{0.478 \pm 0.047} &
        $0.084 \pm 0.004$ &
        $1.28 \pm 0.04$ &
        $0.015 \pm 0.004$ \\
        \bottomrule
    \end{tabular}
\end{table}

\subsection{Baseline Capacity Scaling (Q-Only)}

\paragraph{What we vary.}
To address concerns that large margins might be driven by baseline under-capacity, we run a small capacity scaling check on Windy Pendulum by doubling the baseline hidden dimension (from 64 to 128) while keeping depth fixed (2 layers) and using the same training protocol as Sec.~\ref{sec:experiments:setup}.

\begin{table}[t]
    \centering
    \caption{\textbf{Baseline capacity scaling (Windy Pendulum, q-only).} Mean $\pm$ std of wrapped-angle rollout MSE at horizon $H{=}100$ over 3 seeds (dataset seed fixed to 42) on CPU. Increasing baseline hidden dimension does not close the gap to PHAST (Table~\ref{tab:qonly_rollout_pendulum_h100}).}
    \label{tab:baseline_scaling_windy}
    \scriptsize
    \begin{tabular}{@{}lrr@{}}
        \toprule
        \textbf{Model (scaled)} & \textbf{Params} & \textbf{$\mathrm{WrapMSE}_\theta^{\mathrm{roll}}(H{=}100)$} $\downarrow$ \\
        \midrule
        GRU ($d{=}128$) & 149{,}505 & $2.130 \pm 0.566$ \\
        S5 ($d{=}128$) & 34{,}049 & $0.654 \pm 0.015$ \\
        Transformer ($d{=}128$) & 396{,}929 & $0.868 \pm 0.237$ \\
        \bottomrule
    \end{tabular}
\end{table}

\subsection{Bounding Damping Strength}

\paragraph{What we vary.}
To study the impact of spectral control on identifiability (Sec.~\ref{sec:methods:householder}), we vary a single scalar bound on the total damping strength, $\sum_{i=1}^{r}\beta_i(q)\le \bbar$ (Eq.~\ref{eq:damping_bound}). We also consider per-term bounds $\beta_i(q)\le \beta_{\max}$.

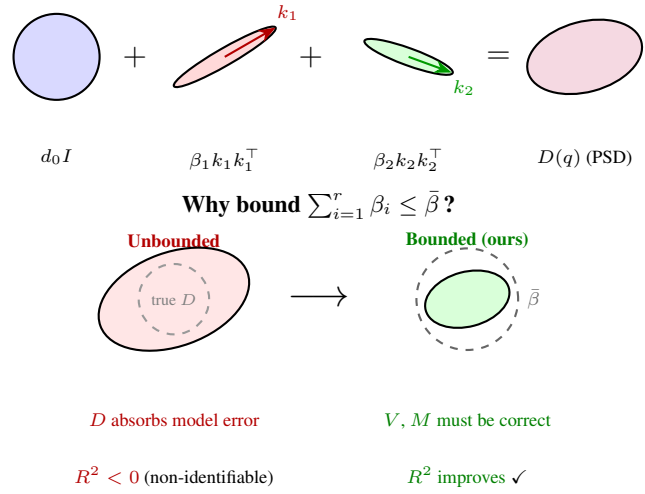
\begin{figure}[t]
  \centering
  \begin{tikzpicture}[scale=0.52]
    % === TOP ROW: Construction ===
    \begin{scope}[shift={(-3.5,0)}]
      \draw[thick, fill=blue!15] (0,0) circle (1.1);
      \node[below=1.1cm, font=\scriptsize] at (0,0) {$d_0 I$};
    \end{scope}
    \node[font=\large] at (-1.5,0) {$+$};
    \begin{scope}[shift={(0.8,0)}]
      \draw[thick, fill=red!15, rotate=30] (0,0) ellipse (1.5 and 0.22);
      \draw[-{Stealth}, thick, red!70!black] (0,0) -- (1.3,0.75);
      \node[red!70!black, above right, font=\scriptsize] at (1.1,0.7) {$k_1$};
      \node[below=1.1cm, font=\scriptsize] at (0,0) {$\beta_1 k_1k_1^\T$};
    \end{scope}
    \node[font=\large] at (3.0,0) {$+$};
    \begin{scope}[shift={(5.5,0)}]
      \draw[thick, fill=green!15, rotate=-20] (0,0) ellipse (1.2 and 0.22);
      \draw[-{Stealth}, thick, green!60!black] (0,0) -- (1.1,-0.4);
      \node[green!60!black, below right, font=\scriptsize] at (0.9,-0.35) {$k_2$};
      \node[below=1.1cm, font=\scriptsize] at (0,0) {$\beta_2 k_2k_2^\T$};
    \end{scope}
    \node[font=\large] at (7.8,0) {$=$};
    \begin{scope}[shift={(10.0,0)}]
      \draw[thick, fill=purple!15, rotate=15] (0,0) ellipse (1.5 and 0.9);
      \node[below=1.1cm, font=\scriptsize] at (0,0) {$\Dmat(q)$ (PSD)};
    \end{scope}

    % === BOTTOM ROW: Bounded vs Unbounded (increased vertical space) ===
    \node[font=\small\bfseries] at (3.2,-3.8) {Why bound $\textstyle\sum_{i=1}^{r} \beta_i \le \bar\beta$\,?};

    % Unbounded (left)
    \begin{scope}[shift={(-0.5,-6.2)}]
      \node[font=\scriptsize\bfseries, text=red!70!black] at (0,1.5) {Unbounded};
      % Large ellipse - D absorbs error
      \draw[thick, fill=red!10, rotate=20] (0,0) ellipse (2.0 and 1.2);
      \draw[dashed, thick, black!40] (0,0) circle (0.9);
      \node[font=\tiny, text=black!50] at (0,0) {true $D$};
      \node[below=1.4cm, font=\scriptsize, align=center, text=red!70!black] at (0,0) {$D$ absorbs model error};
    \node[below=2.1cm, font=\scriptsize, align=center] at (0,0) {\textcolor{red!70!black}{$R^2 < 0$} (non-identifiable)};
    \end{scope}

    % Arrow
    \node[font=\Large] at (3.2,-6.2) {$\longrightarrow$};

    % Bounded (right)
    \begin{scope}[shift={(7.0,-6.2)}]
      \node[font=\scriptsize\bfseries, text=green!50!black] at (0,1.5) {Bounded (ours)};
      % Constrained ellipse
      \draw[thick, fill=green!15, rotate=15] (0,0) ellipse (1.1 and 0.7);
      % Bound circle
      \draw[dashed, thick, black!60] (0,0) circle (1.3);
      \node[font=\scriptsize, text=black!60] at (1.7,0) {$\bar\beta$};
      \node[below=1.4cm, font=\scriptsize, align=center, text=green!50!black] at (0,0) {$V$, $M$ must be correct};
      \node[below=2.1cm, font=\scriptsize, align=center] at (0,0) {\textcolor{green!50!black}{$R^2$ improves} \checkmark};
    \end{scope}
  \end{tikzpicture}
  \caption{\textbf{Low-rank PSD damping with spectral bound.}
  \emph{Top:} $D(q)=d_0 I+\sum_{i=1}^{r} \beta_i(q)\,k_i(q)k_i(q)^\T$ is PSD by construction.
  \emph{Bottom:} Bounding $\sum_{i=1}^{r}\beta_i(q)\le \bar\beta$ certifies $\lambda_{\max}(D(q))\le d_0+\bar\beta$ (Eq.~\eqref{eq:damping_bound}).}
  \label{fig:householder_geom}
\end{figure}

\ifdefined\isarxiv
% Included in the arXiv main body for a tighter narrative (see Sec.~\ref{sec:experiments:results}).
\else

\fi

\paragraph{Ablation: cap value sensitivity (Windy Pendulum, q-only; PARTIAL).}
Table~\ref{tab:windy_cap_sensitivity_quick} sweeps the total-strength cap $\sum_i\beta_i(q)\le \bbar$ at a reduced training budget with fixed $\dt_{\mathrm{model}}=\dt$ and fixed $d_0$.
We observe a clear identifiability sweet spot around $\bbar=0.5$: loosening the cap (unbounded or $\bbar=1.0$) makes damping recovery fail ($R^2_D \ll 0$), while tightening to $\bbar=0.25$ slightly increases rollout error but improves energy-residual and passivity diagnostics.

\begin{table}[t]
    \centering
    \caption{\textbf{Cap sensitivity sweep (Windy Pendulum, q-only; PARTIAL).} Mean $\pm$ std over 3 seeds on CPU (20 epochs; dataset seed fixed to 42). All runs use Householder damping with fixed $\dt=0.05$, fixed $\dt_{\mathrm{model}}=\dt$, fixed $d_0$, and vary only the total-strength cap $\sum_i\beta_i(q)\le \bbar$. We report horizon-$H{=}100$ diagnostics under the standard q-only rollout protocol (Sec.~\ref{sec:experiments:setup}).}
    \label{tab:windy_cap_sensitivity_quick}
    \scriptsize
    \setlength{\tabcolsep}{3pt}
    \begin{tabular}{@{}lrrrrr@{}}
        \toprule
        \textbf{Cap on $\sum_{i=1}^{r}\beta_i(q)$} &
        \textbf{$\mathrm{WrapMSE}_\theta^{\mathrm{roll}}(H{=}100)$} $\downarrow$ &
        \textbf{$R^2_D$} $\uparrow$ &
        \textbf{$\mathrm{MAE}_D$} $\downarrow$ &
        \textbf{$\mathrm{EbudRes}(H{=}100)$} $\downarrow$ &
        \textbf{$\mathrm{PassViol}^{\mathrm{roll}}(H{=}100)$} $\downarrow$ \\
        \midrule
        Unbounded &
        \best{0.162 \pm 0.015} &
        $-40.940 \pm 8.630$ &
        $0.897 \pm 0.089$ &
        $1.43 \pm 0.04$ &
        $0.022 \pm 0.006$ \\
        $\sum_{i=1}^{r}\beta_i(q)\le 0.25$ &
        $0.180 \pm 0.012$ &
        $-0.080 \pm 0.081$ &
        $0.105 \pm 0.004$ &
        \best{1.32 \pm 0.03} &
        \best{0.019 \pm 0.004} \\
        $\sum_{i=1}^{r}\beta_i(q)\le 0.5$ &
        $0.175 \pm 0.009$ &
        \best{0.465 \pm 0.038} &
        \best{0.084 \pm 0.003} &
        $1.34 \pm 0.04$ &
        $0.020 \pm 0.004$ \\
        $\sum_{i=1}^{r}\beta_i(q)\le 1.0$ &
        $0.175 \pm 0.014$ &
        $-1.506 \pm 0.582$ &
        $0.211 \pm 0.026$ &
        $1.38 \pm 0.05$ &
        $0.022 \pm 0.004$ \\
        \bottomrule
    \end{tabular}
\end{table}

\begin{table}[t]
    \centering
    \caption{\textbf{UNKNOWN regime: effect of bounding total damping strength (Windy Pendulum, q-only).} Mean $\pm$ std over 5 seeds on CPU (same data protocol as Sec.~\ref{sec:experiments:setup}). Bounding $\sum_{i=1}^{r}\beta_i(q)$ substantially improves damping recovery (identifiability) but can degrade long-horizon open-loop forecasting, illustrating a forecasting--identifiability trade-off in the UNKNOWN regime.}
    \label{tab:unknown_damping_cap_windy}
    \scriptsize
    \setlength{\tabcolsep}{3pt}
    \begin{tabular}{@{}lrrrr@{}}
        \toprule
        \textbf{Cap on $\sum_{i=1}^{r}\beta_i(q)$} &
        \textbf{$\mathrm{WrapMSE}_\theta^{\mathrm{roll}}(H{=}100)$} $\downarrow$ &
        \textbf{$R^2_D$} $\uparrow$ &
        \textbf{$\mathrm{MAE}_D$} $\downarrow$ &
        \textbf{$\mathrm{EbudRes}(H{=}100)$} $\downarrow$ \\
        \midrule
        Unbounded &
        \best{0.298 \pm 0.054} &
        $-96.546 \pm 17.774$ &
        $1.343 \pm 0.129$ &
        \best{2.581 \pm 0.214} \\
        $\sum_{i=1}^{r}\beta_i(q)\le 0.5$ &
        $0.416 \pm 0.071$ &
        \best{-0.234 \pm 0.268} &
        \best{0.134 \pm 0.019} &
        $2.713 \pm 0.207$ \\
        $\sum_{i=1}^{r}\beta_i(q)\le 1.0$ &
        $0.396 \pm 0.076$ &
        $-6.457 \pm 1.376$ &
        $0.363 \pm 0.034$ &
        $2.666 \pm 0.214$ \\
        \bottomrule
    \end{tabular}
\end{table}

\subsection{Gauge Freedom and Parameter Identification}
\label{app:ablations:gauge_freedom}

\paragraph{Why identifiability can fail.}
Even in noise-free settings, recovering physical parameters from trajectories can be ill-posed: the map from a mechanical model $(M(\cdot),V(\cdot))$ to observed trajectories need not be injective.
Two common sources of non-uniqueness are:
(i) \emph{Lagrangian gauge}---adding a total time derivative to the Lagrangian does not change the Euler--Lagrange equations,
\begin{equation}
  L'(q,\dot q,t)=L(q,\dot q,t)+\tfrac{d}{dt}F(q,t),
\end{equation}
and (ii) \emph{coordinate/canonical freedom}---changes of variables can alter the representation of $(M,V)$ (and the meaning of $p$) without changing the underlying configuration-space trajectories.
In data-driven learning, this appears as \emph{parameter trade-offs}: distinct $(M,V)$ (and, in dissipative settings, also $D$) can yield similarly accurate rollouts.

\paragraph{Connection to PHAST regimes.}
PHAST's knowledge regimes can be viewed as \emph{gauge-fixing choices} that trade flexibility for identifiability:
\textbf{KNOWN} anchors $(V,M)$ and learns only $D(q)$; \textbf{PARTIAL} anchors the \emph{form} of $V$ and uses calibrated damping bounds; \textbf{UNKNOWN} is intentionally flexible but can be non-identifiable (Sec.~\ref{sec:experiments:eval}).
The damping-strength bound in Eq.~\eqref{eq:damping_bound} plays an analogous role to gauge fixing: it prevents dissipation from absorbing model mismatch and improves physical recovery (Tables~\ref{tab:partial_damping_cap_windy}--\ref{tab:unknown_damping_cap_windy}).

\paragraph{Illustrative controlled study (double pendulum).}
To make the gauge-freedom issue concrete, we include a small controlled experiment on a conservative double pendulum.
We compare learning $M(q)$ and/or $V(q)$ under different anchors using an Euler--Lagrange residual loss:
\begin{equation}
  r(q,\dot q,\ddot q) := M(q)\ddot q + C_M(q,\dot q)\dot q + \nabla V(q),
  \qquad
  \mathcal{L}_{\mathrm{res}}=\|r(q,\dot q,\ddot q)\|_2^2,
\end{equation}
where $C_M(q,\dot q)\dot q$ is the Coriolis/centrifugal term induced by $M(q)$.
In components,
\begin{equation}
  \big[C_M(q,\dot q)\dot q\big]_i
  =
  \sum_{j,k=1}^{n}\Gamma_{ijk}(q)\dot q_j\dot q_k,
  \qquad
  \Gamma_{ijk}(q)=\tfrac{1}{2}\Big(\partial_{q_k}M_{ij}+\partial_{q_j}M_{ik}-\partial_{q_i}M_{jk}\Big).
\end{equation}
The residual loss directly constrains $M$ when $V$ is anchored.
Empirically, when both $M$ and $V$ are fully free (neural $V$), parameter recovery fails despite low trajectory MSE, consistent with gauge freedom.
Anchoring $V$ (exactly, or via a structured form with learnable parameters) makes $M$ identifiable and yields accurate recovery.
We report results for two mass parameterizations: (i) a full SPD (Cholesky) model used in earlier canonicalization studies, and (ii) the current PHAST UNKNOWN-style mass \emph{NeuralMass} (diagonal $+$ low-rank outer products with Woodbury solves).

\paragraph{Double pendulum physics.}
The true configuration-dependent mass and potential for the double pendulum (unit masses and lengths) are:
\begin{equation}
  M(q) = \begin{pmatrix} 2 & \cos(\theta_1{-}\theta_2) \\ \cos(\theta_1{-}\theta_2) & 1 \end{pmatrix}, \qquad
  V(q) = -2g\cos\theta_1 - g\cos\theta_2,
\end{equation}
where $g=9.81$.
The Euler--Lagrange residual loss enforces $M(q)\ddot q + C_M(q,\dot q)\dot q + \nabla V(q) = 0$ via the Coriolis term $C_M$.

\paragraph{Structured potential (PARTIAL analogue).}
``Structured $V$'' means the functional \emph{form} is fixed to the true physics, but coefficients are learned:
\begin{equation}
  V(q; g_1, g_2) = -g_1\cos\theta_1 - g_2\cos\theta_2,
\end{equation}
where $g_1, g_2$ are trainable parameters (true values: $g_1{=}2g{=}19.62$, $g_2{=}g{=}9.81$).
This is analogous to PHAST's PARTIAL regime where the potential template is known but parameters are learned.

\paragraph{NeuralMass parameterization.}
The NeuralMass module uses the same Householder-style low-rank form as PHAST damping (Sec.~\ref{sec:methods:householder}):
\begin{equation}
  \Mmat(q) = \mathrm{diag}(d(q)) + \sum_{i=1}^{r} \alpha_i(q)\,k_i(q)k_i(q)^\T,
  \quad d_j > 0,\ \alpha_i \ge 0,\ \|k_i\|=1,
\end{equation}
where positivity is enforced via softplus and directions are normalized.
Inversion uses the Woodbury identity in $O(nr^2{+}r^3)$.
The SPD (Cholesky) baseline parameterizes $M = LL^\T$ with lower-triangular $L$ having positive diagonal.

\begin{table}[t]
  \centering
  \caption{\textbf{Gauge-freedom ablation (conservative double pendulum).} Mean over 2 runs (50 epochs). Lower is better. ``Structured $V$'' means the functional form is fixed but coefficients are learned (analogous to PHAST PARTIAL). Errors are averaged over held-out configurations: $\mathrm{FrobErr}_M:=\mathbb{E}_q\|M(q)-\hat M(q)\|_F$ and $\mathrm{MAE}_{\nabla V}:=\frac{1}{n}\mathbb{E}_q\|\nabla V(q)-\nabla\hat V(q)\|_1$.}
  \label{tab:gauge_freedom_double_pendulum}
  \scriptsize
  \setlength{\tabcolsep}{3pt}
  \begin{tabular}{@{}p{0.46\linewidth}p{0.14\linewidth}ccc@{}}
    \toprule
    \textbf{Setup} & \textbf{Mass} & \textbf{$\mathrm{FrobErr}_M$} $\downarrow$ & \textbf{$\mathrm{MAE}_{\nabla V}$} $\downarrow$ & \textbf{Conclusion} \\
    \midrule
    Learn $M$ (known $V$) + $\mathcal{L}_{\mathrm{res}}$ & SPD & $0.011$ & --- & Identifiable \\
    Learn $M$ (known $V$) + $\mathcal{L}_{\mathrm{res}}$ & NeuralMass & $0.306$ & --- & Identifiable (high var.) \\
    Learn $V$ (known $M$) + $\mathcal{L}_{\mathrm{res}}$ & Known $M$ & --- & $0.094$ & Identifiable \\
    Learn both ($M$ + neural $V$) + $\mathcal{L}_{\mathrm{res}}$ & SPD & $2.37$ & $6.93$ & Non-identifiable \\
    Learn both ($M$ + neural $V$) + $\mathcal{L}_{\mathrm{res}}$ & NeuralMass & $2.23$ & $6.73$ & Non-identifiable \\
    Learn $M$ + structured $V$ + $\mathcal{L}_{\mathrm{res}}$ & SPD & $0.035$ & $0.082$ & Identifiable \\
    Learn $M$ + structured $V$ + $\mathcal{L}_{\mathrm{res}}$ & NeuralMass & $0.089$ & $0.066$ & Identifiable \\
    Learn both ($M$ + sparse-basis $V$) + $\mathcal{L}_{\mathrm{res}}$ & NeuralMass & $1.79$ & $12.05$ & Non-identifiable \\
    \bottomrule
  \end{tabular}
\end{table}

% ===========================================================================
% G. Notation Reference
% ===========================================================================
\section{Notation Reference}
\label{app:notation}

\paragraph{Configuration space.}
We write $q\in\mathcal{Q}$ for the configuration, where $\mathcal{Q}$ is typically a product manifold
$\mathcal{Q}=\mathbb{R}^{n_e}\times(\mathbb{S}^1)^{n_a}$ with $n_e+n_a=n$.
In implementation, $q$ is stored in $\mathbb{R}^n$ and angular errors use $\mathrm{wrap}(\cdot)$ to respect periodicity (Appendix~\ref{app:metrics}).
We use $x$ to denote the phase-space state $x=(q,p)$, but in the Cart-Pole environment $q=(x,\theta)$ also uses $x$ for the cart position coordinate; context disambiguates.

\begin{table}[t]
    \centering
    \caption{Index ranges.}
    \label{tab:index_ranges}
    \small
    \begin{tabular}{@{}lll@{}}
        \toprule
        \textbf{Index} & \textbf{Range} & \textbf{Meaning} \\
        \midrule
        $b$ & $1,\ldots,B$ & Trajectory index (batching) \\
        $t$ & $0,\ldots,T-1$ & Time index (transitions use $t{=}0,\ldots,T{-}2$) \\
        $h$ & $0,\ldots,H-1$ & Rollout step index (horizon $H$) \\
        $s$ & $1,\ldots,L$ & PHAST substep index \\
        $i$ & $1,\ldots,r$ & Rank-1 term index in low-rank updates \\
        $j$ & $1,\ldots,n$ & Coordinate index \\
        \bottomrule
    \end{tabular}
\end{table}

\begin{table}[h]
    \centering
    \caption{\textbf{Symbol reference.}
    Symbols are grouped by the section in which they are introduced.
    Gray rows mark group headers indicating the originating section.}
    \label{tab:notation}
    \scriptsize
    \setlength{\tabcolsep}{3pt}
    \begin{tabular}{@{}p{0.16\linewidth}p{0.20\linewidth}p{0.48\linewidth}@{}}
        \toprule
        \textbf{Symbol} & \textbf{Space / shape} & \textbf{Meaning} \\

        % ── Problem setting & knowledge regimes (Sec. 3.1) ──────────────
        \midrule
        \multicolumn{3}{@{}l}{\cellcolor{black!8}\textbf{Problem setting \& knowledge regimes}
          \hfill\textit{Sec.~\ref{sec:intro:regimes}}} \\
        \addlinespace[2pt]
        $n$ & $\mathbb{N}$ & Degrees of freedom \\
        $\mathcal{Q}$ & manifold & Configuration space (e.g., $\mathbb{R}^{n_e}\times(\mathbb{S}^1)^{n_a}$) \\
        $q$ & $\mathcal{Q}$ & Generalized configuration (forecast target) \\
        $p$ & $\mathbb{R}^n$ & Conjugate momentum (latent in q-only setting) \\
        $x=(q,p)$ & $\mathcal{Q}\times\mathbb{R}^n$ & Phase-space state \\
        $y_t$ & $\mathcal{Q}$ & Observation (q-only: $y_t=q_t$) \\

        % ── Port-Hamiltonian structure (Sec. 3.2) ──────────────────────
        \midrule
        \multicolumn{3}{@{}l}{\cellcolor{black!8}\textbf{Port-Hamiltonian dynamics}
          \hfill\textit{Sec.~\ref{sec:methods:ph}}} \\
        \addlinespace[2pt]
        $\Ham(q,p)$ & $\mathcal{Q}\times\allowbreak\mathbb{R}^n\to\allowbreak\mathbb{R}$ & Hamiltonian (total energy) \\
        $V(q)$ & $\mathcal{Q}\to\mathbb{R}$ & Potential energy \\
        $\Mmat(q)$ & $\mathcal{Q}\to\mathbb{R}^{n\times n}$ & Mass/inertia matrix (SPD; general, configuration-dependent) \\
        $\Mmat$ & $\mathbb{R}^{n\times n}$ & Constant-mass approximation $\Mmat(q)\approx\Mmat$ (used in main experiments) \\
        $v=\Mmat(q)^{-1}p$ & $\mathbb{R}^n$ & Generalized velocity \\
        $\Dmat(q)$ & $\mathcal{Q}\to\mathbb{R}^{n\times n}$ & Damping matrix (PSD) \\
        $\Jmat$ & $\mathbb{R}^{2n\times 2n}$ & Interconnection matrix (skew-symmetric) \\
        $\Rmat$ & $\mathbb{R}^{2n\times 2n}$ & Dissipation matrix (PSD; block-diagonal with $\Dmat$) \\
        $I$ & $\mathbb{R}^{n\times n}$ & Identity matrix (size inferred from context) \\
        $m$ & $\mathbb{N}$ & Input/port dimension (when control is present) \\
        $u$ & $\mathbb{R}^m$ & Port input (control) \\
        $G$ & $\mathbb{R}^{2n\times m}$ & Input matrix (port mapping) \\
        $y^{\mathrm{port}}$ & $\mathbb{R}^m$ & Port output $y^{\mathrm{port}}=G^\T\nabla\Ham(x)$ \\

        % ── Householder-style parameterizations (Sec. 3.x) ─────────────
        \midrule
        \multicolumn{3}{@{}l}{\cellcolor{black!8}\textbf{Low-rank parameterizations}
          \hfill\textit{Sec.~\ref{sec:methods:householder}}} \\
        \addlinespace[2pt]
        $r$ & $\mathbb{N}$ & Rank of low-rank expansion \\
        $d_0$ & $\mathbb{R}_{\ge 0}$ & Isotropic baseline damping \\
        $\beta_i(q)$ & $\mathcal{Q}\to\mathbb{R}_{\ge 0}$ & Strength of rank-$1$ damping term \\
        $k_i(q)$ & $\mathcal{Q}\to\mathbb{S}^{n-1}$ & Direction of rank-$1$ damping term ($\|k_i(q)\|_2=1$) \\
        $\bbar$ & $\mathbb{R}_{\ge 0}$ & Damping strength bound ($\sum_i\beta_i\le\bbar$) \\
        $\Lambda$ & $\mathbb{R}^{n\times n}$ & Diagonal base matrix $\Lambda=\mathrm{diag}(d)$ (Woodbury) \\
        $U$ & $\mathbb{R}^{n\times r}$ & Low-rank factor matrix (Woodbury: $\Mmat=\Lambda+UU^\T$) \\
        $I_r$ & $\mathbb{R}^{r\times r}$ & Identity matrix in rank-$r$ Woodbury updates \\

        % ── Integration (Sec. 3.x) ────────────────────────────────────
        \midrule
        \multicolumn{3}{@{}l}{\cellcolor{black!8}\textbf{Structure-preserving integration}
          \hfill\textit{Sec.~\ref{sec:methods:integration}}} \\
        \addlinespace[2pt]
        $\dt$ & $\mathbb{R}_{>0}$ & Time step \\
        $\Phi_{\dt}$ & $\mathcal{Q}\times\allowbreak\mathbb{R}^n\to\allowbreak\mathcal{Q}\times\allowbreak\mathbb{R}^n$ & Full integrator map (Strang splitting) \\
        $\Phi_H^{\dt}$ & $\mathcal{Q}\times\allowbreak\mathbb{R}^n\to\allowbreak\mathcal{Q}\times\allowbreak\mathbb{R}^n$ & Conservative step (symplectic core) \\
        $\Phi_D^{\dt/2}$ & $\mathcal{Q}\times\allowbreak\mathbb{R}^n\to\allowbreak\mathcal{Q}\times\allowbreak\mathbb{R}^n$ & Dissipation half-step \\
        $L$ & $\mathbb{N}$ & PHAST core transition substeps per step \\
        $\delta t_s$ & $\mathbb{R}_{>0}$ & Per-substep timestep (optional; $s=1,\ldots,L$) \\

        % ── Q-only observer pipeline ──────────────────────────────────
        \midrule
        \multicolumn{3}{@{}l}{\cellcolor{black!8}\textbf{Q-only observer pipeline}
          \hfill\textit{Eqs.~\eqref{eq:qonly_fd}--\eqref{eq:qonly_takeover}}} \\
        \addlinespace[2pt]
        $\dot q_t^{\mathrm{fd}}$ & $\mathbb{R}^n$ & Finite-difference velocity estimate from q-only data \\
        $o_\phi$ & causal map & Velocity observer (FD+TCN): $(q_{0:t},\dot q^{\mathrm{fd}}_{0:t})\mapsto \delta_t$ \\
        $\delta_t$ & $\mathbb{R}^n$ & Observer correction to finite-difference velocity \\
        $\hat{\dot q}_t$ & $\mathbb{R}^n$ & Observer-estimated velocity \\
        $C_\psi$ & $\mathcal{Q}\times\allowbreak\mathbb{R}^n\to\allowbreak\mathcal{Q}\times\allowbreak\mathbb{R}^n$ & Canonicalizer: $(q,\hat{\dot q})\mapsto(q,\hat p)$ \\
        $\hat p_t$ & $\mathbb{R}^n$ & Canonicalizer output (identity: $\hat p_t=\hat{\dot q}_t$) \\
        $\Pi_q(q,p)$ & $\mathcal{Q}\times\allowbreak\mathbb{R}^n\to\allowbreak\mathcal{Q}$ & Position projection $\Pi_q(q,p)=q$ \\

        % ── Experiments & evaluation ──────────────────────────────────
        \midrule
        \multicolumn{3}{@{}l}{\cellcolor{black!8}\textbf{Experiments \& evaluation}
          \hfill\textit{Sec.~\ref{sec:experiments:setup}--\ref{sec:experiments:eval}}} \\
        \addlinespace[2pt]
        $T$ & $\mathbb{N}$ & Trajectory length (sequence length) \\
        $K$ & $\mathbb{N}$ & Burn-in context length (q-only rollouts) \\
        $H$ & $\mathbb{N}$ & Rollout horizon \\
        $B$ & $\mathbb{N}$ & Number of trajectories used in metric averaging \\
        $D_{\mathrm{env}}(q)$ & $\mathcal{Q}\to\mathbb{R}_{\ge 0}$ & Simulator (ground-truth) damping coefficient \\
        $\mathrm{wrap}(\cdot)$ & $\mathbb{R}\to[-\pi,\pi]$ & Wrap angular differences mod $2\pi$ \\

        \bottomrule
    \end{tabular}
\end{table}

% ===========================================================================
% H. Casimir Control (Separate Task)
% ===========================================================================
\ifdefined\isarxiv
% Promoted to the main paper for the arXiv version (see arxiv_extended.tex).
\else

\fi

\end{document}